\documentclass{article} %
\usepackage{iclr2027_conference,times}
\usepackage{graphicx}

\usepackage{amsmath,amsfonts,bm}

\def\eqref#1{equation~\ref{#1}}

\def\1{\bm{1}}

\DeclareMathAlphabet{\mathsfit}{\encodingdefault}{\sfdefault}{m}{sl}
\SetMathAlphabet{\mathsfit}{bold}{\encodingdefault}{\sfdefault}{bx}{n}

\newcommand{\Att}{A}
\newcommand{\im}{\mathrm{i}}
\newcommand{\euler}{\mathrm{e}}

\usepackage{hyperref}
\usepackage{url}
\usepackage{wrapfig}
\usepackage{adjustbox}
\usepackage{subfig}
\usepackage{booktabs}
\usepackage{amsthm}

\definecolor{gacolor}{rgb}{0.28,0.51,0.43}
\newif\ifdraft
\drafttrue

\newtheorem{theorem}{Theorem}[section]
\newtheorem{proposition}[theorem]{Proposition}
\newtheorem{lemma}[theorem]{Lemma}

\newtheorem{remark}[theorem]{Remark}

\draftfalse
\title{Pattern Formation in Transformers}

\author{Erkan Turan$^{1,*,\dagger}$ \quad
        Gaspard Abel$^{2,3,*}$ \quad
        Maks Ovsjanikov$^{1}$ \\
\normalfont $^{1}$LIX, \'Ecole Polytechnique, IP Paris, France \\
\normalfont $^{2}$Centre Borelli, ENS Paris-Saclay, Universit\'e Paris-Saclay, France \\
\normalfont $^{3}$Centre d'Analyse et de Math\'ematique Sociales, EHESS, CNRS, 75006 Paris, France
}

\iclrfinalcopy %
\begin{document}

\maketitle
\lhead{Preprint}
{\renewcommand{\thefootnote}{\fnsymbol{footnote}}%
\footnotetext[1]{Equal contribution.}%
\footnotetext[2]{Correspondence to: Erkan Turan \texttt{<turan@lix.polytechnique.fr>}}}

\begin{abstract}
What are the inductive biases of a Transformer architecture? 
Existing theory on how the forward pass shapes representations either considers whether Transformers escape from \emph{rank collapse} or demonstrates that self-attention drives tokens toward \emph{cluster} patterns. The latter view arises from an elegant dynamical systems perspective, but relies on simplified architectural assumptions, and does not explain the rich structures observed in practice. This leaves a major open question: when a full Transformer escapes rank collapse, how does it  structure token representations? Using pattern-formation theory, we show that the dynamical view of Transformers can account for Positional Encoding, Multi-Head Attention, and Output-Value geometry. We demonstrate that a full Transformer architecture imposes an inductive prior by selectively amplifying a rich set of previously unreported patterns, including traveling or rotating waves among others. We characterize the role of each architectural component in controlling which pattern is amplified, which ones stabilize, compete, or coexist. Finally, we show that these structures can act as a controllable \emph{dynamical prior} that  facilitates learning. By choosing both task-aligned positional encoding and weight initialization, we demonstrate improved data efficiency and accelerated optimization on controlled sequence tasks and with ConViT on CIFAR-10.

\end{abstract}

\section{Introduction}

Transformers are universal approximators~\citep{yun2020universal,dehghani2019universal} and Turing complete~\citep{perez2019turing}. These general results establish that Transformers can represent virtually any function \textit{in principle}, but they say very little about the structural biases of the architecture itself. Specifically, what representations does a forward pass through a Transformer architecture naturally promote before learning takes place? 

Existing theory that studies this question typically falls into one of two lines of work. The first focuses on \emph{rank collapse} (or oversmoothing), which analyzes how repeated self-attention drives all token representations toward a common vector~\citep{dong2021attention,noci2022signal}. The second, based on dynamical systems, characterizes the geometry of representations and shows that self-attention drives tokens into discrete \emph{clusters}~\citep{geshkovski2023clusters,bruno2026multiscale}. Yet, these elegant global results are derived under strong simplifications, omitting Positional Encoding (PE), Multi-Head Attention (MHA), and feed-forward nonlinearities (FFN), and do not account for the complex spatial and sequential patterns that Transformers capture in practice. This gap raises a fundamental question: when a full Transformer escapes collapse, what structural priors can the architecture impose on its representations? More practically, how do these patterns affect learning, and can they be controlled at initialization?

In this paper, we show that signals that survive rank collapse in Transformers are neither arbitrary noise, nor clusters alone. Using pattern-formation theory~\citep{cross1993pattern}, we demonstrate that around the collapsed state, a randomly initialized Transformer can exhibit complex emerging modes.
These joint token--feature modes are characterized by the Transformer's architectural components and are preferentially selected and amplified with depth, revealing a set of spontaneously emerging structures: clusters, standing or traveling waves, and feature rotations. We call the propensity toward these representations a \emph{dynamical prior}: a soft inductive bias present before learning and dictated by the architecture rather than by data. Recent evidence on the role of architectural biases in Transformers at initialization~\citep{zheng2025structured,li2026born} motivates the application of our approach to guide the training of these models.

Our contributions are threefold: 
(1) We derive a matrix-valued dispersion relation (a frequency-resolved gain matrix) for a Transformer block to determine the structure of emerging modes and show how each architectural component contributes to it. We catalog the representations it can amplify, including the known clustered states and previously unreported standing, traveling, feature-rotating and mixed ones (Sec.~\ref{sec:linear}).
(2) We derive amplitude equations that characterize how FFN and content-attention nonlinearities control saturation, phase drift and competition (Sec.~\ref{sec:amplitude}).
(3) We validate our theory, show how initializations that amplify task-relevant structure learn faster and are more data-efficient than gain-matched controls on controlled tasks and on CIFAR-10 with ConViT (Sec.~\ref{sec:experiments})~\footnote{Code to reproduce all experiments is available at \url{https://github.com/ehturan/pattern-formation-transformers}.}.

\section{Related Work}

\textbf{Signal propagation, rank collapse and oversmoothing.}
Self-attention in Transformers can rapidly collapse toward token-averaged representations~\citep{dong2021attention,kedia2024stable,naitsaada2025mind}. This hinders training, as it has been shown to induce vanishing gradients ~\citep{noci2022signal}.
This rank collapse phenomenon also plagues graph neural networks~\citep{oono2020graph}, BERT and ViTs~\citep{shi2022revisiting,dovonon2025oversmoothing}, where self-attention imposes a low-pass filter keeping only the homogeneous signal~\citep{wang2022antioversmoothing,park2022how}.
All of these ask \emph{whether} the signal survives, whereas we consider what \textit{shape it takes}.

\textbf{Transformers as dynamical systems.}
Treating tokens as interacting particles~\citep{lu2020multiparticle} has enabled further characterization of how clusters~\citep{geshkovski2023clusters}, multiscale dynamics~\citep{bruno2026multiscale}, and spectral selection~\citep{tomihari2026recurrent, kuehn2026spectral} occur in Transformers. These works provide elegant global results, but typically have to consider simplified dynamics, such as single head, symmetric or identity values, no PE and no FFN. In contrast, our local analysis retains all key architectural components and determines their role in the emergence of structures in Transformers.

\textbf{Initialization and inductive biases.} 
On the design side, we rely on the observation that initialization-target alignment is necessary for efficient learning~\citep{abbe2022initial}, and that random networks and Transformers carry systematic architecture-dependent biases~\citep{teney2024neural,li2026born,subramaniam2026training}. We make one such bias explicit and controllable by showing theoretically how ConViT~\citep{dascoli2021convit}, mimetic ~\citep{trockman2023mimetic} initializations and traveling waves~\citep{keller2023wave,keller2024traveling} impose dynamical priors that facilitate the training of these architectures.

\section{Setup: Information Flow through a Transformer at Initialization}
\label{sec:setup}
\paragraph{Transformer Architecture.}
\label{sec:model}
Let $X=(x_1,\ldots,x_N)\in\mathbb R^{N\times d}$ denote $N$ token representations $x_i\in\mathbb R^d$, with $h\in\{1,\ldots,H\}$ indexing attention heads. For head $h$, let
$Q_h,K_h,V_h\in\mathbb R^{d_h\times d}$ be the query, key and value matrices and let $b^{(h)}_{ij}$ denote the positional encoding. The attention weights are %
\begin{equation}
\Att_{ij}^{(h)}(X)
=
\frac{
\exp\!\left(d_h^{-1/2}\langle Q_hx_i,K_hx_j\rangle+b^{(h)}_{ij}\right)
}{
\sum_{m=1}^N
\exp\!\left(d_h^{-1/2}\langle Q_hx_i,K_hx_m\rangle+b^{(h)}_{im}\right)
}.
\label{eq:standard_attention}
\end{equation}
Writing the output projection as $O=[O_1\cdots O_H]$, define
$M_h:=O_hV_h\in\mathbb R^{d\times d}$, with $\operatorname{rank}(M_h)\le d_h$. The Multi-Head Attention is then expressed as: $\operatorname{MHA^{(\ell)}}(X) = \sum_h \sum_j\Att_{ij}^{(h)}(X)M_h^{(\ell)}x_j$.
Therefore, a Transformer block defines the mapping:
\begin{equation}
[\mathcal F^{(\ell)}(X)]_i
=
\mathcal H^{(\ell)}\!\left(x_i+\operatorname{MHA}_i^{(\ell)}(X)\right),
\qquad
\mathcal H^{(\ell)}(y)=y+\phi^{(\ell)}(y),
\label{eq:full_transformer_map}
\end{equation}
where $\phi^{(\ell)}$ is a Feed Forward Network (FFN). Repeated application of this block gives
$X^{(\ell+1)}=\mathcal F^{(\ell)}(X^{(\ell)})$.
We omit LayerNorm to streamline the analysis; it has been reported as not strictly necessary to prevent rank collapse ~\citep{noci2022signal} and can be replaced by point-wise nonlinearities~\citep{zhu2025transformers}. Pre-LN preserves the block structure of the linear theory but changes the base state (Remark~\ref{rem:manifold}, App.~\ref{app:layernorm}). %

A result motivating our analysis is that, with small residual scalings, this architecture can be viewed as a Lie--Trotter splitting~\citep{lu2020multiparticle} of the form:
\begin{equation}
\frac{dX}{dt}
=
\underbrace{\operatorname{MHA}(X)}_{\mathcal P(X):\,\text{propagation across tokens}}
+
\underbrace{\operatorname{FFN}(X)}_{\mathcal L(X):\,\text{local pointwise dynamics}}.
\label{eq:lie_trotter}
\end{equation}

Equation \ref{eq:lie_trotter} mirrors the classic \emph{reaction--diffusion} (or \emph{pattern-forming}) systems studied in dynamical systems, mathematical biology~\citep{turing1990chemical,cross1993pattern,cross2009pattern} (see App.~\ref{app:swift} for a brief primer). In such systems, nonlocal coupling across positions ($\mathcal{P}$) competes with local pointwise nonlinearities ($\mathcal{L}$). A key property of such systems is that when a spatially uniform state becomes unstable, perturbations do not grow arbitrarily; instead, the coupling selectively amplifies specific spatial frequencies, causing structured patterns.

\textbf{The collapsed state.}
In a Transformer, the spatially uniform state corresponds to \emph{rank collapse}~\citep{dong2021attention,noci2022signal,shi2022revisiting}: the set of homogeneous states $\mathcal M=\{\mathbf 1\otimes a:\ a\in\mathbb R^d\}$ in which all $N$ tokens share the exact same feature vector $a \in \mathbb{R}^d$. With uniform attention, without the residual and FFN, $\mathcal P$ acts as a pure averaging (diffusion) operator that contracts all tokens onto $\mathcal M$. While prior signal-propagation theory asks when $\mathcal M$ is attractive or avoided, we use the pattern-formation view to study what structures emerge when $\mathcal M$ becomes unstable.

\section{The Transformer Matrix Dispersion Relation}
\label{sec:linear}

Our goal in this section is to determine which structures a Transformer architecture preferentially amplifies at initialization. The starting point is to determine what can emerge from the homogeneous manifold $\mathcal{M}$. To do so, consider a small heterogeneous perturbation $U\in \mathbb R^{N \times d}$ around collapsed state $X_\star \in \mathcal{M}$. The following simplified Theorem determines the dynamics of this perturbation when it passes a Transformer block.

\begin{theorem}[Simplified Linear stability analysis]
\label{thm:stability}
For a single-head Transformer and with the aforementioned notations, the homogeneous manifold $\mathcal M$ is invariant under a Transformer block $\mathcal F$: $\mathcal F(\mathbf 1\otimes a) =\mathbf 1\otimes g(a)$, with $g(a):=\mathcal H(a+Va)$.

Let $X_\star=\mathbf 1\otimes a_\star\in\mathcal M$ with $g(a_\star)=a_\star$, and let
$A_\star=A(X_\star)$ be the attention matrix at $X_\star$.
In first order in $U$, token-level perturbations evolve through the Transformer map as:
\begin{equation}
    u_i^+ = C_\star\Big[u_i+\sum\nolimits_j (A_\star)_{ij} V u_j\Big],
\end{equation}
where $V$ is the value/output matrix and $C_\star$ is the FFN Jacobian at $a_\star$.

Further, let $\{ \lambda_q | q \in \{0,\dots N-1\}\}$ be the eigenvalues of $A_\star$ with corresponding eigenvectors $e_q$ (the index $q$ becomes a frequency in Sec.~\ref{sec:pe}).
Then, after a Transformer update, a perturbation $U = e_q \otimes v$ along $e_q$ evolves as:
\begin{equation}
    U^+ = e_q \otimes J(q)v,
    \qquad
    J(q) = C_\star(I_d+\lambda_q V).
    \label{eq:J_q}
\end{equation}

\end{theorem}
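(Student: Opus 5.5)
The plan is a direct first-order expansion of the block map $\mathcal F$ around $X_\star$. The one structural fact that makes everything work is that each attention matrix is row-stochastic: $\sum_j \Att_{ij}(X)=1$ for every $X$. I would carry out three steps in order: invariance of $\mathcal M$, linearization, and diagonalization along eigenvectors of $A_\star$.

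\emph{Invariance.} For $X=\mathbf 1\otimes a$, every token equals $a$, so the content logit $d_h^{-1/2}\langle Qa,Ka\rangle$ is the same for all pairs $(i,j)$. It therefore cancels in the softmax. Hence $A(\mathbf 1\otimes a)$ is simply the softmax of the positional biases $b_{ij}$, independent of $a$, and in particular $A_\star=\operatorname{softmax}(b)$ row-wise. Row-stochasticity then gives $\sum_j A_{ij}Va=Va$, so $[\mathcal F(\mathbf 1\otimes a)]_i=\mathcal H(a+Va)=g(a)$ for every $i$, which is the first claim.

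\emph{Linearization.} Write $X=X_\star+U$ and expand the attention output
\[
\sum_j A_{ij}(X)Vx_j=\sum_j (A_\star)_{ij}Va_\star+\sum_j (A_\star)_{ij}Vu_j+\sum_j \bigl(dA_{ij}[U]\bigr)Va_\star+O(\|U\|^2).
\]
The step I expect to need the most care is the term involving the derivative of the attention weights, since softmax depends nonlinearly on $U$ through the query--key logits. Differentiating the identity $\sum_j A_{ij}(X)\equiv 1$ gives $\sum_j dA_{ij}[U]=0$. Because this term multiplies the constant vector $Va_\star$, it vanishes identically. Thus content-dependent attention has no first-order effect at a homogeneous state, and only the frozen matrix $A_\star$ survives.

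Next, the argument of $\mathcal H$ is $a_\star+Va_\star+\bigl(u_i+\sum_j(A_\star)_{ij}Vu_j\bigr)+O(\|U\|^2)$. Applying $\mathcal H$ and using $g(a_\star)=a_\star$ gives
\[
u_i^+=C_\star\Bigl[u_i+\sum_j(A_\star)_{ij}Vu_j\Bigr],
\]
with $C_\star=I_d+D\phi(a_\star+Va_\star)$. This is the Jacobian of $\mathcal H$ at the pre-FFN homogeneous point, which is what "FFN Jacobian at $a_\star$" refers to. I would state this evaluation point explicitly in the proof.

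\emph{Mode decomposition.} In matrix form the linear map reads $U^+=(U+A_\star UV^\top)C_\star^\top$, with tokens as rows. For $U=e_q\otimes v$, i.e.\ $u_j=(e_q)_j v$, we have $\sum_j(A_\star)_{ij}(e_q)_j=\lambda_q(e_q)_i$. Therefore $u_i^+=(e_q)_iC_\star(I_d+\lambda_qV)v$, which is $U^+=e_q\otimes J(q)v$. If $A_\star$ is not symmetric, $\lambda_q$ and $e_q$ may be complex. In that case I would complexify the linear map, which is harmless since it is real-linear, and recover real perturbations from real and imaginary parts. If $A_\star$ is not diagonalizable, the statement still holds for each genuine eigenvector.
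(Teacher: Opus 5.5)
Your proposal is correct and matches the paper's proof: row-stochasticity gives invariance of $\mathcal M$; differentiating $A(X)\mathbf 1=\mathbf 1$ kills the attention-weight variation because it multiplies the common value vector $Va_\star$; the chain rule gives $C_\star=D\mathcal H(a_\star+Va_\star)$; and substituting an eigenvector of $A_\star$ yields the block $J(q)=C_\star(I_d+\lambda_qV)$. The paper additionally writes the linearization as $\mathcal J_\star=(I_N\otimes C_\star)\left[I_N\otimes I_d+A_\star\otimes V\right]$ and uses a Schur form for non-diagonalizable $A_\star$, but that is only needed for the spectral stability criterion in the full version, not for the eigenvector statement here.
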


The full version of this Theorem is in App.~\ref{thm:stability_full}. It states that the eigenvectors (or modes) $e_q$ determine the available token-space structures, while the matrices $J(q)$ determine whether each structure grows ($\rho(J(q))>1$) or decays ($\rho(J(q))<1$) with depth, and along which feature-space directions (the eigenvectors of $J(q)$). A Transformer can therefore preferentially amplify particular joint token-feature structures even before learning: this is what we consider to be a \emph{dynamical prior}.
Note that a Jacobian analysis of weight-tied attention was conducted in~\citet{tomihari2026recurrent}, but without PE and FFN, and in which the token-space structure of $A_\star$ is left unresolved.

Inspired by pattern-formation theory, we call the matrix-valued map $q \rightarrow J(q)$ the \emph{Transformer Matrix Dispersion Relation}. We build it progressively, to isolate the roles of PE, $OV$ geometry, MHA, and the FFN, and finally use it to classify the structures promoted at linear order.

\subsection{Positional Encoding Lifts Heterogeneous Space Degeneracy}
\label{sec:pe}
We start with a single attention head without positional encoding and ask: do the linearized dynamics preferentially select a heterogeneous token-space structure?

\begin{proposition}[No PE: frequency degeneracy]
\label{prop:no_position}
Without PE $(b_{ij}=0)$, any homogeneous state $X_\star\in\mathcal M$ produces uniform attention, i.e. $\Att_\star=N^{-1}\mathbf1\mathbf1^\top$.
The token-space then decomposes into the two invariant eigenspaces $\mathbb R^N
= \operatorname{span}\{\mathbf 1\}
\oplus
\mathbf {1^\perp}$,
corresponding respectively to the homogeneous mode and the $(N-1)$-dimensional space of zero-mean heterogeneous modes. Their associated Jacobian matrices are
\begin{equation}
J(0)=C_\star(I+V),
\qquad
J(q)=C_\star
\quad(q\ne0).
\label{eq:no_pe_vector_dispersion}
\end{equation}
In linear order, all nonzero token frequencies have identical feature-space amplification.
\end{proposition}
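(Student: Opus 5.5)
The plan is to compute the attention matrix at a homogeneous state directly, read off its spectrum, and then substitute into the dispersion relation of Theorem~\ref{thm:stability}.

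\textbf{Step 1: attention is uniform.} At $X_\star=\mathbf 1\otimes a_\star$ every token equals $a_\star$. Hence the logit $d_h^{-1/2}\langle Qx_i,Kx_j\rangle=d_h^{-1/2}\langle Qa_\star,Ka_\star\rangle=:s_\star$ does not depend on $(i,j)$. With $b_{ij}=0$, each softmax row is $\exp(s_\star)/\sum_{m=1}^N \exp(s_\star)=1/N$. Therefore $A_\star=N^{-1}\mathbf 1\mathbf 1^\top$. This does not use $g(a_\star)=a_\star$, so it holds for any $X_\star\in\mathcal M$.

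\textbf{Step 2: spectrum of $A_\star$.} The matrix $A_\star$ is the symmetric rank-one orthogonal projector onto $\operatorname{span}\{\mathbf 1\}$. We have $A_\star\mathbf 1=\mathbf 1$, and for any $e\perp\mathbf 1$ we have $A_\star e=N^{-1}\mathbf 1(\mathbf 1^\top e)=0$. So $\mathbb R^N=\operatorname{span}\{\mathbf 1\}\oplus\mathbf 1^\perp$ is an orthogonal decomposition into $A_\star$-invariant eigenspaces, with eigenvalues $\lambda_0=1$ and $\lambda_q=0$ on the $(N-1)$-dimensional complement. The complement consists exactly of the zero-mean vectors. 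Any basis of $\mathbf 1^\perp$, for instance the nonconstant Fourier modes, can serve as the $e_q$, $q\neq 0$.

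\textbf{Step 3: substitute into Theorem~\ref{thm:stability}.} Use $J(q)=C_\star(I_d+\lambda_q V)$. For $q=0$ this gives $J(0)=C_\star(I+V)$, and for $q\ne0$ it gives $J(q)=C_\star$.

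It remains to check that invariance extends from single modes $e_q\otimes v$ to whole subspaces. The linearized map $u_i^+=C_\star[u_i+\sum_j(A_\star)_{ij}Vu_j]$ can be written as $U^+=(I_N\otimes C_\star)+(A_\star\otimes C_\star V)$ acting on $U$. Both terms preserve $\operatorname{span}\{\mathbf 1\}\otimes\mathbb R^d$ and $\mathbf 1^\perp\otimes\mathbb R^d$. On the second subspace the map reduces to $I\otimes C_\star$. This independence of $q$ is the claimed degeneracy: all nonzero frequencies share the same gain matrix, so no heterogeneous structure is selected at linear order.

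\textbf{Expected obstacle.} There is no real obstacle. The only point needing care is that the degeneracy is a statement about an entire eigenspace rather than a single eigenvector. That is why I would argue via the Kronecker form of the linearized map, not only via the mode-by-mode statement of Theorem~\ref{thm:stability}. I would also note that the single-head assumption is harmless: with several heads, each $A^{(h)}_\star$ is uniform, and $V$ is replaced by $\sum_h M_h$.
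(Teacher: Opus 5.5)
Your proposal is correct and follows essentially the same route as the paper: constant content logits give uniform softmax rows, $A_\star$ has eigenvalue $1$ on $\operatorname{span}\{\mathbf 1\}$ and $0$ on $\mathbf 1^\perp$, and substituting into $J(\lambda)=C_\star(I+\lambda V)$ gives the two blocks. Your explicit Kronecker-form check, that $\mathbf 1^\perp\otimes\mathbb R^d$ is invariant and the map reduces to $I\otimes C_\star$ there, is a harmless addition; the paper gets the same fact from the block decomposition in the full version of Theorem~\ref{thm:stability}.
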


This result demonstrates that, in linear order, the value matrix $V$ controls the preference between homogeneous and heterogeneous perturbations, consistent with the empirical observation that negative-identity $OV$ initialization provides improved performance in vision Transformers reported by~\cite{trockman2023mimetic}.
However, without positional information, the dispersion relation cannot favor one heterogeneous structure over another~\footnote{We will see in Th.~\ref{prop:cluster_amplitude} that degeneracy is lifted by the nonlinearities, forming clusters of token representations.}. We next show that PE provides precisely the missing mechanism.

\begin{proposition}[PE lifts frequency degeneracy]
\label{prop:pe_dispersion}
Suppose we have a relative PE, i.e. $b_{ij}=b_{r}$, where $r=i-j$, with periodic boundary conditions. At any homogeneous state $X_\star\in\mathcal M$, the attention matrix is circulant: $[A_\star]_{ij}=\Att_{i-j}$. Therefore, $\forall k \in \{0,\dots,N-1\}$, with $q =\frac{2\pi k}{N}$, $A_\star$ admits the $N$ Fourier eigenpairs $(\lambda(q),e_q)$, and $J(q)= C_\star(I_d+\lambda(q) V)$ as in Eq.~\ref{eq:J_q}:
\begin{equation}
    A_\star e_q=\lambda(q)e_q,
    \qquad
    (e_q)_j=\euler^{\im qj},
    \qquad
    \lambda(q)=\sum_r \Att_r \euler^{-\im qr}.
\end{equation}
\end{proposition}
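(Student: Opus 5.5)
The plan is to evaluate the attention weights of Eq.~\ref{eq:standard_attention} directly at a homogeneous state. After that, the Fourier diagonalization of circulant matrices finishes the argument, and the formula for $J(q)$ follows from Theorem~\ref{thm:stability}.

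\emph{Step 1: the attention matrix is circulant.} Take $X_\star=\mathbf 1\otimes a_\star$, so every token equals $a_\star$. The content logit $d_h^{-1/2}\langle Qa_\star,Ka_\star\rangle=:s$ is then the same constant for all pairs $(i,j)$. In the softmax, the factor $\euler^{s}$ cancels between numerator and denominator, which gives
\[
[A_\star]_{ij}=\frac{\euler^{b_{i-j}}}{\sum_m \euler^{b_{i-m}}}.
\]
Because the boundary conditions are periodic, indices are taken mod $N$. As $m$ runs over $\mathbb Z_N$, so does $r=i-m$. Hence the denominator $Z:=\sum_{r}\euler^{b_r}$ does not depend on $i$, and $[A_\star]_{ij}=\Att_{i-j}$ with $\Att_r:=\euler^{b_r}/Z$. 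This is the one place where periodicity is really needed. Without it, the row normalization would depend on $i$ near the boundary and the circulant structure would fail. I expect this step to be the main, though mild, point requiring care.

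\emph{Step 2: Fourier eigenpairs.} Set $(e_q)_j=\euler^{\im qj}$ with $q=2\pi k/N$. Then
\[
(A_\star e_q)_i=\sum_j \Att_{i-j}\euler^{\im qj}.
\]
Substitute $r=i-j$ (mod $N$). This is legitimate because $\euler^{\im q N}=1$ makes $\euler^{\im qj}$ well defined mod $N$. The sum becomes
\[
\euler^{\im qi}\sum_r \Att_r \euler^{-\im qr}=\lambda(q)(e_q)_i .
\]
The $N$ vectors $e_q$ are mutually orthogonal, since $\sum_j \euler^{\im (q-q')j}=N\delta_{qq'}$. They therefore form a complete eigenbasis of $\mathbb C^N$.

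\emph{Step 3: the dispersion relation.} Theorem~\ref{thm:stability} gives the linearization $u_i^+=C_\star[u_i+\sum_j(A_\star)_{ij}Vu_j]$. This map is complex-linear in $U$, so it applies equally to complex perturbations $U=e_q\otimes v$. One could instead work with the real and imaginary parts, which are real perturbations, and recombine them. Either way, we get $U^+=e_q\otimes C_\star(I_d+\lambda(q)V)v$, which is $J(q)$ exactly as in Eq.~\ref{eq:J_q}. As a consistency check, setting $b\equiv0$ gives $\Att_r=1/N$. Then $\lambda(0)=1$ and $\lambda(q)=0$ for $q\neq0$, which recovers Proposition~\ref{prop:no_position}.
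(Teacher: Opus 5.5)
Your proof is correct and follows the paper's argument essentially step by step. You cancel the constant content logit to get a circulant $A_\star$, compute directly that $(A_\star e_q)_i=\euler^{\im qi}\sum_r\Att_r\euler^{-\im qr}$, and substitute $\lambda(q)$ into Theorem~\ref{thm:stability}. Two of your additions are worthwhile. You check explicitly that the row normalizer does not depend on $i$. You also show the $e_q$ are orthogonal and complete; the paper states the same fact as invertibility of the DFT, which gives $\rho(\mathcal J_\star)=\max_q\rho(J(q))$. One small correction: periodicity is not used only in the normalizer. Step~2 also relies on it, because the substitution $r=i-j$ wraps mod $N$.
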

This result gives positional encoding a direct dynamical interpretation: since $\lambda(q)$ is the Fourier transform of the attention matrix, designing the positional encoding is designing a filter in token-space, while $V$ and $C_\star$ determine how each transmitted mode evolves in feature space. The Transformer can now preferentially amplify specific token-space structures. This completes, the picture exhibited by \citep{trockman2023mimetic} where both negative $OV$ initialization and positional encodings appear crucial for efficient learning and improved performance in ViTs. Through our framework, we understand that negative $OV$ allows to favor heterogeneous features while positional encodings allows the architecture to discern these heterogeneities via the emergence of Fourier modes. An extension of this result without periodic boundary conditions is presented in App.~\ref{app:bulk}.

\subsection{Multi-head attention engineers the dispersion relation}
\label{sec:mha}

Distinguishing modes is not the same as being able to preferentially amplify a specific one. We therefore ask which modes a single attention head can select as the dominant growing mode. For a broad class of PEs, the answer is surprisingly restrictive.
\begin{proposition}[Single-head PE select spectral edges]
\label{prop:singlehead_spectral_edges}
Assume that $C_\star=cI$, $c>0$, $V=\eta I$, and let the positional encoding be reflection symmetric with real Fourier spectrum $\lambda(q)$ monotonically decreasing on $q\in[0,\pi]$. Then
\begin{equation}
    \rho(J(q))=c|1+\eta\lambda(q)|,
    \qquad
    \arg\max_{q\in[0,\pi]}\rho(J(q))
    \subseteq\{0,\pi\}.
\end{equation}
Hence, a single head can select either a low-frequency mode ($q_c=0$) or a high-frequency mode ($q_c=\pi$), but not an isolated intermediate wavelength. This extends to any $V$ (App.~\ref{app:linear_proofs}).
\end{proposition}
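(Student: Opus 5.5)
We start from Proposition~\ref{prop:pe_dispersion}, which gives $J(q)=C_\star(I_d+\lambda(q)V)$ for each Fourier mode $e_q$. Substituting $C_\star=cI$ and $V=\eta I$ gives $J(q)=c(1+\eta\lambda(q))I_d$. This is a scalar multiple of the identity, so its spectral radius is the modulus of the scalar: $\rho(J(q))=c\,|1+\eta\lambda(q)|$. For this we need $\lambda(q)$ to be real. Reflection symmetry $b_r=b_{-r}$ gives $\Att_r=\Att_{-r}$ at the homogeneous state, because the softmax normalizer is the same for every row of a circulant matrix. Hence $\lambda(q)=\sum_r\Att_r\cos(qr)$ is real and even in $q$, and by $2\pi$-periodicity it suffices to restrict to $q\in[0,\pi]$.

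For the $\arg\max$ claim, define $f(x)=|1+\eta x|$ on the interval $I=[\lambda(\pi),\lambda(0)]$, which is the range of the monotone map $q\mapsto\lambda(q)$. The function $f$ is the absolute value of an affine function, so it is convex on $\mathbb R$. A convex function attains its maximum over a compact interval at an endpoint, namely $x=\lambda(0)$ or $x=\lambda(\pi)$. Now take $q^\ast\in(0,\pi)$ with $\rho(J(q^\ast))=\max_q\rho(J(q))$. Since $\lambda$ is monotone, $x^\ast=\lambda(q^\ast)$ lies in $I$. If $x^\ast$ is interior to $I$ and $f$ is maximal there, convexity forces $f$ to be constant on $I$. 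Then every $q$ is a maximizer, and in particular $0$ and $\pi$ are. So the maximum is always attained at an edge.

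To obtain the inclusion $\arg\max\subseteq\{0,\pi\}$ literally, we need a nondegeneracy condition. Interior maxima occur only in two degenerate cases: $\eta=0$, or $\lambda$ constant on a stretch that reaches an endpoint. Strict monotonicity of $\lambda$ together with $\eta\neq0$ rules both out. The reason is that $f$ is strictly monotone on each side of its kink at $-1/\eta$, so a maximum of $f$ can be interior to $I$ only if $f$ is constant on $I$, which requires $\eta=0$. We will state this nondegeneracy assumption explicitly, or read the statement as saying that some maximizer lies in $\{0,\pi\}$. This bookkeeping is the one place where care is needed; the rest is routine.

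For the extension to general $V$ (App.~\ref{app:linear_proofs}), we keep $C_\star=cI$ and note that $J(q)$ has eigenvalues $c(1+\lambda(q)\mu_k)$, where $\mu_k$ ranges over the eigenvalues of $V$. Hence $\rho(J(q))=c\max_k|1+\lambda(q)\mu_k|$. Each function $x\mapsto|1+x\mu_k|$ is convex in the real variable $x$, even when $\mu_k$ is complex, since it is the modulus of an affine map $\mathbb R\to\mathbb C$. A maximum of convex functions is again convex, so the same endpoint argument applies unchanged.
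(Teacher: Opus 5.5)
Your proof is correct and follows the paper's argument essentially step for step. It uses the same ingredients: the reduction to $\rho(J(q))=c|1+\eta\lambda(q)|$, convexity of $x\mapsto|1+\eta x|$ on the image interval $[\lambda(\pi),\lambda(0)]$ so that the maximum sits at an endpoint, the same nondegeneracy caveat ($\eta\neq0$ and $\lambda$ strictly monotone) for the literal inclusion, and the maximum-of-convex-moduli argument for general $V$. One point could be added to your general-$V$ paragraph, and the paper omits it too: there the literal inclusion needs its own nondegeneracy condition, e.g.\ $V$ nonsingular. The reason is that a zero eigenvalue of $V$ contributes the flat branch $|1+0\cdot x|=1$, which makes $\rho(J(q))\equiv c$ on all of $[0,\pi]$ whenever the other branches stay at or below one; the weaker statement that the maximum is attained at an edge still holds.
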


This result refines the usual low-pass picture of self-attention~\citep{wang2022antioversmoothing,park2022how}: the sign of the $OV$ gain decides whether a single head is low-pass ($\eta>0$, $q_c=0$) or high-pass ($-2/(1+\lambda(\pi))<\eta<0$, $q_c=\pi$), but never band-pass. The result covers reflection-symmetric localized positional encodings such as bidirectional ALiBi~\citep{press2022train} and the centered Gaussian kernel of ConViT~\citep{dascoli2021convit}. Isolated intermediate wavelengths therefore require positional encodings outside this class, such as causal ALiBi and off-center ConViT, or with RoPE~\citep{su2024roformer} that induces a non-symmetric, non-monotone positional profile on $\mathcal M$ (more details in App.~\ref{app:rope}). We next show that multi-head attention provides another solution:  different heads contribute distinct positional spectra and $OV$ geometries, which combine into a matrix-valued filter capable of producing a richer dispersion relation.

\begin{figure}[t]
\centering
\includegraphics[width=0.8\textwidth]{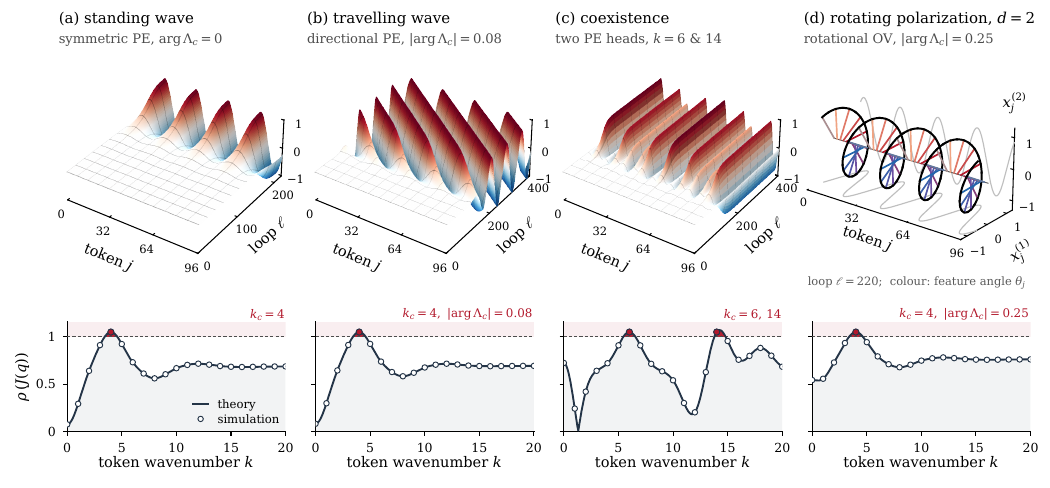}
\caption{\textbf{Taxonomy of patterns in Transformers:} $2D$ visualization of the propagation of the accessible modes, with their corresponding dispersion relation. }
\label{fig:taxo_patterns}
\end{figure}

\begin{theorem}[Transformer Matrix Dispersion Relation]
    \label{thm:TMDR}
    Suppose the PE is translation invariant, i.e. $b_{i,j} = b_{i-j}$, let $X_\star\in\mathcal M$, and let $M_h := O_h V_h$ for a concatenated MHA of $H$ heads. For a perturbation $U=e_q\otimes v$ along Fourier mode $q$, token-wise perturbations evolve as
    \begin{equation}
        u_i^+ =\euler^{\im qi}J(q)v, \qquad J(q)=C_\star\left[I+\mathcal D(q)\right], \qquad \mathcal D(q):=\sum\nolimits_{h=1}^{H}\lambda_h(q)M_h.
        \label{eq:TMDR}
    \end{equation}
    Thus, MHA constructs a matrix-valued spectral filter $\mathcal D(q)$, rather than a simple token--feature coupling.
\end{theorem}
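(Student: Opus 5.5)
The plan is to linearize the full block $\mathcal F$ around $X_\star=\mathbf 1\otimes a_\star$, keeping all $H$ heads, and then diagonalize in token space with the Fourier basis of Proposition~\ref{prop:pe_dispersion}. The single-head argument of Theorem~\ref{thm:stability} carries over almost verbatim. The only new feature is that each head has its own attention matrix $A^{(h)}_\star$. Each is circulant by translation invariance, so all are simultaneously diagonalized by the same $e_q$.

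\textbf{Step 1 (invariance and base state).} At $X_\star$ every score $d_h^{-1/2}\langle Q_h a_\star, K_h a_\star\rangle$ is independent of $i,j$, so it cancels in the softmax. Hence $[A^{(h)}_\star]_{ij}=\exp(b^{(h)}_{i-j})/\sum_m \exp(b^{(h)}_{i-m})$. With periodic indexing the denominator does not depend on $i$, so $A^{(h)}_\star$ is circulant and row-stochastic. Then $\mathrm{MHA}_i(X_\star)=\sum_h M_h a_\star$, and $\mathcal M$ is invariant with $g(a)=\mathcal H(a+\sum_h M_h a)$.

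\textbf{Step 2 (linearization).} Differentiate $x_i+\sum_h\sum_j A^{(h)}_{ij}(X)M_h x_j$ at $X_\star$ in direction $U$. There are two contributions. The first is the direct term $\sum_h\sum_j (A^{(h)}_\star)_{ij}M_h u_j$. The second comes from varying the attention weights, $\sum_h\sum_j \delta A^{(h)}_{ij}\,M_h a_\star$.

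The main point to check is that the second term vanishes. Row-stochasticity gives $\sum_j \delta A^{(h)}_{ij}=0$ for every perturbation, and in that term $M_h a_\star$ is the same vector for all $j$. So the term equals $\big(\sum_j\delta A^{(h)}_{ij}\big)M_h a_\star=0$. This holds for arbitrary query/key matrices, so no assumption on $Q_h,K_h$ is needed. Chain-ruling through $\mathcal H$ then gives $u_i^+=C_\star\big[u_i+\sum_h\sum_j (A^{(h)}_\star)_{ij}M_h u_j\big]$, where $C_\star=D\mathcal H$ evaluated at $a_\star+\sum_h M_h a_\star$. At a fixed point this equals the FFN Jacobian of Theorem~\ref{thm:stability}.

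\textbf{Step 3 (Fourier diagonalization).} Insert $u_j=\euler^{\im qj}v$, interpreted by complexification since the map is real-linear. Because $A^{(h)}_\star$ is circulant, $\sum_j (A^{(h)}_\star)_{ij}\euler^{\im qj}=\lambda_h(q)\euler^{\im qi}$ with $\lambda_h(q)=\sum_r A^{(h)}_r\euler^{-\im qr}$, exactly as in Proposition~\ref{prop:pe_dispersion}. Pulling out the scalar $\euler^{\im qi}$ gives $u_i^+=\euler^{\im qi}C_\star[I+\sum_h\lambda_h(q)M_h]v$, which is Eq.~\ref{eq:TMDR}.

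The key ingredient is the shared eigenbasis across heads; the rest is bookkeeping. Without translation invariance, the heads' attention matrices would not commute, and no scalar-per-head decomposition would exist. I expect the only delicate step to be the vanishing of the attention-variation term in Step 2.
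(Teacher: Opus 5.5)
Your proposal is correct and follows essentially the same route as the paper's proof: row-stochasticity of each head kills the attention-variation term because $M_h a_\star$ is token-independent, which gives $u_i^+=C_\star\bigl[u_i+\sum_h\sum_j (A_{h,\star})_{ij}M_h u_j\bigr]$, and the circulant matrices $A_{h,\star}$ are then simultaneously diagonalized by the Fourier vectors $e_q$. You also state explicitly the periodic-indexing assumption that makes the softmax normalizer independent of $i$, which the paper leaves implicit.
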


Eq.~\ref{eq:TMDR} is the central object of this work: at initialization, a Transformer block acts as a matrix-valued spectral filter on token-space modes. For each head, the positional encoding determines the scalar response $\lambda_h(q)$, the $OV$ matrices $M_h$ the feature directions through which that response acts, and the FFN Jacobian $C_\star$ rescales the result.

\textbf{Scope and weight tying.} The results hold on all of $\mathcal M$ and along depth, also for layer-dependent blocks (App.~\ref{app:untied}). Weight tying is thus not required for the linear theory; we introduce it only in Sec.~\ref{sec:amplitude}, where depth as repeated iteration of the same map allows us to study the nonlinear stabilization of growing modes, as in~\cite{geshkovski2023clusters, bruno2026multiscale}. %

\subsection{A catalogue of joint token--feature patterns}

Now that the Transformer Matrix Dispersion Relation is laid out, we can turn its eigenpairs into concrete representations. The following proposition makes the growth of a joint token-feature mode explicit and shows how the phase of $\Lambda_c$ and the geometry of $v_c$ determine the resulting morphologies.

\begin{proposition}[Shape of the emerging patterns]
    \label{prop:catalogue}
    Let $a_\star$ be a fixed point of $g$ and $(\Lambda_c,v_c)$ a simple eigenpair of $J(q_c)$ with $|\Lambda_c|>1$. The perturbation $U = e_{q_c}\otimes v_c$ along mode $q_c$ grows exponentially with depth and is of the form
    \begin{equation}
        u_j^{(\ell)}
        \sim
        B_0\Lambda_c^\ell v_c \euler^{\im q_cj}
        +
        \overline{
        B_0\Lambda_c^\ell v_c \euler^{\im q_cj}
        },
    \label{eq:joint_mode}
    \end{equation}
    with $\Lambda_c=\euler^{\gamma_c+i\omega_c}$, $B_0 = |B_0| \euler^{\im \theta_0}$ the initial perturbation amplitude. Writing $v_c=v_\Re+\im v_\Im$, Eq.~\ref{eq:joint_mode} gives
     \begin{equation}
     u_j^{(\ell)}
     \simeq
     2|B_0| \euler^{\ell\gamma_c}
     \left[
     v_\Re\cos\Theta_{j\ell}
     -
     v_\Im\sin\Theta_{j\ell}
     \right],
     \qquad
     \Theta_{j\ell}
     =
     q_cj+\ell\omega_c+\theta_0.
     \label{eq:vector_wave_form}
     \end{equation}
\end{proposition}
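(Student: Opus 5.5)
The plan is to iterate the linear map of Theorem~\ref{thm:TMDR} along depth and then take real parts. By Theorem~\ref{thm:TMDR}, a perturbation $U=e_{q_c}\otimes v$ is mapped by one block to $e_{q_c}\otimes J(q_c)v$. The Fourier mode $e_{q_c}$ is an eigenvector of every circulant $A_\star^{(h)}$, so the token-space factor is preserved exactly. The feature factor is multiplied by $J(q_c)$. Since $a_\star$ is a fixed point of $g$, the base state $X_\star$ is unchanged from layer to layer. The same linearization therefore applies at every layer; in the weight-tied setting it is literally the same $J(q_c)$ each time. By induction on $\ell$, $U^{(\ell)}=e_{q_c}\otimes J(q_c)^\ell v$. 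Taking $v=B_0v_c$ with $(\Lambda_c,v_c)$ an eigenpair gives $U^{(\ell)}=B_0\Lambda_c^\ell\, e_{q_c}\otimes v_c$, i.e.\ $u_j^{(\ell)}=B_0\Lambda_c^\ell v_c\euler^{\im q_cj}$. Its norm is $|B_0|\,\|v_c\|\,|\Lambda_c|^\ell$, which grows exponentially because $|\Lambda_c|>1$.

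Next, a physical perturbation must be real, so I will use realness of the linearized dynamics. The map $u\mapsto C_\star[u_i+\sum_h\sum_j (A^{(h)}_\star)_{ij}M_hu_j]$ has real coefficients. Hence $A_\star e_{-q}=\overline{\lambda(q)}\,e_{-q}$, and $J(-q)=\overline{J(q)}$. Consequently $(\overline{\Lambda_c},\overline{v_c})$ is an eigenpair of $J(-q_c)$, and the conjugate mode $\overline{B_0}\,e_{-q_c}\otimes\overline{v_c}$ evolves as the complex conjugate of the first. By linearity, the real initial condition $U=B_0e_{q_c}\otimes v_c+\mathrm{c.c.}$ evolves exactly into Eq.~\ref{eq:joint_mode}. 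The ``$\sim$'' accounts for a generic real initial perturbation. That perturbation is expanded in the Fourier/eigenbasis, which can be done since $\Lambda_c$ is simple and, I will assume, $J(q_c)$ is diagonalizable near it or at least has $\Lambda_c$ as a simple dominant eigenvalue. The component along $(\Lambda_c,v_c)$ and its conjugate then dominates when $\Lambda_c$ has maximal modulus. All statements hold to first order in $U$.

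For Eq.~\ref{eq:vector_wave_form}, write $\Lambda_c^\ell=\euler^{\ell\gamma_c}\euler^{\im\ell\omega_c}$ and $B_0=|B_0|\euler^{\im\theta_0}$. Then $B_0\Lambda_c^\ell\euler^{\im q_cj}=|B_0|\euler^{\ell\gamma_c}\euler^{\im\Theta_{j\ell}}$. Adding its conjugate gives $2\Re\big[|B_0|\euler^{\ell\gamma_c}\euler^{\im\Theta_{j\ell}}(v_\Re+\im v_\Im)\big]$. Expanding $\euler^{\im\Theta}=\cos\Theta+\im\sin\Theta$ yields $2|B_0|\euler^{\ell\gamma_c}[v_\Re\cos\Theta_{j\ell}-v_\Im\sin\Theta_{j\ell}]$.

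The main obstacle is the case $q_c\in\{0,\pi\}$, or more generally when $e_{q_c}$ is real. Then $e_{-q_c}=e_{q_c}$, so the conjugate term lives in the same token mode. If $\Lambda_c$ is also real, the eigenpair is its own conjugate. In that case adding the conjugate just doubles the real part, and one must check that the formula still holds. It does, because the final expression is $2\Re[\cdot]$ in every case. I will state this explicitly so that the ``$+\,\mathrm{c.c.}$'' convention is consistent. Only the interpretation changes in the degenerate case: it reduces to a standing or homogeneous pattern.
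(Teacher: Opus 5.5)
Your proposal is correct and takes essentially the paper's route. The paper also argues that $e_{q_c}\otimes v_c$ is an eigenvector of the complexified Jacobian, that real parameters give $J(-q_c)=\overline{J(q_c)}$ so the conjugate mode evolves with $\overline{\Lambda_c}$, that a real perturbation in this invariant subspace is iterated $\ell$ times, and that taking real parts yields Eq.~\ref{eq:vector_wave_form}. Your extra remarks go beyond what the paper needs: the paper starts from an initial condition lying exactly in the invariant subspace. In particular, your claim that generic initial data are dominated by the $\Lambda_c$ component requires $\Lambda_c$ to have maximal modulus, which is not among the hypotheses.
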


Eq.~\ref{eq:vector_wave_form} gives a direct dictionary between the spectral data of $J(q_c)$ and the resulting representation. The wavenumber $q_c$ sets the spatial scale across tokens, the growth rate $\gamma_c$ controls amplification with depth, the phase $\omega_c$ determines whether the pattern drifts across layers, and the real and imaginary parts of $v_c$ describe its geometry in feature space. Different combinations therefore give rise to different morphologies, simulated in Fig.~\ref{fig:taxo_patterns}: standing waves ($\omega_c=0$, $v_c \in \mathbb R^d$), traveling waves ($\omega_c \neq 0$, from a directional PE), feature rotations ($v_c \in \mathbb C^d$, from non-symmetric $OV$ geometry), and their mixtures. Notably, traveling waves have been \emph{hard-}designed into neural architectures~\citep{keller2023wave,keller2024traveling} for sequence learning. We demonstrate here that they emerge spontaneously in Transformers whenever their PE is directional or their $OV$ non-symmetric.

\section{Past Linear Order: Amplitude Equations for Transformers}
\label{sec:amplitude}
The dispersion relation in Sec. \ref{sec:linear} answers a first question: \emph{which joint token-feature structures does a Transformer initially amplify?} Linear growth, however, cannot be the end of the story. Once the selected structure becomes appreciable, the approximation underlying $J(q)$ breaks down.
This leaves three questions that are invisible to the dispersion relation: does a linearly growing mode stabilize at a finite amplitude? When several structures have exactly the same linear gain, which one is eventually realized? When distinct modes grow simultaneously, do they coexist or suppress one another? The non-linear analysis below addresses these questions with the second tool of pattern-formation theory, the \emph{amplitude equation}~\citep{cross1993pattern}.  

\subsection{From linear growth to finite-amplitude patterns}
\begin{figure}[t]
\centering
\includegraphics[width=0.8\textwidth]{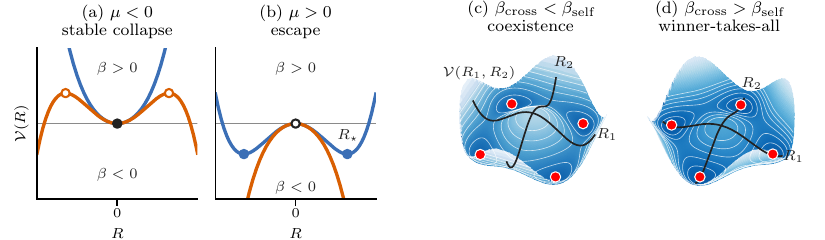}
\caption{\textbf{Depth evolution descends an effective potential.} (a,b) One critical mode, Eq.~\ref{eq:potential_1d} (blue: $\beta>0$, orange: $\beta<0$; disks: stable states, circles: unstable): the dispersion relation sets $\mu$, the nonlinearities set $\beta$. (c,d) Two critical modes, Eq.~\ref{eq:two_mode_potential}, over the signed amplitudes $(R_1,R_2)$: the ratio of cross-suppression $\beta_{\rm cross}$ to self-saturation $\beta_{\rm self}$ decides where the wells are.}
\label{fig:potential}
\end{figure}

Consider first a single critical mode $q_c$ whose gain is close to the instability threshold, $\rho(J(q_c))=1+\mu$ with $\vert \mu\vert \ll 1$. Near this threshold, the selected mode evolves slowly while modes whose gains remain strictly below one decay rapidly, so the token dynamics can be reduced to an analysis of amplitude of the critical modes. For a single stationary pattern with amplitude $B$, the reduced dynamics has the generic form
\begin{equation}
B^+-B=\mu B-\beta B^3=-\mathcal V'(B),
\qquad
\mathcal V(B)=-\frac{\mu}{2}B^2+\frac{\beta}{4}B^4,
\qquad
\mu=\rho\big(J(q_c)\big)-1 .
\label{eq:potential_1d}
\end{equation}
The two coefficients have different meanings (Fig.~\ref{fig:potential}). The linear coefficient $\mu$ is already determined by the dispersion relation of Sec.~\ref{sec:linear}: it sets the curvature of the effective potential around the collapsed state, and therefore when that state loses stability. The nonlinear coefficient $\beta$, by contrast, depends on terms discarded by the linearization and determines what replaces it. If $\mu>0$ and $\beta>0$, nonlinear feedback counteracts linear instability and the pattern saturates at a finite amplitude $B_\star=\sqrt{\mu/|\beta|}$. If instead $\beta<0$, the cubic nonlinearity reinforces the instability.

\subsection{Amplitude Equations of Clusters and Travelling Waves}

We now study the effects of these higher-order terms on two representations: clusters and traveling waves. For clarity, we work with the scalar reduction $d=1$ of the full Transformer block in Eq.~\ref{eq:full_transformer_map} (the general $d>1$ case reduce to the same form by projection onto the left eigenvector), around $a_\star=0$ (the extension to $a_\star\neq0$ is sketched in App.~\ref{app:offorigin}) with $OV$ gain $\eta$, content coupling $\chi$, and translation-invariant PE $b_{i-j}$:
\begin{equation}
A_{ij}(x)=\frac{\euler^{\chi x_ix_j+b_{i-j}}}{\sum_m \euler^{\chi x_ix_m+b_{i-m}}},\qquad H(x)=x+\eta A(x)x,\qquad F(x)=H(x)+\nu\phi(H(x)),
\label{eq:scalar_map}
\end{equation}
Nonlinearity $\phi$ is smooth and $\phi(0)=0, \ C_\star=\alpha_1:=1+\nu\phi'(0),\ \alpha_2:=\nu\phi''(0)/2$ and $\alpha_3:=\nu\phi'''(0)/6$, where $\nu$ is the residual scale.
$H$ and $F$ are the residual and Transformer mapping of Eq.~\ref{eq:full_transformer_map}, respectively. Expanding $F$ around $x=0$ (Lemma~\ref{lem:scalar_cubic_expansion}, App.~\ref{app:cubic}) separates the dispersion relation from the higher-order terms:
\begin{equation}
F(x)=
\underbrace{\textstyle\sum_q J(q)\,\hat x_q\,e_q}_{\text{dispersion relation}}
+\underbrace{\alpha_2(Lx)^{\circ2}+\alpha_3(Lx)^{\circ3}}_{\text{FFN nonlinearity}}
+\underbrace{\alpha_1\eta\chi\,\mathcal C(x)}_{\text{content attention}}
+O(\|x\|^4),
\label{eq:scalar_dispersion_notation}
\end{equation}
where $x=\sum_q\hat x_qe_q$ and $J(q)=\alpha_1(1+\eta\lambda_q)$ is the dispersion relation of Theorem~\ref{thm:stability} with $V=\eta$ and $\lambda_q$ the eigenvalues of $A_\star=A(0)$. The FFN acts on the attention output $Lx := (I+\eta A_\star)x$, and content-dependent attention is $\mathcal C(x):=x\circ[A_\star x^{\circ2}-(A_\star x)^{\circ2}]$ ($\circ$: component-wise product).

We expose a division of labor between Transformer components. We have shown that PE, $OV$ geometry, and the linearized FFN determine \textbf{which mode is initially amplified} through $J(q)$, i.e., $\mu$. We demonstrate here how content-dependent attention and higher order properties of FFN determine, through $\beta$, \textbf{what happens to that mode once it has grown}.
\begin{theorem}[Clusters, simplified]
\label{prop:cluster_amplitude}
Without PE and with a negative $OV$ gain, $\eta\in(-2,0)$, all heterogeneous modes share the amplification $J(q\neq0)=1+\mu$ (Proposition~\ref{prop:no_position}). Near threshold, the separation $u$ of two balanced clusters, $x_i=\pm u$ up to a small common shift, obeys
\begin{equation}
u^+-u=\mu u-\beta_{\rm cl}u^3,\qquad
\beta_{\rm cl}=-\eta\chi-\alpha_3+\frac{2(1+\eta)}{\eta}\alpha_2^2.
\label{eq:cluster_amplitude_general}
\end{equation}
If $\mu>0$ and $\beta_{\rm cl}>0$, the clusters saturate at a finite separation $u_\star=\sqrt{\mu/\beta_{\rm cl}}$, which is stable along the cluster direction. Against intra-cluster perturbations, it is stable only for an odd activation ($\alpha_2=0$) with $\alpha_3<0$. The full statement and proof are in App.~\ref{app:cluster}.
\end{theorem}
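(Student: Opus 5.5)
The plan is a center-manifold reduction of the scalar map \eqref{eq:scalar_map} onto the two-cluster direction. I will work directly from the cubic expansion \eqref{eq:scalar_dispersion_notation} (Lemma~\ref{lem:scalar_cubic_expansion}).

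\textbf{Setup.} Without PE, Proposition~\ref{prop:no_position} gives $A_\star=N^{-1}\mathbf 1\mathbf 1^\top$. Hence $\lambda_0=1$ and $\lambda_q=0$ for $q\neq0$. This gives $J(q\ne0)=\alpha_1=1+\mu$ and $J(0)=\alpha_1(1+\eta)$. For $\eta\in(-2,0)$ and $\alpha_1$ near $1$, we have $|J(0)|<1$, so the homogeneous mode is linearly damped. I take the ansatz $x=s\mathbf 1+u\sigma$, where $\sigma_i=\pm1$ and $\sum_i\sigma_i=0$. Then $\sigma\circ\sigma=\mathbf 1$ and $\sigma^{\circ3}=\sigma$, so this two-dimensional subspace is closed under the quadratic and cubic terms.

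\textbf{Reduction onto the cluster direction.} I evaluate each term of \eqref{eq:scalar_dispersion_notation} on the ansatz.
\begin{itemize}
\item Attention output: $Lx=(1+\eta)s\mathbf 1+u\sigma$.
\item Quadratic FFN term: it gives $\alpha_2\big[((1+\eta)^2s^2+u^2)\mathbf 1+2(1+\eta)su\,\sigma\big]$.
\item Cubic FFN term: at leading order it contributes $\alpha_3u^3\sigma$.
\item Content attention: $A_\star x^{\circ2}=(s^2+u^2)\mathbf 1$ and $(A_\star x)^{\circ2}=s^2\mathbf 1$. Hence $\mathcal C(x)=u^2x$, which contributes $\alpha_1\eta\chi u^3\sigma$ to the $\sigma$-component.
\end{itemize}
The homogeneous component obeys $s^+=\alpha_1(1+\eta)s+\alpha_2u^2+\dots$. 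Because this mode is contracting, $s$ is slaved to $u$. Setting $s^+=s$ at threshold ($\alpha_1\to1$) gives the graph $s=-\alpha_2u^2/\eta+O(u^4)$; this is the ``small common shift'' in the statement. Substituting into the $\sigma$-component gives
\[
u^+=(1+\mu)u+\Big[\eta\chi+\alpha_3-\tfrac{2(1+\eta)}{\eta}\alpha_2^2\Big]u^3+\dots .
\]
This is \eqref{eq:cluster_amplitude_general} with $\alpha_1$ replaced by $1$ in the cubic coefficient, an $O(\mu)$ correction. Along the cluster direction, the one-dimensional map has derivative $1-2\mu$ at $u_\star=\sqrt{\mu/\beta_{\rm cl}}$, so $u_\star$ is stable along that direction when $\mu,\beta_{\rm cl}>0$.

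\textbf{Intra-cluster stability (main obstacle).} I linearize the full map at $x_\star=s_\star\mathbf 1+u_\star\sigma$ for perturbations $w\perp\mathbf 1,\sigma$. Since $\mathrm{mean}(w)=\mathrm{mean}(\sigma w)=0$, the content term contributes $\delta\mathcal C=u_\star^2w$; its other pieces vanish. The cubic term contributes $3\alpha_3u_\star^2w$. The quadratic term, however, contributes $2\alpha_2u_\star\,\sigma\circ w$. This is an $O(\sqrt\mu)$ coupling between $w$ and $\sigma\circ w$, both of which lie in the same subspace. Since $(\sigma\circ)^2=I$, this operator has eigenvalues $\pm1$. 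The Jacobian therefore has eigenvalues of the form $1+\mu\pm2|\alpha_2|u_\star+O(\mu)$, and one of them exceeds $1$ whenever $\alpha_2\neq0$, which forces $\alpha_2=0$.

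With $\alpha_2=0$, the Jacobian on this subspace is the scalar
\[
1+\mu+(\eta\chi+3\alpha_3)u_\star^2=1+\mu\,\frac{2\alpha_3}{\beta_{\rm cl}},
\]
using $\beta_{\rm cl}=-\eta\chi-\alpha_3$ in this case. This is below $1$ exactly when $\alpha_3<0$, since $\beta_{\rm cl}>0$. The homogeneous direction stays contracting because $|J(0)|<1$. Together these give the stated stability criterion.

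The delicate bookkeeping is in two places. The first is tracking the $O(\mu)$ corrections from $\alpha_1\neq1$ to confirm they do not change the signs. The second is justifying the slaving $s=s(u)$ rigorously, which I would do through the standard center-manifold theorem for maps.
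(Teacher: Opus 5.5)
Your proposal is correct and follows essentially the same route as the paper's proof in App.~\ref{app:cluster}. The steps match one for one: the two-cluster ansatz with the mean slaved through the stable resolvent $1-J(0)\to-\eta$, projection onto $\sigma$ to read off $\beta_{\rm cl}$, and a transverse analysis on $\{\mathbf 1,\sigma\}^\perp$, where the involution $w\mapsto\sigma\circ w$ gives multipliers $1+\mu\pm2\alpha_2u_\star+O(\mu)$ that reduce to $1+2\alpha_3\mu/\beta_{\rm cl}$ when $\alpha_2=0$; one useful addition is that $\operatorname{span}\{\mathbf 1,\sigma\}$ is exactly invariant under the full map, since cluster-wise constant inputs give cluster-wise constant attention and FFN outputs, which makes your planar center-manifold reduction rigorous without extra bookkeeping.
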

\textbf{Non-linear effects on clustering.} Here the instability is driven by the FFN ($C_\star=\alpha_1>1$), with $\eta<0$ stabilizing the mean, whereas the clusters of~\citet{geshkovski2023clusters} are driven by attention with $V=I$. Which structure emerges from the degenerate sector $\mathbf 1^\perp$ is decided at cubic order, by ingredients absent from the dispersion relation. Negative-OV content attention, $\eta\chi<0$, is what holds a cluster: as it tightens, its members attend more to one another. Interestingly, an odd saturating activation ($\alpha_3<0$) adds damping, whereas any even part $\alpha_2\neq0$ makes one cluster spread, so GELU-type FFNs tend to break balanced clusters that $\tanh$-type FFNs keep. Consequently, activations also bias the emerging structure, which to our knowledge was not noted before. This activation function induced structural bias is also present in message-passing GNNs ~\citep{turan2026beyond}.

\begin{theorem}[Traveling waves, simplified]
\label{prop:travelling_amplitude}
With a directional PE, the critical mode has a complex gain $J(q_c)=(1+\mu)\euler^{\im\omega_c}$
: at linear order, the wave grows by $1+\mu$ and advances by a phase $\omega_c$ per layer. Near threshold, its amplitude $G$, measured in the frame moving with the wave, obeys
\begin{equation}
G^+-G=\mu G-\beta_{\rm tr}|G|^2G,
\label{eq:travelling_amplitude_main}
\end{equation}
with a complex coefficient $\beta_{\rm tr}$. If $\mu>0$ and $\Re(\beta_{\rm tr})>0$, the wave saturates at a stable amplitude $R_\star=|G_\star|$ and propagates at phase speed $\Omega$ per layer:
\begin{equation}
R_\star^2=\frac{\mu}{\Re(\beta_{\rm tr})},\qquad
\Omega=\omega_c-\Im(\beta_{\rm tr})R_\star^2.
\label{eq:nonlinear_wave_speed}
\end{equation}
Thus the real part of $\beta_{\rm tr}$ controls saturation, while its imaginary part gives the nonlinear correction to the propagation speed. The full statement and proof are in App.~\ref{app:travelling}.
\end{theorem}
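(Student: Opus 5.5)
The plan is a standard weakly nonlinear center-manifold reduction for maps with a complex-conjugate pair of critical multipliers (a Neimark--Sacker normal form), applied to the scalar map $F$ of Eq.~\ref{eq:scalar_map}. We work with the expansion of Eq.~\ref{eq:scalar_dispersion_notation}, which Lemma~\ref{lem:scalar_cubic_expansion} provides. A directional PE makes $A_\star$ circulant but not symmetric. By Proposition~\ref{prop:pe_dispersion}, $\lambda(q)$ is then complex, and for real $x$ the modes $e_{q_c}$ and $e_{-q_c}=\overline{e_{q_c}}$ carry the conjugate gains $J(\pm q_c)=(1+\mu)\euler^{\pm\im\omega_c}$. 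We make the ansatz
\begin{equation}
x = G\,e_{q_c} + \overline{G}\,e_{-q_c} + w,
\end{equation}
where $w$ collects the stable modes with $|J(q)|<1$. We assume a single critical pair and non-resonance: $q_c\notin\{0,\pi\}$, and no lattice resonance $3q_c\equiv \pm q_c$ or $2q_c\equiv 0$ modulo $2\pi$. The scaling is $G=O(\sqrt{\mu})$ and $w=O(|G|^2)$.

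\textbf{Quadratic order.} The FFN term $\alpha_2(Lx)^{\circ 2}$ produces harmonics at $\pm 2q_c$ and at $0$, with coefficients $\alpha_2\,\ell(q_c)^2G^2$ and $2\alpha_2|\ell(q_c)|^2|G|^2$, where $\ell(q)=1+\eta\lambda(q)$. Content attention $\mathcal C$ is cubic and does not enter here. Since these modes are strictly damped, we slave them with the invariance equation of a map. For the component $w_k$ at wavenumber $k$ this reads
\begin{equation}
w_k\,\big(\text{phase factor}\big) - J(k)\,w_k = \text{forcing},
\end{equation}
because $G$ advances by roughly $\euler^{\im\omega_c}$ per layer. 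This gives, for example,
\begin{equation}
w_{2q_c}=\frac{\alpha_2\,\ell(q_c)^2}{\euler^{2\im\omega_c}-J(2q_c)}\,G^2,
\qquad
w_0=\frac{2\alpha_2\,|\ell(q_c)|^2}{1-J(0)}\,|G|^2.
\end{equation}
The denominators are nonzero by non-resonance and by $|J(2q_c)|,|J(0)|<1$.

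\textbf{Cubic order.} We project $F(x)$ onto $e_{q_c}$. The projection is plain Fourier projection, since all the operators involved are circulant or componentwise. The result has the form
\begin{equation}
G^+ = J(q_c)\,G - c_3\,|G|^2 G + O(|G|^4),
\end{equation}
where $c_3$ has three contributions:
\begin{itemize}
\item a direct term $3\alpha_3|\ell(q_c)|^2\ell(q_c)$ from $(Lx)^{\circ 3}$;
\item quadratic-feedback terms $2\alpha_2\,\ell(q_c)\,\ell(2q_c)\,w_{2q_c}$ and $2\alpha_2\,\ell(q_c)\,\ell(0)\,w_0$, obtained by reinserting the slaved harmonics;
\item the content-attention term $\alpha_1\eta\chi$ times the $e_{q_c}$-coefficient of $\mathcal C(x)$.
\end{itemize}
The last coefficient is a combination of $\lambda(0)$, $\lambda(\pm q_c)$ and $\lambda(2q_c)$, obtained from the Fourier coefficients of $A_\star x^{\circ2}$ and $(A_\star x)^{\circ 2}$.

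We then pass to the co-moving frame. Writing $G=\euler^{\im\ell\omega_c}\tilde G$ and expanding $1+\mu$ to first order, the $\euler^{\im\omega_c}$ prefactor is removed and we obtain Eq.~\ref{eq:travelling_amplitude_main} with
\begin{equation}
\beta_{\rm tr}=\euler^{-\im\omega_c}c_3,
\end{equation}
up to $O(\mu)$ corrections. Phase invariance of the circulant problem under $j\mapsto j+1$ is what guarantees that only the resonant term $|G|^2G$ survives at cubic order.

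\textbf{Saturation and speed.} We write $G=R\euler^{\im\psi}$. To leading order in $\mu$ the modulus obeys
\begin{equation}
R^+ - R = \mu R - \Re(\beta_{\rm tr})R^3,
\end{equation}
so Eq.~\ref{eq:potential_1d} gives the fixed point $R_\star^2=\mu/\Re(\beta_{\rm tr})$. It is stable because the linearization there has slope $1-2\mu<1$ when $\mu>0$ and $\Re(\beta_{\rm tr})>0$. The phase increment is $\psi^+-\psi=-\Im(\beta_{\rm tr})R^2$. Adding back $\omega_c$ gives the speed $\Omega$ of Eq.~\ref{eq:nonlinear_wave_speed}.

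\textbf{Main obstacle.} I expect the hardest step to be the correct bookkeeping of the cubic projection. This covers both the slaved-harmonic feedback, where the resonance denominators must use the map's phase factors and not a flow's, and the Fourier projection of $\mathcal C$, which couples four wavenumbers. I would compute $\mathcal C$ first on the pure two-mode ansatz, then verify the coefficient against the symmetric (cluster) limit of Theorem~\ref{prop:cluster_amplitude} as a consistency check. Rigorous justification of the reduction rests on the center-manifold theorem for maps. This requires $|J(q)|<1$ uniformly for all $q\neq\pm q_c$, which the statement implicitly assumes.
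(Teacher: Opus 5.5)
Your proposal is correct and follows essentially the same route as the paper's proof. Both proceed by a quadratic truncation in which the mean and second harmonic are slaved through the map resolvents $1-J(0)$ and $\euler^{2\im\omega_c}-J(2q_c)$ (your $w_0$ and $w_{2q_c}$ coincide with the paper's $m$ and $Z$), then project the three cubic mechanisms onto $q_c$, remove the linear rotation in the co-rotating frame, and finish with a polar decomposition.

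There is one bookkeeping slip, which is harmless for the qualitative statement but not for the explicit $\beta_{\rm tr}$: the second-harmonic feedback is the $(-q_c)+2q_c$ interaction, so it reads $2\alpha_2\,\overline{\ell(q_c)}\,\ell(2q_c)\,w_{2q_c}\,\overline{G}$. It therefore contributes $2\alpha_2^2\,\ell(q_c)|\ell(q_c)|^2\ell(2q_c)/(\euler^{2\im\omega_c}-J(2q_c))$ to the $|G|^2G$ coefficient of $G^+$, not a term in $\ell(q_c)^3$. No reflection-symmetric check can detect this, because $\ell(q_c)$ is then real; also, the right consistency check is the standing-wave limit of Proposition~\ref{prop:standing_amplitude}, not the no-PE cluster case, which is degenerate and excluded by your non-resonance hypotheses.
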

As illustrated in Fig.~\ref{fig:taxo_patterns}(d), a complex eigenvector of $J(q_c)$ adds a feature-plane rotation; the same reduction applies after projection onto that eigenmode.

\textbf{Scaling laws and design levers.} All cases (clusters, traveling waves, and standing waves in App.~\ref{app:standing}) predict amplitude scaling laws invisible at linear order: past threshold, cluster separation and wave amplitude grow as $\sqrt{\mu}$, and the wave speed shifts linearly in $\mu$ (Fig.~\ref{fig:phase_structure}a,b). This section allowed us to refine the propagation--collapse dichotomy~\citep{giorlandino2026failure,cowsikGeometricDynamicsSignal2025} into a refined family of stable structured representations as seen in Fig. \ref{fig:phase_structure}c,d.

\subsection{Competition: winner-takes-all versus coexistence}
\label{sec:competition}
What happens when two modes become critical at once? The linear theory allows both to grow indistinguishably; the non-linear analysis predicts a competition: stable coexistence or winner-takes-all. 
\label{sec:potential}

\begin{theorem}[Two-mode competition, simplified]
\label{prop:mode_competition}
Let two standing modes $q_1,q_2$ cross threshold together with the same gain $1+\mu$, $\mu>0$. Their amplitudes obey
\begin{equation}
B_i^+-B_i=\big(\mu-\beta_{\rm self}|B_i|^2-\beta_{\rm cross}|B_j|^2\big)B_i,\qquad i\neq j\in\{1,2\},
\label{eq:two_mode_amplitudes_main}
\end{equation}
where $\beta_{\rm self}>0$ measures how each mode saturates itself and $\beta_{\rm cross}$ how strongly it suppresses the other. Equivalently, the moduli $R_i=|B_i|$ descend the effective potential
\begin{equation}
R_i^+-R_i=-\frac{\partial\mathcal V}{\partial R_i},
\qquad
\mathcal V(R_1,R_2)=-\frac{\mu}{2}\big(R_1^2+R_2^2\big)+\frac{\beta_{\rm self}}{4}\big(R_1^4+R_2^4\big)+\frac{\beta_{\rm cross}}{2}R_1^2R_2^2,
\label{eq:two_mode_potential}
\end{equation}
If $\beta_{\rm cross}<\beta_{\rm self}$, the mixed state $R_1^2=R_2^2=\mu/(\beta_{\rm self}+\beta_{\rm cross})$ is the stable minimum and the two modes coexist; if $\beta_{\rm cross}>\beta_{\rm self}$, it becomes a saddle, the single-mode states $R_i^2=\mu/\beta_{\rm self}$ are the stable minima, and the initial condition selects the winner (Fig.~\ref{fig:potential}c,d). The full statement, for unequal gains, and the proof are in App.~\ref{app:competition}.
\end{theorem}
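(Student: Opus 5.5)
The argument has two parts. First we derive the coupled amplitude equations (Eq.~\ref{eq:two_mode_amplitudes_main}) by a center-manifold / multiple-scale reduction of the scalar map of Eq.~\ref{eq:scalar_map}. Then we analyze the resulting two-dimensional map through the potential $\mathcal V$.

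\textbf{Reduction.} We make the ansatz $x=B_1 e_{q_1}+B_2 e_{q_2}+\text{c.c.}+w$. Here $B_i=O(\sqrt{\mu})$ and $w$ collects all non-critical Fourier modes, whose gains satisfy $|J(q)|<1$ strictly. We insert it into the expansion of Lemma~\ref{lem:scalar_cubic_expansion} (Eq.~\ref{eq:scalar_dispersion_notation}) and project onto $e_{q_1}$ and $e_{q_2}$ using Fourier orthogonality.

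The quadratic term $\alpha_2(Lx)^{\circ2}$ produces harmonics at $0$, $2q_i$ and $q_1\pm q_2$. Provided none of these equals $\pm q_1$ or $\pm q_2$ (a non-resonance assumption we state explicitly, e.g.\ $q_2\neq 2q_1$), these harmonics are slaved. Concretely, $w=\sum_k (1-J(k))^{-1}\hat N_2(k)+O(\mu^{3/2})$, obtained by solving the fixed-point equation for $w$, which is invertible because $J(k)\neq1$. Substituting $w$ back into the quadratic term then gives cubic resonant contributions, just as the $\alpha_2^2/\eta$ term arises in Theorem~\ref{prop:cluster_amplitude}.

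The resonant cubic terms come from three sources: $\alpha_3(Lx)^{\circ3}$, the content term $\alpha_1\eta\chi\,\mathcal C(x)$, and the slaved quadratic feedback. They produce exactly two kinds of monomials, $|B_i|^2B_i$ and $|B_j|^2B_i$. Other cubic monomials that are resonant would require conditions such as $3q_1=q_2$, which the non-resonance assumption excludes.

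\textbf{Coefficients.} We read off $\beta_{\rm self}$ and $\beta_{\rm cross}$ as the negatives of these coefficients. For example, the pure FFN contribution from $\sum_j (\cos)^3$-type averages gives the familiar ratio $\beta_{\rm cross}=2\beta_{\rm self}$ for the $\alpha_3$ part. The content-attention and $\alpha_2^2$ parts modify this ratio. For real standing modes with equal gains, the symmetry $q\mapsto -q$ makes the coefficients real. We assume $\beta_{\rm self}>0$ as stated.

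\textbf{Potential.} Because the coefficients are real, the phases are frozen to leading order, so it suffices to write the map for $R_i=|B_i|$. A direct check shows $\partial_{R_i}\mathcal V=-\mu R_i+\beta_{\rm self}R_i^3+\beta_{\rm cross}R_j^2R_i$, which is exactly minus the right-hand side. This gives Eq.~\ref{eq:two_mode_potential}.

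\textbf{Fixed points and stability.} The critical points of $\mathcal V$ are:
\begin{itemize}
\item the origin, which is a maximum since $\mu>0$;
\item the single-mode states $R_i^2=\mu/\beta_{\rm self}$, $R_j=0$;
\item the mixed state $R_1^2=R_2^2=\mu/(\beta_{\rm self}+\beta_{\rm cross})$, which exists whenever $\beta_{\rm self}+\beta_{\rm cross}>0$.
\end{itemize}

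At the mixed state the Hessian of $\mathcal V$ is $2R^2\begin{pmatrix}\beta_{\rm self}&\beta_{\rm cross}\\ \beta_{\rm cross}&\beta_{\rm self}\end{pmatrix}$, with eigenvalues $2R^2(\beta_{\rm self}\pm\beta_{\rm cross})$. It is therefore positive definite when $|\beta_{\rm cross}|<\beta_{\rm self}$ and a saddle when $\beta_{\rm cross}>\beta_{\rm self}$.

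At a single-mode state the Hessian is diagonal, with entries $2\mu$ and $\mu(\beta_{\rm cross}-\beta_{\rm self})/\beta_{\rm self}$. It is thus a minimum exactly when $\beta_{\rm cross}>\beta_{\rm self}$.

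To pass from the gradient flow to the discrete map $R^+=R-\nabla\mathcal V$, note that the Jacobian of the map is $I-\nabla^2\mathcal V$, whose eigenvalues $1-O(\mu)$ lie in $(-1,1)$ for small $\mu$. Stability of the map therefore coincides with being a minimum of $\mathcal V$. The basins of attraction of the two single-mode minima are separated by the stable manifold of the saddle, which proves that the initial condition selects the winner.

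\textbf{Main obstacle.} The hardest step is the reduction itself: establishing a center-manifold-type result for a map with $N$-dimensional token space and $O(\mu)$ spectral gap, and computing the slaved quadratic corrections correctly. For a rigorous version we would invoke the standard center manifold theorem for maps near a bifurcation with two critical Fourier pairs, treating $\mu$ as a parameter. For the coefficient computation we would proceed harmonic by harmonic, as in the cluster case of Theorem~\ref{prop:cluster_amplitude}.
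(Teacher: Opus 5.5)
Your route is the same as the paper's. You slave the quadratically generated sectors $0$, $\pm2q_i$, $\pm(q_1\pm q_2)$ through the stable resolvents $(1-J(k))^{-1}$, and project the three cubic sources (cubic FFN, content attention, slaved quadratic feedback) onto $e_{q_1},e_{q_2}$ under a non-resonance condition. You then analyze the radial map as a gradient map, using the Hessians at the mixed and single-mode states and the observation that $I-\nabla^2\mathcal V$ has multipliers $1-O(\mu)$. Your Hessian eigenvalues $2\mu$, $2\mu(\beta_{\rm self}-\beta_{\rm cross})/(\beta_{\rm self}+\beta_{\rm cross})$ and $\mu(\beta_{\rm cross}-\beta_{\rm self})/\beta_{\rm self}$ match the paper's.

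The step you skip is the one that makes $\mathcal V$ exist. Your reduction gives $B_i^+-B_i=(\mu+\beta_{ii}|B_i|^2+\beta_{ij}|B_j|^2)B_i$, with coefficients that a priori depend on the index. You then ``read off'' a single $\beta_{\rm cross}$, so your ``direct check'' that $-\partial_{R_i}\mathcal V$ equals the right-hand side is circular. Unless $\beta_{12}=\beta_{21}$, the radial map is not a gradient at all: the mixed partials $2\beta_{12}R_1R_2$ and $2\beta_{21}R_1R_2$ differ.

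The paper closes this explicitly. Equal gains at onset force $\lambda_{q_1}=\lambda_{q_2}$, hence $L_1=L_2$. Because the symbol is even ($\lambda_{q_1-q_2}=\lambda_{q_2-q_1}$), every piece of the cross coefficient is then invariant under $i\leftrightarrow j$:
\begin{itemize}
\item the content-attention term $2\alpha_1\eta\chi[1-\lambda_{q_j}^2+\lambda_{q_i+q_j}+\lambda_{q_i-q_j}-2\lambda_{q_i}\lambda_{q_j}]$;
\item the cubic-FFN term $6\alpha_3L_iL_j^2$;
\item the slaved feedback through the resolvents at $0$ and $q_i\pm q_j$.
\end{itemize}
You already use $L_1=L_2$ implicitly when you quote the ratio $2$ for the $\alpha_3$ part.

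Note also that the self-couplings $\beta_{ii}=-\beta_{\rm st}(q_i)$ depend on $\lambda_{2q_i}$ and $J(2q_i)$, which equal gains do not fix. A common $\beta_{\rm self}$ is therefore an additional symmetric-case hypothesis, as in the paper's full version. The gradient structure itself needs only $\beta_{12}=\beta_{21}$.
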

Kernels and heads that transmit the mixed harmonics $q_1 \pm q_2$ alter the cross-suppression through their stable-mode resolvents, and suppressing those harmonics weakens this route of competition.
\begin{figure}[t]
\centering
\includegraphics[width=0.8\textwidth]{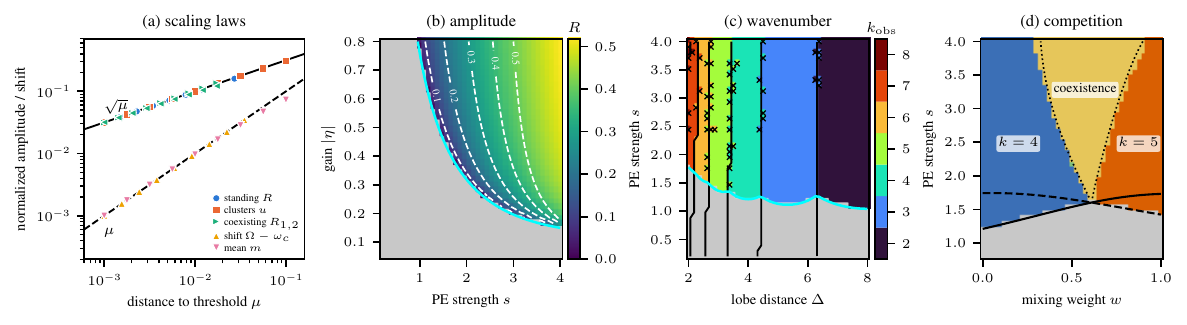}
\caption{\textbf{Prediction of the amplitudes of representations}. Lines: theory, colors: simulation, gray: collapsed state, cyan line: threshold $\max_q|J(q)|=1$. (a): Amplitude scaling laws for multiple patterns versus $\mu$. (b): Amplitude heatmap of the traveling wave, in the $(|\eta|,s)$ map; white dashed: predicted $R_\star$. (c) Critical mode $k_\mathrm{obs}$ in the $(\Delta,s)$ map; black: theoretically predicted mode transitions ($\times$: nonlinear selection differs); (d): Competition and coexistence of modes $4$ (solid), $5$ (dashed) in $(w,s)$ with winner-takes-all exchange lines of Thm.~\ref{prop:mode_competition} (dotted).}
\label{fig:phase_structure}
\end{figure}

\section{Experiments}
We first validate the theoretical results of Secs.~\ref{sec:linear} and \ref{sec:amplitude}, and then provide practical evidence that accessible representations can induce dynamical priors at initialization on stylized experiments.
\label{sec:experiments}
\subsection{The theory is quantitatively predictive}
\label{sec:validation}
First, we validate the theory of Sec.~\ref{sec:amplitude} by iterating the scalar map in Eq.~\ref{eq:scalar_map} on a ring of $N=32$ tokens, with a Transformer using $\phi = \tanh$, $OV$ gain $|\eta|$ and gaussian PE $b_r=sK_\Delta(r)$, where $s$ is the PE strength and $\Delta$ the distance between two peaks (details in App.~\ref{app:fig3}). We measure the dominant Fourier modes, their amplitudes, and, for directional kernels, their phase advance.

The scaling laws of Fig.~\ref{fig:phase_structure}(a) validate our theoretical results on the amplitude growth of multiple representations close to criticality.
Panel (b) confirms our predictions in Eq.~\ref{eq:nonlinear_wave_speed} on how the wave amplitude $R$ increases with gain $|\eta|$ and PE strength $s$. Panel (c) varies distance $\Delta$ between peaks (lobes) of the PE and tests whether the selected wavelength follows the theory $k_\mathrm{th} = \frac{N}{2\pi}\arg\max_q|J(q)|$. Finally, (d) tests two-mode competition for modes $k=4$ and $k=5$, when varying the preference for one mode or the other with mixing $w$. Our theory clearly distinguishes coexistence from winner-takes-all selection. Together, these results show that the theory predicts not only when collapse is avoided, but also the structure and dynamics of the pattern that replaces it.

\subsection{Engineering the prior on controlled tasks}
\label{sec:controlled_experiments}
Next, we test whether the dynamical prior we claim to provide can be aligned with a task, and if such an alignment helps the learning process, as stated in~\citep{abbe2022initial}. Each of the three tasks (Fig.~\ref{fig:inductive_prior_results}, left) targets one of the identified structures and probes its attribute: \textbf{T1} reconstructs periodic sequences of mode $q_0$, favoring $q_c=q_0$. \textbf{T2} translates a structured sequence by a prescribed shift, probing the phase $\theta$ of the mode. \textbf{T3} reconstructs under a prescribed feature-space rotation $\psi$, probing the eigenvector geometry of $J(q)$. More details are in App.~\ref{app:structured}. For each task, using identical architectures, parameter counts and gains, we initialize $4$ models that differ only in \emph{which} token--feature mode they amplify. We then compute the AUC of their learning curves of each $(\mathrm{task,init})$ tuple and report their $z$-scores.

The heatmaps of Fig.~\ref{fig:inductive_prior_results} (right) clearly indicate that task-aligned initializations benefit from a strong dynamical prior. It originates from engineering the architecture so that the critical mode at initialization is aligned with the problem at hand.
Supplementary experiments shown in Fig.~\ref{fig:efficiency} show that this practical gain occurs most markedly early in training.

\begin{figure}[t]
\centering
\includegraphics[width=0.8\textwidth]{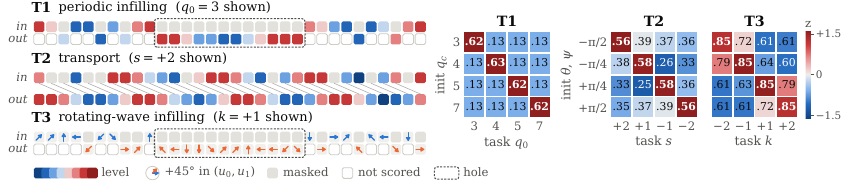}
\caption{\textbf{Task--initialization alignment.} Left: the periodic (T1), transport (T2) and rotation (T3) tasks (perturbed, partially masked targets). Right:  AUC for each (task, initialization) pair; the color corresponds to the $z$-score. Higher scores in the diagonal support the dynamical prior induced by mode criticality.}
\label{fig:inductive_prior_results}
\end{figure}

\begin{figure}[t]
\centering
\subfloat[]{\includegraphics[width=0.64\textwidth]{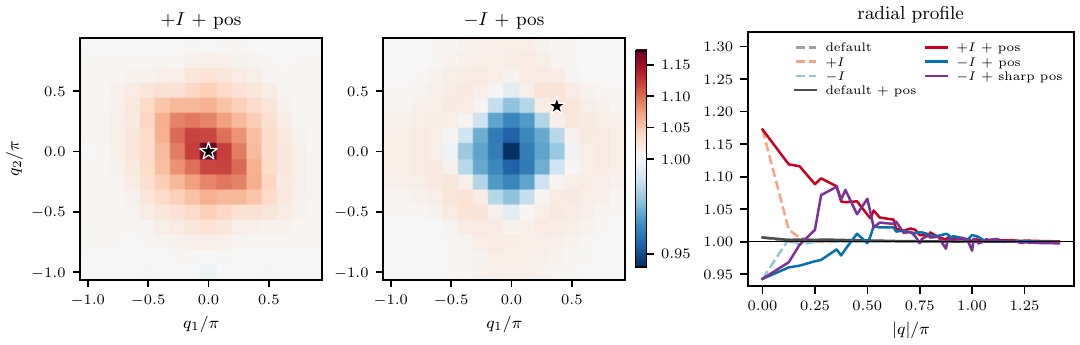}}%
\hspace{0.5em}
\subfloat[]{\raisebox{2em}{%

\begin{adjustbox}{max width = 0.28\textwidth}
\footnotesize\setlength{\tabcolsep}{2pt}%
\begin{tabular}[b]{lccc}
\toprule
Init. & Final Acc. & AUC $\uparrow$ & $t_{90}\downarrow$ \\
\midrule
default + pos
        & $89.6 \pm 0.4$
        & $78.0 \pm 0.3$
        & $92.0\ (1/3)$ \\
$+I$ + pos
        & $90.2 \pm 0.2$
        & $79.3 \pm 0.3$
        & $85.3 \pm 3.2$ \\
$-I$ + pos
        & $91.1 \pm 0.3$
        & $80.8 \pm 0.4$
        & $74.3 \pm 3.1$ \\
$-I$ + sharp pos
        & $\mathbf{91.5 \pm 0.2}$
        & $\mathbf{81.3 \pm 0.3}$
        & $\mathbf{69.0 \pm 1.7}$\\
\bottomrule
\end{tabular}\end{adjustbox}}}
\caption{\textbf{Dispersion-aware initialization of ConViT on CIFAR-10.} (a) Heatmap of mode gains and radial profile of the dispersion relation for the initializations variants (colours: mode stability; $\star$: leading mode). (b) Final Accuracy, AUC and $t_{90}$; all variants use localized PE, ``sharp'': sharpened kernel.}
\label{fig:dispersion_vision}
\end{figure}

\subsection{Dispersion-aware initialization of ConViT on CIFAR-10}
\label{sec:vision}
Finally, we train a weight-tied ConViT on CIFAR-10 (more details in App.~\ref{app:vision}) for classification and show that engineering mode amplification facilitates learning in practical architectures. On the left of Fig.~\ref{fig:dispersion_vision}(a), we plot the dispersion relation for various parameterizations: localized and sharpened PE~\citep{dascoli2021convit} variants shift amplification toward heterogeneous spatial modes. Reporting classification results in Fig.~\ref{fig:dispersion_vision}(b) shows that combining $-I$ with localized PE substantially accelerates optimization; sharpening the dispersion structure helps further ($91.5\%$ test accuracy). This explains why mimetic initialization's $OV = -I$~\citep{trockman2023mimetic} and ConViT's localized kernels~\citep{dascoli2021convit} help: together they move the leading multiplier of $J(\mathbf q)$ off $\mathbf q=0$. In further experiments we track the dispersion relation during training (Fig.~\ref{fig:app_vision_evolution}) which shows that heterogeneous patterns are favored. 

\section{Conclusion}
\label{sec:conclusion}
In this paper, we show that a randomly initialized Transformer is not structureless once it escapes rank collapse: its forward pass multiplies each token frequency by a matrix-valued gain $J(q)$, set by PE, $OV$ geometry and the FFN, that amplifies specific joint token--feature structures such as traveling and rotative waves. This dynamical prior refines the binary propagate-or-collapse picture into a mode-resolved one. The nonlinear theory says which structures saturate, drift or coexist, and aligning the prior with the task accelerates optimization and improves data efficiency.

\subsubsection*{Acknowledgments}
We would like to thank Julien Gaubil and Antoine Gu\'edon for valuable discussions. Parts of this work were supported by the ERC Consolidator Grant 101087347 (VEGA), as well as gifts from Ansys Inc.\ and Adobe Research.

\label{end:main}\typeout{ENDMAIN page=\thepage\space filled=\the\pagetotal\space of \the\vsize}

\subsection*{AI use statement}
We used a large language model and coding assistant for two tasks: (i) editing the
manuscript, i.e. tightening and restructuring prose and (ii) refining and formatting plots and figures. We
have reviewed all AI-assisted work and take full responsibility for the content
of this paper.

\bibliography{iclr2027_conference}
\bibliographystyle{iclr2027_conference}
\appendix

\section{Proofs for the matrix-valued linear theory}
\label{app:linear_proofs}

This section proves the linear results of Sec.~\ref{sec:linear}. Token states are
columns $x_i\in\mathbb R^d$, and a perturbation is denoted
$U=(u_1,\ldots,u_N)$. For a token-space vector $s\in\mathbb C^N$ and a
feature vector $v\in\mathbb C^d$, we write $s\otimes v$ for the perturbation
whose $i$th token is $s_i v$.

\subsection{Full version Theorem~\ref{thm:stability} and proof}

\begin{theorem}[Linear stability and token–feature block decomposition]
Let $\mathcal M:=\{\mathbf 1\otimes a:\ a\in\mathbb R^d\}$ denote the homogeneous states. $\mathcal M$ is invariant under any Transformer block: $\mathcal F(\mathbf 1\otimes a)=\mathbf 1\otimes g(a)$ with $g(a):=\mathcal H(a+Va)$ for a single head ($V\to\sum_hM_h$ for MHA). Let $X_\star=\mathbf 1\otimes a_\star\in\mathcal M$ and $U^+ := \mathcal F (X_\star + U) - \mathcal F (X_\star)$ the evolution of perturbation $U\in \mathbb R^N\otimes\mathbb R^d$ through a Transformer block. Then, to first order in $U$, the token-wise perturbation $u_i^+$ evolves as:
\begin{equation}
    u_i^+
    =
    C_\star\left[
        u_i+\sum_{j=1}^N(A_\star)_{ij}Vu_j
    \right],
\end{equation}
where $A_\star=A(X_\star)$ and $C_\star=D\mathcal H(y_\star)=I_d+D\phi(a_\star+Va_\star)$ is the Jacobian of the residual FFN branch at $y_\star=a_\star+Va_\star$ (the $\lambda=1$ block is $Dg(a_\star)=C_\star(I+V)$). Equivalently, the full linearized Jacobian is $\mathcal J_\star=(I_N\otimes C_\star) \left[I_N\otimes I_d+A_\star\otimes V\right].$ If $A_\star=S\Lambda S^{-1}$ is diagonalizable, with
$\Lambda=\operatorname{diag}(\lambda_1,\ldots,\lambda_N)$, then
\begin{equation}
    (S^{-1}\otimes I_d)\mathcal J_\star(S\otimes I_d)
    =
    \bigoplus_{k=1}^N J(\lambda_k),
    \qquad
    J(\lambda)=C_\star\left[I_d+\lambda V\right],
\end{equation}
and $\operatorname{sp}(\mathcal J_\star)=\bigcup_k\operatorname{sp}(J(\lambda_k))$ holds even without diagonalizability (Schur form). Hence, the transverse dynamics decompose into independent token-space blocks indexed by the eigenmodes of $A_\star$. If $a_\star$ is a fixed point of $g$, then $X_\star$ is a linearly stable fixed point of $\mathcal F$ if and only if
\begin{equation}
    \max_{\lambda\in\mathrm{sp}(A_\star)}
    \rho\!\left(J(\lambda)\right)<1.
    \label{eq:matrix_stability}
\end{equation}
Along an orbit $a_\ell=g^\ell(a_0)$ on $\mathcal M$, the transverse dynamics is the cocycle $U\mapsto\mathcal J(a_\ell)U$, which splits into the products $\prod_{\ell<L}J_\ell(\lambda_{k,\ell})$, $J_\ell(\lambda)=C(a_\ell)[I_d+\lambda V]$, whenever the $A(\mathbf 1 \otimes a_\ell)$ share an eigenbasis.
(Secs.~\ref{sec:pe}--\ref{sec:mha}).
\label{thm:stability_full}
\end{theorem}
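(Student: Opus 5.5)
The plan is to prove the claims in the order stated, by a direct first-order expansion of the block map $\mathcal F$ around $X_\star$, followed by Kronecker-product bookkeeping.

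\textbf{Invariance of $\mathcal M$.} At $X=\mathbf 1\otimes a$ every query--key score $d_h^{-1/2}\langle Q_h a,K_h a\rangle$ is the same constant for all $(i,j)$. It therefore cancels in the softmax, so $A_{ij}$ depends only on $b_{ij}$. Each row of $A$ sums to one, so $\sum_j A_{ij}Va=Va$ and token $i$ becomes $\mathcal H(a+Va)$. This gives $\mathcal F(\mathbf 1\otimes a)=\mathbf 1\otimes g(a)$; for MHA, $V$ is replaced by $\sum_h M_h$.

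\textbf{Linearization.} Write $X=X_\star+U$. The pre-FFN quantity is $y_i=x_i+\sum_j A_{ij}(X)Vx_j$, and its differential has two pieces. The first is $u_i+\sum_j (A_\star)_{ij}Vu_j$. The second comes from varying the attention weights: $\sum_j \delta A_{ij}\,Va_\star$. This term vanishes because $\sum_j\delta A_{ij}=0$: the rows of $A(X)$ sum to one for every $X$, so their first variation sums to zero. This cancellation is the key observation of the proof, and it is what makes the content (query/key) dependence invisible at linear order. Applying the chain rule through $\mathcal H$ at $y_\star=a_\star+Va_\star$ multiplies by $C_\star=I_d+D\phi(y_\star)$. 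In Kronecker form, with row-stacking of tokens, this is $\mathcal J_\star=(I_N\otimes C_\star)(I_N\otimes I_d+A_\star\otimes V)$. Setting $U=\mathbf 1\otimes v$ and using $A_\star\mathbf 1=\mathbf 1$ recovers the homogeneous block $Dg(a_\star)=C_\star(I+V)$, which checks the formula.

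\textbf{Block decomposition and spectrum.} Conjugating by $S\otimes I_d$ and using the mixed-product rule gives $(I_N\otimes C_\star)(I_N\otimes I_d+\Lambda\otimes V)$. This matrix is block diagonal with blocks $C_\star(I_d+\lambda_kV)=J(\lambda_k)$. When $A_\star$ is not diagonalizable, I would use a Schur form $A_\star=ST S^{-1}$ with $T$ upper triangular. The conjugated Jacobian is then block upper triangular with diagonal blocks $J(T_{kk})$, so its spectrum is the union of the spectra of the diagonal blocks.

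\textbf{Stability and orbits.} Linear stability of the fixed point $X_\star$ (a fixed point because $g(a_\star)=a_\star$) is equivalent to $\rho(\mathcal J_\star)<1$. By the spectral union above, this is exactly condition~\eqref{eq:matrix_stability}. Along an orbit, the same computation at each $a_\ell$ gives $\mathcal J(a_\ell)$. If all $A(\mathbf 1\otimes a_\ell)$ are diagonalized by a common $S$, the same conjugation block-diagonalizes every factor at once, and the product splits into $\prod_\ell J_\ell(\lambda_{k,\ell})$.

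\textbf{Main obstacle.} The only delicate point is handling the attention-variation term rigorously, namely confirming that the content dependence really drops out at first order. I would try the row-sum argument first. It requires only smoothness of the softmax and the identity $\sum_j A_{ij}\equiv1$, and does not need an explicit computation of $\partial A/\partial X$.
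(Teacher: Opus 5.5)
Your proposal is correct and follows essentially the same route as the paper. The steps match: invariance from row-stochasticity, then the key cancellation $DA_{X_\star}[U]\mathbf 1=0$, which kills the attention-variation term against the common value $Va_\star$, then the chain rule through $\mathcal H$, conjugation by $S\otimes I_d$ (Schur form in the non-diagonalizable case), and simultaneous block-diagonalization along orbits. The only cosmetic difference is that you also use cancellation of the constant content logit to get invariance; this is true but unnecessary, since $A(X)\mathbf 1=\mathbf 1$ alone gives $\sum_j A_{ij}Va=Va$ for any positional bias or mask.
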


Its proof is below.

\begin{proof}
Write the effective one-head attention output as
\begin{equation}
    \mathcal A_i(X)
    :=\sum_{j=1}^N A_{ij}(X)Vx_j,
\end{equation}
where, in the one-head setting, any output projection can be absorbed into
$V$. Let
\begin{equation}
    y_i(X):=x_i+\mathcal A_i(X),
    \qquad
    [\mathcal F(X)]_i=\mathcal H(y_i(X)).
\end{equation}
Every row of $A(X)$ sums to one for every $X$:
\begin{equation}
    A(X)\mathbf1=\mathbf1.
\end{equation}
Consequently, at a homogeneous state $X=\mathbf1\otimes a$ all value vectors coincide and $\mathcal A_i(\mathbf1\otimes a)=Va$ for every $i$, whatever the positional bias or mask; since $\mathcal H$ is token-wise, $\mathcal F(\mathbf1\otimes a)=\mathbf1\otimes g(a)$ with $g(a)=\mathcal H(a+Va)$, which proves the invariance of $\mathcal M$. Now fix $X_\star=\mathbf1\otimes a_\star\in\mathcal M$, not necessarily a fixed point. Differentiating the row-sum identity at $X_\star$ in an arbitrary direction $U$ gives
\begin{equation}
    DA_{X_\star}[U]\mathbf1=0.
    \label{eq:app_rowstoch_derivative}
\end{equation}

We now differentiate the attention output. By the product rule,
\begin{align}
D\mathcal A_i(X_\star)[U]
&=
\sum_j\big(DA_{X_\star}[U]\big)_{ij}Va_\star
+\sum_j(A_\star)_{ij}Vu_j \\
&=
\left[\sum_j\big(DA_{X_\star}[U]\big)_{ij}\right]Va_\star
+\sum_j(A_\star)_{ij}Vu_j \\
&=
\sum_j(A_\star)_{ij}Vu_j,
\label{eq:app_singlehead_attention_derivative}
\end{align}
where the first term vanishes by Eq.~\eqref{eq:app_rowstoch_derivative}.
Thus first-order changes in the attention weights do not act on the
homogeneous value vector.

Let $y_\star=a_\star+Va_\star$ denote the common value of $y_i(X_\star)$ and set
\begin{equation}
    C_\star:=D\mathcal H(y_\star)=I+D\phi(y_\star).
\end{equation}
Since $\mathcal H$ acts token-wise, the chain rule and
Eq.~\eqref{eq:app_singlehead_attention_derivative} yield
\begin{equation}
    u_i^+
    =
    C_\star\left[
        u_i+\sum_{j=1}^N(A_\star)_{ij}Vu_j
    \right],
\end{equation}
which is the first claim.

Viewing $U\in\mathbb R^N\otimes\mathbb R^d$, this is equivalently
\begin{equation}
    \mathcal J_\star
    =
    (I_N\otimes C_\star)
    \left[
        I_N\otimes I_d+A_\star\otimes V
    \right].
\end{equation}
If $A_\star=S\Lambda S^{-1}$ with
$\Lambda=\operatorname{diag}(\lambda_1,\ldots,\lambda_N)$, then
\begin{align}
&(S^{-1}\otimes I_d)\mathcal J_\star(S\otimes I_d) \\
&\qquad=
(I_N\otimes C_\star)
\left[
I_N\otimes I_d+\Lambda\otimes V
\right]
=
\bigoplus_{k=1}^N
C_\star(I_d+\lambda_kV).
\end{align}
Hence the token eigenmodes of $A_\star$ define invariant $d$-dimensional
feature blocks
\begin{equation}
    J(\lambda_k)=C_\star(I_d+\lambda_kV).
\end{equation}
If $A_\star$ is not diagonalizable, take a Schur decomposition
$A_\star=STS^{*}$ with $T$ upper triangular:
$(S^{*}\otimes I_d)\mathcal J_\star(S\otimes I_d)=(I_N\otimes C_\star)[I_N\otimes I_d+T\otimes V]$
is block upper triangular with diagonal blocks $J(T_{kk})$, so
$\operatorname{sp}(\mathcal J_\star)=\bigcup_k\operatorname{sp}(J(\lambda_k))$
in all cases. Therefore
\begin{equation}
    \rho(\mathcal J_\star)
    =
    \max_{\lambda\in\operatorname{sp}(A_\star)}
    \rho\!\left(J(\lambda)\right).
\end{equation}
When $a_\star$ is a fixed point of $g$, $X_\star$ is a fixed point of
$\mathcal F$, and for a discrete-time map asymptotic linear stability is
equivalent to all multipliers of the Jacobian lying strictly inside the unit
circle; the stability criterion is precisely Eq.~\eqref{eq:matrix_stability}.
Along an orbit $a_\ell=g^\ell(a_0)$ the same computation at each
$X_{a_\ell}$ gives
$U^{(\ell+1)}=\mathcal J(a_\ell)U^{(\ell)}+O(\|U^{(\ell)}\|^2)$ with
$\mathcal J(a_\ell)=(I_N\otimes C(a_\ell))[I_N\otimes I_d+A(a_\ell)\otimes V]$;
if the matrices $A(a_\ell)$ share the eigenbasis $S$, conjugating by
$S\otimes I_d$ block-diagonalizes every factor simultaneously and the
$L$-step propagator is $\bigoplus_k\prod_{\ell<L}J_\ell(\lambda_{k,\ell})$,
whose growth rate is the transverse Lyapunov exponent stated in the theorem.
\end{proof}

\subsection{Proof of Proposition~\ref{prop:no_position}}

\begin{proof}
With one head and $b_{ij}=0$, the content logit is identical for every
pair $(i,j)$ at a homogeneous state, because all queries are equal and all
keys are equal. Each row of the softmax is therefore uniform:
\begin{equation}
    A_\star=\frac1N\mathbf1\mathbf1^\top.
\end{equation}
The vector $\mathbf1$ is an eigenvector with eigenvalue $1$, whereas every
$x\in\mathbf1^\perp$ satisfies $A_\star x=0$. Thus
\begin{equation}
    \mathbb R^N
    =
    \operatorname{span}\{\mathbf1\}
    \oplus
    \mathbf1^\perp
\end{equation}
is the required invariant decomposition. Substituting the two token-space
eigenvalues $\lambda=1$ and $\lambda=0$ into
Theorem~\ref{thm:stability} gives
\begin{equation}
    J(0)=C_\star(I+V),
    \qquad
    J(q)=C_\star
    \quad(q\neq0),
\end{equation}
which is Eq.~\eqref{eq:no_pe_vector_dispersion}. Hence every nonzero token
frequency has the same feature-space multipliers.
\end{proof}

\subsection{Proof of Proposition~\ref{prop:pe_dispersion}}

\begin{proof}
Assume periodic relative positional logits with
$b_{ij}=b_{i-j}$. At a homogeneous state the content part of the one-head
logit is independent of $i,j$, and therefore cancels between numerator and
denominator of the row softmax. Consequently
\begin{equation}
    (A_\star)_{ij}=\Att_{i-j},
    \qquad
    \Att_r=\frac{e^{b_r}}{\sum_s e^{b_s}},
\end{equation}
so $A_\star$ is circulant.

For the Fourier vector $(e_q)_j=e^{\im qj}$ and the convention
$r=i-j$, we obtain
\begin{align}
    (A_\star e_q)_i
    &=
    \sum_j \Att_{i-j}e^{\im qj}\\
    &=
    e^{\im qi}\sum_r \Att_re^{-\im qr}.
\end{align}
Thus
\begin{equation}
    A_\star e_q=\lambda(q)e_q,
    \qquad
    \lambda(q)=\sum_r \Att_re^{-\im qr}.
\end{equation}
On the $N$-point periodic grid the distinct frequencies are
$q=2\pi k/N$, $k=0,\ldots,N-1$.

Applying Theorem~\ref{thm:stability} to the token eigenvalue
$\lambda(q)$ gives, for $U=e_q\otimes v$,
\begin{equation}
    U^+=e_q\otimes J(q)v,
    \qquad
    J(q)=C_\star[I+\lambda(q)V],
\end{equation}
Since the discrete Fourier transform is
invertible, the full Jacobian is similar over $\mathbb C$ to
$\operatorname{diag}_qJ(q)$ and therefore
\begin{equation}
    \rho(\mathcal J_\star)=\max_q\rho(J(q)).
\end{equation}
This gives the stability condition of Theorem~\ref{thm:stability_full}. If $\lambda(q)$ varies with
$q$, then different token frequencies see different feature-space blocks,
so the heterogeneous frequency degeneracy is lifted.
\end{proof}

\subsection{Proof of Proposition~\ref{prop:singlehead_spectral_edges}}

\begin{proof}
Under the scalar specialization $C_\star=cI$, $V=\eta I$,
\begin{equation}
    J(q)=c[1+\eta\lambda(q)]I,
\end{equation}
and hence
\begin{equation}
    \rho(J(q))=c|1+\eta\lambda(q)|.
\end{equation}
Because $\lambda(q)$ is real and monotone on $[0,\pi]$, its image is the
interval with endpoints $\lambda(0)$ and $\lambda(\pi)$. Define
\begin{equation}
    g(x):=|1+\eta x|.
\end{equation}
The function $g$ is convex. Therefore its maximum over any compact interval
is attained at an endpoint, which gives
\begin{equation}
    \max_{q\in[0,\pi]}\rho(J(q))
    =
    c\max\left\{
       |1+\eta\lambda(0)|,\,
       |1+\eta\lambda(\pi)|
    \right\}.
    \label{eq:app_singlehead_edge_max}
\end{equation}
Thus an isolated interior frequency cannot be a strict maximizer. If
$\eta\neq0$ and $\lambda(q)$ is strictly monotone, no interior point can
tie an endpoint maximum, and
\begin{equation}
    \arg\max_{q\in[0,\pi]}\rho(J(q))
    \subseteq\{0,\pi\},
\end{equation}
as stated in the non-degenerate case. For a general $V$ with $C_\star=cI$, the eigenvalues of $J(q)$ are $c(1+\nu\lambda(q))$, $\nu\in\mathrm{sp}(V)$, so $\rho(J(q))=c\max_\nu|1+\nu\lambda(q)|$ is a maximum of convex functions of $\lambda(q)$, hence convex, and the same endpoint argument applies.
\end{proof}

\subsection{Proof of Theorem~\ref{thm:TMDR}}

\begin{proof}
For head $h$, write the post-output-projection contribution as
\begin{equation}
    \mathcal A_i^{(h)}(X)
    :=
    \sum_{j=1}^N A_{ij}^{(h)}(X)M_hx_j,
    \qquad
    M_h=O_hV_h.
\end{equation}
The full pre-FFN residual state is
\begin{equation}
    y_i(X)
    =
    x_i+\sum_{h=1}^H\mathcal A_i^{(h)}(X).
\end{equation}
As in the proof of Theorem~\ref{thm:stability},
row-stochasticity gives
\begin{equation}
    DA_{X_\star}^{(h)}[U]\mathbf1=0.
\end{equation}
Since the value vector $M_ha_\star$ is the same at every token, the
derivative of the $h$th head is therefore
\begin{equation}
D\mathcal A_i^{(h)}(X_\star)[U]
=
\sum_{j=1}^N(A_{h,\star})_{ij}M_hu_j.
\end{equation}
With $C_\star=D\mathcal H(y_\star)$, the full linearization is
\begin{equation}
    u_i^+
    =
    C_\star\left[
        u_i+\sum_{h=1}^H\sum_{j=1}^N
        (A_{h,\star})_{ij}M_hu_j
    \right].
    \label{eq:app_full_vector_linearization}
\end{equation}

For translation-invariant relative positional logits, the content logit of
each head is constant at $X_\star$ and cancels from the row softmax.
Therefore every $A_{h,\star}$ is circulant. Write
\begin{equation}
    A_{h,\star}e_q=\lambda_h(q)e_q.
\end{equation}
For the separated perturbation $U=e_q\otimes v$,
Eq.~\eqref{eq:app_full_vector_linearization} becomes
\begin{align}
    u_i^+
    &=
    e^{\im qi}
    C_\star
    \left[
        I+\sum_{h=1}^H\lambda_h(q)M_h
    \right]v.
\end{align}
Thus the feature-space block in Fourier sector $q$ is
\begin{equation}
    J(q)
    =
    C_\star
    \left[
        I+\sum_{h=1}^H\lambda_h(q)M_h
    \right]
    =
    C_\star[I+\mathcal D(q)],
\end{equation}
which is Eq.~\eqref{eq:TMDR}. Since the Fourier transform block-diagonalizes
all circulant $A_{h,\star}$ simultaneously, the full Jacobian is similar to
$\operatorname{diag}_qJ(q)$ and
\begin{equation}
    \rho(\mathcal J_\star)=\max_q\rho(J(q)).
\end{equation}
Hence a homogeneous fixed point is linearly stable exactly when
$\max_q\rho(J(q))<1$. Along an orbit $a_\ell$ on $\mathcal M$ every
$A_h(a_\ell)$ is circulant, since translation invariance holds at each
homogeneous state, so the Fourier basis diagonalizes all layers
simultaneously and the $L$-layer propagator of mode $q$ is
$\prod_{\ell<L}J_\ell(q)$ with
$J_\ell(q)=C(a_\ell)[I+\sum_h\lambda_h(q;a_\ell)M_h]$.
\end{proof}

\subsection{Proof of Proposition~\ref{prop:catalogue}}

\begin{proof}
Let $v_c$ be an eigenvector of the critical block:
\begin{equation}
    J(q_c)v_c=\Lambda_cv_c.
\end{equation}
Then $e_{q_c}\otimes v_c$ is an eigenvector of the full complexified
Jacobian. Because the Transformer parameters are real,
\begin{equation}
    J(-q_c)=\overline{J(q_c)},
\end{equation}
so $e_{-q_c}\otimes\overline{v_c}$ is the conjugate mode with multiplier
$\overline{\Lambda_c}$.

A real perturbation in the corresponding two-dimensional real invariant
subspace can be written
\begin{equation}
    u_j^{(0)}
    =
    B_0v_ce^{\im q_cj}
    +
    \overline{B_0v_ce^{\im q_cj}}.
\end{equation}
After $\ell$ applications of the linearized block,
\begin{equation}
    u_j^{(\ell)}
    =
    B_0\Lambda_c^\ell v_ce^{\im q_cj}
    +
    \overline{
    B_0\Lambda_c^\ell v_ce^{\im q_cj}},
\end{equation}
which is Eq.~\eqref{eq:joint_mode}.

Now write
\begin{equation}
    \Lambda_c=e^{\gamma_c+\im\omega_c},
    \qquad
    B_0=|B_0|e^{\im\theta_0},
    \qquad
    v_c=a+\im b.
\end{equation}
Taking the real part explicitly gives
\begin{equation}
    u_j^{(\ell)}
    =
    2|B_0|e^{\ell\gamma_c}
    \left[
        a\cos\Theta_{j\ell}
        -
        b\sin\Theta_{j\ell}
    \right],
    \qquad
    \Theta_{j\ell}
    =
    q_cj+\ell\omega_c+\theta_0,
\end{equation}
which is Eq.~\eqref{eq:vector_wave_form} with the amplitude written
explicitly as $|B_0|$. The standing, traveling, feature-rotating, and
mixed cases are obtained by specializing $q_c$, $\omega_c$, and the real
and imaginary parts of $v_c$.
\end{proof}

\subsection{Open boundaries: bulk equivalence}
\label{app:bulk}

Propositions~\ref{prop:pe_dispersion}--\ref{prop:catalogue} use periodic
boundary conditions. The following statement makes precise the sense in
which they describe the bulk of a sequence with open boundaries.

\begin{proposition}[Bulk equivalence]
\label{prop:bulk}
Let the positional kernel have finite range $\xi$, i.e.\ $\Att_r=0$ for
$|r|>\xi$, and let $\mathcal F^{\rm o}$ and $\mathcal F^{\rm p}$ denote the
block with open and periodic boundary conditions on $N$ tokens. Then for
every state $X$ and every $i$ with $\xi<i\le N-\xi$,
$[\mathcal F^{\rm o}(X)]_i=[\mathcal F^{\rm p}(X)]_i$. Consequently, for
every $X$ and every $L\ge1$,
\begin{equation}
    [(\mathcal F^{\rm o})^L(X)]_i=[(\mathcal F^{\rm p})^L(X)]_i
    \qquad\text{for all } L\xi<i\le N-L\xi,
\end{equation}
and the same holds for the linearizations $\mathcal J^{\rm o}$,
$\mathcal J^{\rm p}$ at any homogeneous state.
\end{proposition}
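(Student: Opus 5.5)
The plan is a locality (light-cone) argument. The first step is to show that one application of the block at token $i$ depends only on tokens $j$ with $|i-j|\le\xi$, and depends on them through the same formula under open and periodic boundaries whenever $\xi<i\le N-\xi$. The finite-range hypothesis is read at the level of logits: $b_{ij}=-\infty$ (a mask) for $|i-j|>\xi$. Then, for any state $X$, the softmax row $A_{i\cdot}(X)$ is supported on $\{j:|i-j|\le\xi\}$.

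For a bulk index $i$, this window $\{i-\xi,\dots,i+\xi\}$ lies inside $\{1,\dots,N\}$, so the wrap-around in the periodic case never matters. Concretely, the periodic distance $\min(|i-j|,N-|i-j|)$ coincides with $|i-j|$ for every $j$ in the window. Every $j$ outside the window has periodic distance larger than $\xi$, provided $N>2\xi+1$ (assumed implicitly, otherwise the bulk is empty or trivial). Hence the numerators and the normalizing sums in Eq.~\ref{eq:standard_attention} agree term by term, so $A^{\rm o}_{ij}(X)=A^{\rm p}_{ij}(X)$ for all $j$. This holds for every head, since each head's logits are masked the same way.

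Because $\mathrm{MHA}_i$ is $\sum_h\sum_j A^{(h)}_{ij}M_hx_j$ and $\mathcal H$ acts token-wise, we get $[\mathcal F^{\rm o}(X)]_i=[\mathcal F^{\rm p}(X)]_i$. This gives the one-step claim.

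For $L$ layers I would prove by induction a slightly stronger statement: $[(\mathcal F^{\rm o})^L(X)]_i$ and $[(\mathcal F^{\rm p})^L(X)]_i$ are the same function $\Phi_L$ of the window $(x_{i-L\xi},\dots,x_{i+L\xi})$ whenever $L\xi<i\le N-L\xi$. For the inductive step from $L$ to $L+1$, note that token $i$ at step $L+1$ is obtained by applying the common one-step local rule to tokens $j$ with $|j-i|\le\xi$ at step $L$. Those tokens satisfy $L\xi<j\le N-L\xi$ whenever $(L+1)\xi<i\le N-(L+1)\xi$, so by the induction hypothesis they agree in both systems. The one-step rule at $i$ is itself boundary-independent, since $i$ is in the one-step bulk. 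Equality of all iterates on the shrinking bulk follows.

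For the linearizations, the same argument applies to the derivative. At any state, $\partial[\mathcal F(X)]_i/\partial x_j$ vanishes for $|i-j|>\xi$. For bulk $i$, the rows of $\mathcal J^{\rm o}$ and $\mathcal J^{\rm p}$ coincide, because they are derivatives of equal local functions of the same window, evaluated at the same homogeneous state. The $L$-fold products then agree on rows $L\xi<i\le N-L\xi$ by the identical induction, using the banded structure.

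The main obstacle is interpreting ``finite range'' correctly. If instead one only assumes $\Att_r=0$ at the homogeneous state with finite logits, then off-state attention weights are merely small, not zero. In that case the exact identity fails, and it would have to be replaced by an estimate decaying in the distance to the boundary. I would therefore state explicitly that the bias is a hard mask ($b_r=-\infty$ for $|r|>\xi$), which makes $\Att_r=0$ hold at every state, and check that the softmax normalizers then agree exactly.
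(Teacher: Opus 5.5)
Your proposal is correct and follows essentially the same route as the paper. Both arguments rest on one-step locality: row $i$ of every head is supported on the window $|i-j|\le\xi$, and for bulk $i$ it has identical logits and normalizer under both boundary conditions. The iterates then follow by induction on $L$, using that the window of an index $i$ with $L\xi<i\le N-L\xi$ lies in the $(L-1)$-bulk, and the linearized statement follows by differentiating.

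Your hard-mask reading is not an extra assumption: since $\Att_r=\euler^{b_r}/\sum_s\euler^{b_s}$ with finite content logits, the hypothesis $\Att_r=0$ already forces $b_r=-\infty$. Likewise the side condition $N>2\xi+1$ is unnecessary, because $\xi<i\le N-\xi$ alone gives $N-|i-j|\ge\xi+1$ for every $j\in\{1,\dots,N\}$.
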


\begin{proof}
Row $i$ of the attention involves only the tokens $j$ with
$\Att_{i-j}\neq0$, i.e.\ $|i-j|\le\xi$. For $\xi<i\le N-\xi$ these are the
same tokens in both settings, with the same content and positional logits
and hence the same softmax normalizer, and no wrap-around index occurs.
Since the FFN is token-wise,
$[\mathcal F^{\rm o}(X)]_i=[\mathcal F^{\rm p}(X)]_i$ for such $i$. For
$L\ge2$, $[\mathcal F^{\rm o}(X^{\rm o})]_i$ with $L\xi<i\le N-L\xi$
depends only on the tokens $X^{\rm o}_j$ with $|i-j|\le\xi$, all of which
satisfy $(L-1)\xi<j\le N-(L-1)\xi$ and therefore coincide with
$X^{\rm p}_j$ by induction. The statement for the linearizations follows
by differentiating.
\end{proof}

The dispersion relation is therefore a bulk property, exact on a core of
$N-2L\xi$ tokens. For kernels with exponential tails (ALiBi, Gaussian),
truncating at range $\xi$ changes each attention row by at most the tail
mass $\delta(\xi)$ in total variation, and the $L$-step discrepancy is
$O\big(L\,\delta(\xi)\max_q\rho(J(q))^L\big)$, negligible once $\xi$ is a
few decay lengths. Two remarks. First, for reflection-symmetric kernels the
open-boundary attention matrix is $A^{\rm o}=D^{-1}T$ with $T$ symmetric
Toeplitz and $D$ the diagonal of row normalizers; it is similar to the
symmetric matrix $D^{-1/2}TD^{-1/2}$, so its spectrum is real and close to
the circulant symbol for localized kernels. For directional kernels
$A^{\rm o}$ is a non-normal Toeplitz matrix whose eigenvalues do not
converge to the circulant symbol~\citep{trefethen2005spectra}: the periodic
dispersion relation governs finite-depth growth but not the $L\to\infty$
spectrum, which is the distinction between convective and absolute
instability in open flows~\citep{chomaz2005global}. Second, the argument
applies verbatim to causal attention with a local kernel: the rows $i>\xi$
are those of the circulant built from the one-sided kernel
$\Att_r\mathbb 1_{r\ge0}$, whose symbol is complex, so causal masking enters
the theory as an extreme directional kernel.

\subsection{RoPE on the homogeneous manifold}
\label{app:rope}
\begin{remark}[Homogeneous manifold, RoPE, LayerNorm and untied weights]
\label{rem:manifold}
(i) Theorems~\ref{thm:stability} and~\ref{thm:TMDR} hold at every point of $\mathcal M$: along an orbit only $C(a_\ell)$ (and, for RoPE, the kernel) varies, and finite-depth growth is governed by $\prod_\ell J_\ell(q)$, which is what Sec.~\ref{app:vision_growth_validation} measures. (ii) RoPE is not an additive bias, but at $X_a$ its logit is $a^\top Q^\top R_{j-i}Ka$: a translation-invariant, generally non-symmetric kernel of size $O(\|a\|^2)$, so Theorem~\ref{thm:TMDR} applies with an $a$-dependent symbol $\lambda(q;a)$, invisible at $a=0$. (iii) LayerNorm is pointwise: with pre-LN the block structure is preserved with $\mathcal D(q)\to\mathcal D(q)\,D\mathrm{LN}(a)$ and $C_\star=I+D\phi(\mathrm{LN}(y_\star))\,D\mathrm{LN}(y_\star)$, which requires $a\neq0$ (App.~\ref{app:layernorm}).
\end{remark}

Rotary encoding replaces the additive bias by position-dependent rotations
of queries and keys: the logit is
$d_h^{-1/2}\langle R_iQx_i,R_jKx_j\rangle=d_h^{-1/2}x_i^\top Q^\top R_{j-i}Kx_j$,
where $R_r=\bigoplus_kR(\theta_kr)$ is block-diagonal with $2\times2$
rotations at frequencies $\theta_1,\dots,\theta_{d_h/2}$ and
$R_i^\top R_j=R_{j-i}$. At a homogeneous state $X_a$ the logit equals
\begin{equation}
    b_r(a):=d_h^{-1/2}\,a^\top Q^\top R_{r}Ka
    =d_h^{-1/2}\sum_{k}\Big[\langle p_k,k_k\rangle\cos(\theta_kr)
      +(p_k\times k_k)\sin(\theta_kr)\Big],
    \qquad r=j-i,
\end{equation}
where $p_k,k_k\in\mathbb R^2$ are the $k$th $2$-blocks of $Qa$ and $Ka$
and $p\times k=p_1k_2-p_2k_1$. Hence $b_r(a)$ is translation invariant,
the attention at $X_a$ is circulant (on the ring, for
$\theta_k\in\frac{2\pi}N\mathbb Z$; otherwise up to the boundary effects of
App.~\ref{app:bulk}), and Theorem~\ref{thm:TMDR} applies with the
$a$-dependent symbol $\lambda(q;a)$, the DFT of
$\Att_r(a)\propto\euler^{b_r(a)}$; the proof is unchanged since the values
$Vx_j$ are not rotated. Three properties follow. (i) $b_r(a)=O(\|a\|^2)$:
at $a=0$ RoPE yields uniform attention and no wavelength selection, which
is why the base state must be taken on $\mathcal M$ rather than at the
origin. (ii) $b_r(a)$ is reflection symmetric only if all $p_k\times k_k$
vanish; generically it is directional, $\lambda(q;a)$ is complex, and RoPE
supports traveling modes with a single head. (iii) $b_r(a)$ is not
monotone: if one block dominates,
$\Att_r(a)\propto\euler^{\rho\cos(\theta_kr-\varphi)}$ is a periodic (von
Mises) kernel of period $2\pi/\theta_k$, whose DFT is supported on the
multiples of $\theta_k$ with $\lambda(\pm\theta_k;a)=\euler^{\mp\im\varphi}I_1(\rho)/I_0(\rho)$
dominating the non-zero frequencies ($I_n$ the modified Bessel functions);
for instance, for $\eta<0$ and $\varphi=\pi$ the leading mode of
$|L_q|=|1+\eta\lambda(q;a)|$ is the intermediate wavelength
$q_c=\pm\theta_k$, set jointly by the rotary frequency and the content
$a$. Proposition~\ref{prop:singlehead_spectral_edges} does not apply
because its monotonicity hypothesis fails.

\subsection{LayerNorm}
\label{app:layernorm}

Consider the pre-LN block
$[\mathcal F(X)]_i=\mathcal H\big(x_i+\sum_h\sum_jA^{(h)}_{ij}(\mathrm{LN}(X))M_h\mathrm{LN}(x_j)\big)$,
$\mathcal H(y)=y+\phi(\mathrm{LN}(y))$, with
$\mathrm{LN}(x)=\gamma\circ z(x)+\beta$,
$z(x)=(x-\mu(x)\mathbf 1_d)/\sigma(x)$. Since LN is token-wise,
$\mathrm{LN}(X_a)$ is homogeneous, so at $X_a$ the values
$M_h\mathrm{LN}(a)$ coincide across tokens and the content logits are
constant: the two facts used in the proofs of
Theorems~\ref{thm:stability} and~\ref{thm:TMDR}. Repeating those proofs
with the chain rule gives, for $\sigma(a)>0$,
\begin{align}
    J(q)&=C_\star\Big[I+\sum_h\lambda_h(q)M_hN_1\Big],
    \qquad
    N_1=D\mathrm{LN}(a),\nonumber\\
    C_\star&=I+D\phi(\mathrm{LN}(y_\star))\,D\mathrm{LN}(y_\star),
    \qquad
    y_\star=a+\sum_hM_h\mathrm{LN}(a),
\end{align}
with
\begin{equation}
    D\mathrm{LN}(x)=\frac{1}{\sigma(x)}\operatorname{diag}(\gamma)
    \Big[P-\frac{z(x)z(x)^\top}{d}\Big],
    \qquad P=I-\tfrac1d\mathbf 1_d\mathbf 1_d^\top,
\end{equation}
where $P-zz^\top/d$ is the orthogonal projector onto
$\{\mathbf 1_d,z(x)\}^\perp$. Thus LayerNorm preserves the Fourier block
structure and only inserts the rank-$(d-2)$ factor $N_1$: it removes the
mean and scale directions from the feature dynamics and rescales the
coupling by $1/\sigma(a)$. Two consequences follow. First, LN is singular
at $a=0$, so the base state must be a point $a\neq0$ of $\mathcal M$.
Second, with pre-LN the increment
$g(a)-a=\sum_hM_h\mathrm{LN}(a)+\phi(\mathrm{LN}(y(a)))$ is bounded and,
for large $\|a\|$, depends on $a$ only through the direction of $Pa$; a
fixed point of $g$ is therefore not guaranteed, the residual norm
typically grows along depth, and the transverse coupling decays like
$1/\sigma(a_\ell)$, which flattens the dispersion relation with depth
(consistent with the reduced effect of positional sharpening under pre-LN
in Table~\ref{tab:app_vision_ablations}). Post-LN and normalization-free
pointwise substitutes such as DyT~\citep{zhu2025transformers} renormalize
the state, so fixed points on $\mathcal M$ exist and
Sec.~\ref{sec:amplitude} applies, with the Taylor coefficients of the
normalization contributing to $\alpha_2,\alpha_3$.

\subsection{Layer-dependent weights}
\label{app:untied}

Nothing in the proofs of Theorems~\ref{thm:stability} and~\ref{thm:TMDR}
uses that the block is the same at every layer. With layer-dependent
parameters $(Q_h^{(\ell)},K_h^{(\ell)},M_h^{(\ell)},\phi^{(\ell)})$ and
translation-invariant PE, each layer is circulant at any homogeneous
state, the Fourier basis diagonalizes all layers simultaneously, and the
$L$-layer transverse propagator of mode $q$ is $\prod_{\ell<L}J_\ell(q)$
with $J_\ell(q)=C^{(\ell)}(a_\ell)[I+\sum_h\lambda^{(\ell)}_h(q)M_h^{(\ell)}]$,
exactly as along an orbit of the tied block. Weight tying is used only to
speak of fixed points in Sec.~\ref{sec:amplitude}.

\section{Cubic expansion and minimal nonlinear closure}
\label{app:cubic}
All scalar nonlinear results in Sec.~\ref{sec:amplitude} start from the same local expansion of Eq.~\eqref{eq:scalar_map}. We derive it once, then use the linear sectors of Sec.~\ref{sec:linear} to determine which additional modes must be retained in each case. This makes the reduced ansatz a consequence of nonlinear mode generation rather than an independent assumption.

For the remainder estimates below we assume $\phi$ is $C^5$ in a neighborhood of the origin; the displayed coefficients depend only on its first three derivatives. Let $\Att_0:=A(0)$ and
\begin{equation}
L:=I+\eta\Att_0,
\qquad
J(q)=\alpha_1L_q,
\qquad
L_q:=1+\eta\lambda_q,
\end{equation}
where the $\lambda_q$ are the eigenvalues of $\Att_0$. For vectors, $x^{\circ m}$ denotes component-wise powers and $u\circ v$ the Hadamard product; $\Pi_q y$ denotes the Fourier coefficient of $y$ in sector $q$.

\begin{lemma}[Third-order expansion of the scalar Transformer map]
\label{lem:scalar_cubic_expansion}
For the scalar map in Eq.~\ref{eq:scalar_map} around $x=0$,
\begin{equation}
F(x)=\alpha_1Lx+\alpha_2(Lx)^{\circ2}+\alpha_3(Lx)^{\circ3}
      +\alpha_1\eta\chi\,\mathcal C(x)+O(\|x\|^4),
\label{eq:app_full_cubic_expansion}
\end{equation}
where, writing $x=\sum_q\hat x_qe_q$, the first-order term $\alpha_1Lx=\sum_q J(q)\,\hat x_q\,e_q$ is the linear map of Theorem~\ref{thm:stability}, $C_\star(I+A_\star V)$, with $C_\star=\alpha_1$, $V=\eta$ and $A_\star=\Att_0$; it acts on each Fourier mode through the scalar dispersion relation $J(q)$. The content-attention term is
\begin{equation}
\mathcal C(x):=x\circ\left[\Att_0(x^{\circ2})-(\Att_0x)^{\circ2}\right].
\label{eq:app_C_def}
\end{equation}
In particular, the state dependence of attention first enters at cubic order at the zero state.
\end{lemma}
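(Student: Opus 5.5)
The plan is a direct Taylor expansion in two stages: first the attention output $H(x)=x+\eta A(x)x$ to third order, then the pointwise FFN composed with it. The key structural fact is that $A(x)$ depends on $x$ only through the content logits $\chi x_ix_j$, which are quadratic in $x$. Since $A(x)x$ is multiplied by $x$, the state-dependence of attention can contribute to $H$ only at order $\ge 3$. This is the claim that state dependence enters first at cubic order.

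\textbf{Step 1: expand the softmax.} Write $A_{ij}(x)=\Att_{0,ij}\euler^{\chi x_ix_j}/Z_i$ with $Z_i=\sum_m\Att_{0,im}\euler^{\chi x_ix_m}$, where the $b$-normalization has been absorbed into $\Att_0$. To first order in the logits,
\[
A_{ij}(x)=\Att_{0,ij}\Big(1+\chi x_ix_j-\chi x_i\textstyle\sum_m\Att_{0,im}x_m\Big)+O(\|x\|^4),
\]
because the next softmax correction is quadratic in the logits, i.e.\ $O(\|x\|^4)$. Multiplying by $x_j$ and summing over $j$ gives
\[
(A(x)x)_i=(\Att_0x)_i+\chi x_i\big[(\Att_0x^{\circ2})_i-(\Att_0x)_i^2\big]+O(\|x\|^5).
\]
Hence $H(x)=Lx+\eta\chi\,\mathcal C(x)+O(\|x\|^5)$, and the cubic term is exactly the variance-type expression in Eq.~\eqref{eq:app_C_def}.

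\textbf{Step 2: compose with the FFN.} Write $F(x)=H+\nu\phi(H)$ and Taylor expand $\phi$ at $0$, using $\phi(0)=0$ and the $C^5$ assumption to control the remainder:
\[
\nu\phi(y)=(\alpha_1-1)y+\alpha_2y^{\circ2}+\alpha_3y^{\circ3}+O(\|y\|^4).
\]
This gives $F=\alpha_1H+\alpha_2H^{\circ2}+\alpha_3H^{\circ3}+O(\|x\|^4)$. Now substitute $H=Lx+\eta\chi\mathcal C(x)+\dots$. In $\alpha_1H$ the correction $\mathcal C$ survives and yields $\alpha_1\eta\chi\,\mathcal C(x)$. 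In $H^{\circ2}$ the cross term $Lx\circ\mathcal C$ is quartic, and in $H^{\circ3}$ the corrections are of order $\ge5$. So only $(Lx)^{\circ2}$ and $(Lx)^{\circ3}$ remain, which yields Eq.~\eqref{eq:app_full_cubic_expansion}.

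\textbf{Step 3: identify the linear term.} Since $A(0)=\Att_0$ is circulant, as in Proposition~\ref{prop:pe_dispersion}, it has eigenpairs $(\lambda_q,e_q)$. Therefore $\alpha_1Lx=\sum_q\alpha_1(1+\eta\lambda_q)\hat x_qe_q=\sum_qJ(q)\hat x_qe_q$. This agrees with Theorem~\ref{thm:stability} for $C_\star=\alpha_1=1+\nu\phi'(0)$, $V=\eta$, $a_\star=0$.

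The only delicate point is bookkeeping the remainder orders. One must check that the normalizer expansion produces no quadratic-order term, which holds because the row sums of $\Att_0$ equal one, and that the softmax remainder is uniformly $O(\|x\|^4)$ near $0$. Both follow from smoothness of the exponential on a bounded neighborhood.
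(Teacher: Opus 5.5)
Your proposal is correct and follows the paper's proof essentially step for step. You expand the row softmax $A_{ij}(x)=\Att_{0,ij}\euler^{\chi x_ix_j}/\sum_m\Att_{0,im}\euler^{\chi x_ix_m}$ to first order in the quadratic logits, multiply by $x_j$ and sum to get $H(x)=Lx+\eta\chi\,\mathcal C(x)+O(\|x\|^5)$, and then Taylor-expand the residual FFN, discarding the cross terms of order $\ge4$. One small slip in your closing remark: the normalizer $Z_i=1+\chi x_i(\Att_0x)_i+O(\|x\|^4)$ does contain an $O(\|x\|^2)$ term, and row-stochasticity of $\Att_0$ only gives $Z_i(0)=1$. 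The absence of a quadratic term in $A(x)x$ comes from the logits being quadratic, as you correctly say at the outset.
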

\begin{proof}
Row by row,
\begin{equation}
A_{ij}(x)=\frac{(\Att_0)_{ij}\euler^{\chi x_ix_j}}
                 {\sum_m(\Att_0)_{im}\euler^{\chi x_ix_m}}.
\end{equation}
Since $\chi x_ix_j=O(\|x\|^2)$, expansion of the exponential and row normalizer gives
\begin{equation}
A_{ij}(x)=(\Att_0)_{ij}\left[1+\chi x_ix_j
-\chi x_i\sum_m(\Att_0)_{im}x_m+O(\|x\|^4)\right].
\end{equation}
Multiplying by $x_j$ and summing,
\begin{equation}
A(x)x=\Att_0x+\chi\mathcal C(x)+O(\|x\|^5).
\label{eq:app_attention_cubic_scalar}
\end{equation}
Hence $H(x)=Lx+\eta\chi\mathcal C(x)+O(\|x\|^5)$. Taylor expansion of $F(x)=H(x)+\nu\phi(H(x))$ gives Eq.~\ref{eq:app_full_cubic_expansion}. The content logit is quadratic at $x=0$, so no attention-dependent quadratic term appears.
\end{proof}

\subsection{From linear sectors to the minimal nonlinear ansatz}
\label{app:center_manifold}
At linear order the Fourier sectors are independent: mode $q$ is multiplied by $J(q)$. Nonlinear products couple them by wave-vector addition. For a simple critical pair $\pm q_c$, write only its linear contribution as
\begin{equation}
x_c=B\,e_{q_c}+\overline B\,e_{-q_c}.
\end{equation}
The quadratic term in Lemma~\ref{lem:scalar_cubic_expansion} generates only
\begin{equation}
0,\quad \pm2q_c,
\end{equation}
whereas the direct cubic terms generate $\pm q_c$ and $\pm3q_c$. We call the pair \emph{non-resonant} if $4q_c\not\equiv0 \pmod{2\pi}$, so that $0$, $\pm2q_c$ and $\pm3q_c$ are distinct from $\pm q_c$ (this excludes $q_c=\pi$), and if $|J(0)|,|J(2q_c)|<1$. Under the non-resonance assumptions used below, the mean $m$ and second harmonic are linearly stable. To obtain the equation for $B$ through cubic order, it is therefore sufficient to retain their $O(|B|^2)$ response:
\begin{equation}
x_j=B\euler^{\im q_cj}+\overline B\euler^{-\im q_cj}
    +m+Z\euler^{2\im q_cj}+\overline Z\euler^{-2\im q_cj}
    +O(|B|^3),
\label{eq:minimal_one_mode_ansatz}
\end{equation}
where $Z$ is the quadratic amplitude factor. The third harmonic is indeed generated at $O(|B|^3)$, but its feedback into the critical sector occurs only beyond cubic order, so it does not need to be included in the minimal closure.

\section{Clusters: proof of Theorem~\ref{prop:cluster_amplitude}}
\label{app:cluster}
We first state the full version of Theorem~\ref{prop:cluster_amplitude}.
\begin{theorem}[Clusters, full version of Theorem~\ref{prop:cluster_amplitude}]
\label{thm:cluster_full}
Suppose there is no PE, $N$ is even and $\eta\in(-2,0)$, so that the homogeneous direction is stable, while the heterogeneous sector $\mathbf1^\perp$ loses stability through $J(q\neq0)=\alpha_1=1+\mu$. For a balanced two-cluster pattern $x=u\,s+m\mathbf1$, $s_i\in\{\pm1\}$ and $\sum_i s_i=0$, the only quadratic mode generated by $us$ is the homogeneous one. It is bound\footnote{\cite{haken1996slaving} refers to this phenomenon as ``slaving''.} to the cluster amplitude as
\begin{equation}
    m=-\frac{\alpha_2}{\eta}u^2+O(\mu u^2,u^4),
\end{equation}
and
\begin{equation}
u^+-u=\mu u-\beta_{\rm cl}u^3+O(\mu u^3,u^5),\qquad
\beta_{\rm cl}=-\eta\chi-\alpha_3+\frac{2(1+\eta)}{\eta}\alpha_2^2.
\label{eq:cluster_amplitude_full}
\end{equation}
If $\mu>0$ and $\beta_{\rm cl}>0$, then $u_\star^2=\mu/\beta_{\rm cl}+O(\mu^2)$ and the branch is stable along the cluster line. Intra-cluster perturbations have multipliers $1+\mu\pm2\alpha_2u_\star+O(\mu)$, so the balanced state is stable near threshold only if $\alpha_2=0$; in that case it is stable if $\alpha_3<0$ (and marginal at cubic order when $\alpha_3=0$).
\end{theorem}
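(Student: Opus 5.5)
The plan is to reduce the map of Eq.~\eqref{eq:scalar_map} exactly to the two-parameter family $x=u\,s+m\mathbf 1$, using the cubic expansion of Lemma~\ref{lem:scalar_cubic_expansion}. Without PE, Proposition~\ref{prop:no_position} gives $\Att_0=N^{-1}\mathbf 1\mathbf 1^\top$. Since $s_i^2=1$ and $\sum_is_i=0$, every quantity reduces to a few scalars:
\[
\Att_0x=m\mathbf 1,\qquad \Att_0x^{\circ2}=(u^2+m^2)\mathbf 1,\qquad \mathcal C(x)=u^2x,\qquad Lx=u\,s+(1+\eta)m\mathbf 1 .
\]
I will expand $(us+c\mathbf 1)^{\circ2}=u^2\mathbf 1+2uc\,s+c^2\mathbf 1$ and $(us+c\mathbf 1)^{\circ3}=(u^3+3uc^2)s+(3u^2c+c^3)\mathbf 1$ with $c=(1+\eta)m$. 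This shows that $\operatorname{span}\{s,\mathbf 1\}$ is invariant through cubic order. It also shows that the only quadratic mode generated by $us$ is the homogeneous one, coming from the term $\alpha_2u^2\mathbf 1$, which is the first claim.

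Next I will project onto the $\mathbf 1$ and $s$ components. The mean component obeys $m^+=\alpha_1(1+\eta)m+\alpha_2u^2+O(|m|u^2,u^4)$. Because $|\alpha_1(1+\eta)|<1$ near threshold for $\eta\in(-2,0)$, this direction is slaved, and a center-manifold / adiabatic-elimination argument in the spirit of App.~\ref{app:center_manifold} gives $m(1-\alpha_1(1+\eta))=\alpha_2u^2$. Using $\alpha_1=1+\mu$, this yields $m=-\alpha_2u^2/\eta+O(\mu u^2,u^4)$.

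The $s$ component obeys
\[
u^+=\alpha_1u+2\alpha_2(1+\eta)m\,u+\alpha_3u^3+\alpha_1\eta\chi u^3+\dots
\]
Substituting the slaved $m$ and replacing $\alpha_1\to1$ inside the cubic coefficients (the error is $O(\mu u^3)$) gives $u^+-u=\mu u-\beta_{\rm cl}u^3$ with
\[
\beta_{\rm cl}=-\eta\chi-\alpha_3+\frac{2(1+\eta)}{\eta}\alpha_2^2 .
\]
The fixed point is $u_\star^2=\mu/\beta_{\rm cl}$, and the multiplier along the cluster line is $1+\mu-3\beta_{\rm cl}u_\star^2=1-2\mu<1$.

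For intra-cluster stability I will linearize $F$ at $x_\star=u_\star s+m_\star\mathbf 1$ in a direction $\delta$ that has zero mean within each cluster. Such a $\delta$ lies in $\mathbf 1^\perp\cap s^\perp$, so $\Att_0\delta=0$ and $\Att_0(x_\star\circ\delta)=0$. The derivative then acts diagonally: $\delta_i\mapsto\big[\alpha_1+2\alpha_2u_\star s_i+(3\alpha_3+\eta\chi)u_\star^2+O(\mu)\big]\delta_i$. This gives the multipliers $1+\mu\pm2\alpha_2u_\star+O(\mu)$. Since $u_\star\sim\sqrt\mu\gg\mu$, one cluster is unstable whenever $\alpha_2\neq0$. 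When $\alpha_2=0$, the $O(\mu)$ term must be resolved exactly: $1+\mu+(3\alpha_3+\eta\chi)\mu/\beta_{\rm cl}=1+2\alpha_3\mu/\beta_{\rm cl}$. This is below $1$ iff $\alpha_3<0$, and marginal when $\alpha_3=0$.

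The main obstacle is bookkeeping orders in this last step. The $O(\mu)$ corrections to the multiplier must be tracked precisely when $\alpha_2=0$, since the leading $\mu$ cancels against $-\beta_{\rm cl}u_\star^2$. Justifying the slaving of $m$ with the stated remainders also requires a brief center-manifold argument, using $C^5$ smoothness of $\phi$ and the gap $|\alpha_1(1+\eta)|<1$.
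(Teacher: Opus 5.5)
Your proposal is correct and follows essentially the same route as the paper. Both arguments reduce to $x=us+m\mathbf 1$ via Lemma~\ref{lem:scalar_cubic_expansion}, read off the $\mathbf 1$ and $s$ coefficients, slave $m$ through the stable resolvent $1-J(0)$, and analyse intra-cluster perturbations in $\{\mathbf 1,s\}^\perp$, where multiplication by $s$ splits the multipliers into $\pm2\alpha_2u_\star$. The only difference is cosmetic: you fold the slaved-mean term $2\alpha_2(1+\eta)m_\star$ into the $O(\mu)$ remainder, which the paper keeps explicit; this is legitimate because that term is proportional to $\alpha_2^2$ and so vanishes exactly in the $\alpha_2=0$ case where you resolve the multiplier as $1+2\alpha_3\mu/\beta_{\rm cl}$.
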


Without PE,
\begin{equation}
\Att_0=\frac1N\mathbf1\mathbf1^\top,
\qquad
L=I+\frac\eta N\mathbf1\mathbf1^\top.
\end{equation}
Thus $Lx=x$ on $\mathbf1^\perp$ and $L\mathbf1=L_0\mathbf1$ with $L_0=1+\eta$. The heterogeneous multiplier is $J(q\neq0)=\alpha_1=1+\mu$, whereas the homogeneous multiplier $J(0)=\alpha_1L_0$ remains strictly inside the unit circle. The critical sector is degenerate, so we reduce on the balanced two-cluster direction and then check perturbations transverse to it.

\begin{proof}
\emph{Generated sector and slaving.}
Let $s_i\in\{\pm1\}$ with $\sum_i s_i=0$. Since $s^{\circ2}=\mathbf1$, the quadratic FFN term generated by the critical direction $us$ has no heterogeneous component: it drives only the homogeneous sector. The minimal closure is therefore
\begin{equation}
x=us+m\mathbf1,\qquad m=O(u^2).
\label{eq:cluster_minimal_ansatz}
\end{equation}
For $x=us$, balanced clusters give the exact attention weights
\begin{equation}
A_{ij}(us)=\frac1N\bigl(1+\tanh(\chi u^2)s_is_j\bigr),
\end{equation}
so $A(us)(us)=\chi u^3s+O(u^5)$. Since $m=O(u^2)$, its effect on the attention weights enters only beyond cubic order.

Before separating the two sectors, write the whole cubic update on the
minimal subspace. Because
\begin{equation}
L(us+m\mathbf1)=us+L_0m\mathbf1,
\end{equation}
we have, through cubic order,
\begin{align}
(Lx)^{\circ2}
&=u^2\mathbf1+2L_0mu\,s+O(u^4),\\
(Lx)^{\circ3}
&=u^3s+O(u^4),\\
\mathcal C(x)
&=u^3s+O(u^4).
\end{align}
Hence Lemma~\ref{lem:scalar_cubic_expansion} gives the unprojected expression
\begin{align}
F(us+m\mathbf1)
&=\Big[J(0)m+\alpha_2u^2\Big]\mathbf1\nonumber\\
&\quad+\Big[\alpha_1u+2\alpha_2L_0mu
 +(\alpha_3+\alpha_1\eta\chi)u^3\Big]s
 +O(\mu u^3,u^4).
\label{eq:cluster_unprojected_cubic}
\end{align}
The two displayed directions are linearly distinct: the first is the stable
homogeneous sector and the second lies in the critical heterogeneous sector.
Reading the coefficient of $\mathbf1$ therefore gives
\begin{equation}
m^+=J(0)m+\alpha_2u^2+O(\mu u^2,u^4).
\end{equation}
Because $|J(0)|<1$ while $u$ changes only by $O(\mu u,u^3)$, the mean follows the slowly varying cluster amplitude:
\begin{equation}
m=\frac{\alpha_2}{1-J(0)}u^2+O(u^4)
  =-\frac{\alpha_2}{\eta}u^2+O(\mu u^2,u^4),
\label{eq:cluster_slaved_mean}
\end{equation}
where the second equality evaluates the stable resolvent at onset.

\smallskip\noindent\emph{Projection onto the cluster amplitude.}
We now read the coefficient of $s$ in Eq.~\eqref{eq:cluster_unprojected_cubic}.
The three cubic contributions are visible before the projection: content
attention, the local cubic FFN, and the quadratic FFN evaluated on the slaved
mean. Using $\alpha_1=1+\mu$ and absorbing the $O(\mu u^3)$ correction gives
\begin{align}
u^+-u
&=\mu u+
\left[
\underbrace{\eta\chi}_{\text{content attention}}
+\underbrace{\alpha_3}_{\text{cubic FFN}}
+\underbrace{2\alpha_2(1+\eta)\frac{m}{u^2}}_{\text{slaved mean}}
\right]u^3
+O(\mu u^3,u^5)\nonumber\\
&=\mu u+
\left(\eta\chi+\alpha_3-\frac{2(1+\eta)}{\eta}\alpha_2^2\right)u^3
+O(\mu u^3,u^5)=\mu u-\beta_{\rm cl}u^3+O(\mu u^3,u^5),
\end{align}
which is Eq.~\ref{eq:cluster_amplitude_full}. For $\mu>0$ and $\beta_{\rm cl}>0$,
$u_\star^2=\mu/\beta_{\rm cl}+O(\mu^2)$, and the multiplier along the cluster line is $1-2\mu+O(\mu^2)$.

\smallskip\noindent\emph{Transverse stability.}
Because the entire zero-mean space is critical without PE, retain all of $\mathbf1^\perp$. For a perturbation $\delta\in\{\mathbf1,s\}^\perp$ at $x_\star=u_\star s+m_\star\mathbf1$, Eq.~\eqref{eq:app_C_def} gives $D\mathcal C(x_\star)\delta=u_\star^2\delta$, and $L\delta=\delta$. Hence
\begin{equation}
DF(x_\star)\delta
=\alpha_1\delta+2\alpha_2u_\star\,s\circ\delta
 +2\alpha_2L_0m_\star\delta
 +3\alpha_3u_\star^2\delta+\alpha_1\eta\chi u_\star^2\delta
 +O(u_\star^3)\delta.
\end{equation}
The involution $\delta\mapsto s\circ\delta$ has eigenvalues $\pm1$ on this transverse space, so
\begin{equation}
\Lambda_\perp^{\pm}=1+\mu\pm2\alpha_2u_\star
 +(2\alpha_3-\beta_{\rm cl})u_\star^2+O(u_\star^3).
\end{equation}
If $\alpha_2\neq0$, one multiplier exceeds one because $u_\star=O(\mu^{1/2})\gg\mu$. If $\alpha_2=0$,
\begin{equation}
\Lambda_\perp
=1+\frac{2\alpha_3}{\beta_{\rm cl}}\mu+O(\mu^2),
\end{equation}
which is below one for $\beta_{\rm cl}>0$ exactly when $\alpha_3<0$; $\alpha_3=0$ is marginal at cubic order. This proves the theorem.
\end{proof}

\section{Standing waves}
\label{app:standing}
The standing-wave case is the cleanest illustration of the general mechanism: one critical Fourier pair quadratically generates a mean and a second harmonic, both of which are linearly stable and therefore slaved.

\begin{proposition}[Standing waves]
\label{prop:standing_amplitude}
Assume a reflection-symmetric scalar positional kernel and a simple non-resonant critical pair $\pm q_c$ with
\begin{equation}
J(q_c)=1+\mu,
\qquad
|J(q)|<1\quad(q\notin\{\pm q_c\})
\end{equation}
near $\mu=0$. Then the critical Fourier coefficient obeys
\begin{equation}
B^+-B=\mu B-\beta_{\rm st}|B|^2B+O(\mu|B|^3,|B|^5),
\label{eq:standing_landau}
\end{equation}
where, with $\lambda_c:=\lambda_{q_c}$, $\lambda_2:=\lambda_{2q_c}$ and $L_c:=L_{q_c}$,
\begin{align}
-\beta_{\rm st}
&=\alpha_1\eta\chi(2+\lambda_2-3\lambda_c^2)+3\alpha_3L_c^3\nonumber\\
&\quad+2\alpha_2^2L_c^3
\left[\frac{2L_0}{1-J(0)}+\frac{L_{2q_c}}{1-J(2q_c)}\right],
\label{eq:standing_beta_general}
\end{align}
all quantities on the right being evaluated at $\mu=0$. If $\mu>0$ and $\beta_{\rm st}>0$, then $|B_\star|^2=\mu/\beta_{\rm st}+O(\mu^2)$; the phase is stationary.
\end{proposition}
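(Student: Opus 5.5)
The plan is to follow the same route as the cluster proof (Theorem~\ref{thm:cluster_full}): start from the cubic expansion of Lemma~\ref{lem:scalar_cubic_expansion}, use the minimal closure Eq.~\eqref{eq:minimal_one_mode_ansatz} justified in App.~\ref{app:center_manifold}, slave the stable mean and second harmonic to $B$, and then project the cubic update onto the critical sector $e_{q_c}$. Reflection symmetry of the kernel makes every $\lambda_q$ real, with $\lambda_q=\lambda_{-q}$ and $\lambda_0=1$. Hence $L_q$ and $J(q)$ are real, and all coefficients below are real.

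\emph{Step 1 (linear part on the ansatz).} Write $x=Be_{q_c}+\overline B e_{-q_c}+m+Ze_{2q_c}+\overline Z e_{-2q_c}$ with $m,Z=O(|B|^2)$. Since $L$ is diagonal in Fourier space,
\[
Lx=L_cBe_{q_c}+L_0m+L_{2q_c}Ze_{2q_c}+\text{c.c.}
\]

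\emph{Step 2 (quadratic generation and slaving).} At $O(|B|^2)$ the term $\alpha_2(Lx)^{\circ2}$ contributes two pieces:
\begin{itemize}
\item $2\alpha_2L_c^2|B|^2$ to the mean,
\item $\alpha_2L_c^2B^2$ to the $2q_c$ sector.
\end{itemize}
This gives $m^+=J(0)m+2\alpha_2L_c^2|B|^2$ and $Z^+=J(2q_c)Z+\alpha_2L_c^2B^2$. By the non-resonance assumption $|J(0)|,|J(2q_c)|<1$, while $B$ varies only by $O(\mu B,|B|^3)$. The slaved values are therefore
\[
m=\frac{2\alpha_2L_c^2|B|^2}{1-J(0)},\qquad Z=\frac{\alpha_2L_c^2B^2}{1-J(2q_c)},
\]
up to $O(\mu|B|^2,|B|^4)$.

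\emph{Step 3 (projection onto $q_c$).} Three cubic sources feed the critical sector.
\begin{itemize}
\item The quadratic FFN acting on mode--mean and mode--harmonic cross terms gives
\[
\Pi_{q_c}\alpha_2(Lx)^{\circ2}=2\alpha_2\big(L_cL_0mB+L_cL_{2q_c}Z\overline B\big).
\]
Inserting the slaved $m$ and $Z$ yields $2\alpha_2^2L_c^3\big[\tfrac{2L_0}{1-J(0)}+\tfrac{L_{2q_c}}{1-J(2q_c)}\big]|B|^2B$.
\item The cubic FFN gives $\Pi_{q_c}\alpha_3(L_cBe_{q_c}+\text{c.c.})^{\circ3}=3\alpha_3L_c^3|B|^2B$.
\item For content attention, write $x_c=Be_{q_c}+\text{c.c.}$. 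Then
\[
\Att_0x_c^{\circ2}=\lambda_2B^2e_{2q_c}+2|B|^2+\text{c.c.},\qquad (\Att_0x_c)^{\circ2}=\lambda_c^2\big(B^2e_{2q_c}+2|B|^2\big)+\text{c.c.}
\]
Multiplying the difference by $x_c$ and projecting gives $\big[2(1-\lambda_c^2)+(\lambda_2-\lambda_c^2)\big]|B|^2B=(2+\lambda_2-3\lambda_c^2)|B|^2B$, with prefactor $\alpha_1\eta\chi$.
\end{itemize}
The contributions of $m$ and $Z$ inside $\mathcal C$ and inside the cubic FFN are of order $|B|^4$ and are dropped. Non-resonance ($4q_c\not\equiv0$) ensures that the third-harmonic outputs and the mean and $2q_c$ outputs do not alias into $\pm q_c$.

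\emph{Step 4 (Landau equation).} Summing the three sources and using $J(q_c)=1+\mu$ gives $B^+=(1+\mu)B-\beta_{\rm st}|B|^2B$, with $\beta_{\rm st}$ as in Eq.~\eqref{eq:standing_beta_general}. Evaluating the coefficients at $\mu=0$ costs only $O(\mu|B|^3)$. The fixed point is $|B_\star|^2=\mu/\beta_{\rm st}$. The phase is stationary because every coefficient is real, so $\arg B$ is preserved.

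The main obstacle is rigor in the slaving step: replacing $m,Z$ by their quasi-static values, and discarding the $3q_c$ harmonic and the remaining stable modes, with errors that are genuinely $O(\mu|B|^3,|B|^5)$. I would handle this with a discrete-time center-manifold argument. The stable sectors have multipliers bounded strictly inside the unit disk. This gives a $C^3$ invariant graph $m=h_0(B,\overline B)$, $Z=h_2(B,\overline B)$, whose quadratic Taylor coefficients are fixed by the invariance equation exactly as computed in Step 2.
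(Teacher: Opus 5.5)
Your proposal is correct and follows essentially the same route as the paper: the quadratic truncation with mean $m$ and second harmonic $Z$, slaving both through the stable resolvents $1/(1-J(0))$ and $1/(1-J(2q_c))$ at onset, and projecting the content-attention, cubic-FFN and quadratic-feedback terms onto $e_{q_c}$, which gives exactly the stated $\beta_{\rm st}$ with real coefficients and hence a stationary phase. Your closing center-manifold remark is a somewhat more explicit justification of the step the paper describes informally as truncating the invariant graph at quadratic order.
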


\begin{proof}
Reflection symmetry makes all $\lambda_q$ and $J(q)$ real. Let $L_2:=L_{2q_c}$.

\smallskip\noindent\emph{Generated sectors and slaving.}
The quadratic product of the critical pair generates only the mean and second harmonic. We therefore truncate the invariant graph at quadratic order,
\begin{equation}
x^{[2]}_j=B\euler^{\im q_cj}+\overline B\euler^{-\im q_cj}
    +m+Z\euler^{2\im q_cj}+\overline Z\euler^{-2\im q_cj}.
\end{equation}
The omitted cubic stable correction contains, in particular, the generated
third harmonic, but it cannot feed back into $q_c$ at cubic order. Here all $L_q$ and $\lambda_q$ are real. Substituting this truncation into
Lemma~\ref{lem:scalar_cubic_expansion} and collecting equal Fourier sectors gives
the complete cubic update before any sector is discarded:
\begin{align}
F(x^{[2]})
&=\Big[J(0)m+2\alpha_2L_c^2|B|^2\Big]e_0\nonumber\\
&\quad+\Big\{\Big[J(q_c)B
 +2\alpha_2L_c\big(L_0mB+L_2Z\overline B\big)
 +3\alpha_3L_c^3|B|^2B
 +\alpha_1\eta\chi(2+\lambda_2-3\lambda_c^2)|B|^2B\Big]e_{q_c}
 +\mathrm{c.c.}\Big\}\nonumber\\
&\quad+\Big\{\big[J(2q_c)Z+\alpha_2L_c^2B^2\big]e_{2q_c}
 +\mathrm{c.c.}\Big\}\nonumber\\
&\quad+\Big\{\Big[2\alpha_2L_cL_2BZ
 +\big(\alpha_3L_c^3+\alpha_1\eta\chi(\lambda_2-\lambda_c^2)\big)B^3\Big]e_{3q_c}
 +\mathrm{c.c.}\Big\}
 +O(|B|^4).
\label{eq:standing_unprojected_cubic}
\end{align}
This makes the bookkeeping visible. The $0$ and $2q_c$ sectors are forced at
quadratic order and are stable; $q_c$ receives the cubic feedback that controls
saturation; and $3q_c$ is generated at cubic order but cannot return to $q_c$
at the same order.

Reading the $0$ and $2q_c$ lines of Eq.~\eqref{eq:standing_unprojected_cubic},
\begin{equation}
m^+=J(0)m+2\alpha_2L_c^2|B|^2+O(|B|^4),
\qquad
Z^+=J(2q_c)Z+\alpha_2L_c^2B^2+O(|B|^4).
\end{equation}
At stationary onset $J(q_c)=1$, so
\begin{equation}
m=\frac{2\alpha_2L_c^2}{1-J(0)}|B|^2+O(|B|^4),
\qquad
Z=\frac{\alpha_2L_c^2}{1-J(2q_c)}B^2+O(|B|^4).
\label{eq:standing_slaved_modes}
\end{equation}
The two denominators are exactly the stable linear resolvents of the generated sectors.

\smallskip\noindent\emph{Projection onto the critical mode.}
We now read the $q_c$ line of Eq.~\eqref{eq:standing_unprojected_cubic}. In
projection notation, its three nonlinear pieces are
\begin{align}
\Pi_{q_c}\mathcal C(x_c)
&=(2+\lambda_2-3\lambda_c^2)|B|^2B,\\
\Pi_{q_c}(Lx_c)^{\circ3}
&=3L_c^3|B|^2B,\\
\Pi_{q_c}\!\bigl[(Lx)^{\circ2}-(Lx_c)^{\circ2}\bigr]
&=2L_c\bigl(L_0mB+L_2Z\overline B\bigr)+O(|B|^4).
\end{align}
Thus the projections are simply a compact way of reading terms that are
already explicit in Eq.~\eqref{eq:standing_unprojected_cubic}. Substituting
Eq.~\eqref{eq:standing_slaved_modes} into the last line and weighting the three
pieces by the coefficients in Lemma~\ref{lem:scalar_cubic_expansion} gives
\begin{align}
-\beta_{\rm st}
&=\underbrace{\alpha_1\eta\chi(2+\lambda_2-3\lambda_c^2)}_{\text{content attention}}
 +\underbrace{3\alpha_3L_c^3}_{\text{cubic FFN}}\nonumber\\
&\quad+
\underbrace{2\alpha_2^2L_c^3
\left[\frac{2L_0}{1-J(0)}+\frac{L_2}{1-J(2q_c)}\right]}_{\text{slaved mean and second harmonic}},
\end{align}
which is Eq.~\eqref{eq:standing_beta_general}. Hence Eq.~\eqref{eq:standing_landau} follows. All coefficients are real, so the phase is stationary. On the nonzero branch the radial multiplier is $1-2\mu+O(\mu^2)$, while the slaved sectors retain multipliers $J(0)+O(\mu)$ and $J(2q_c)+O(\mu)$ strictly inside the unit circle.
\end{proof}

\section{Traveling waves: proof of Theorem~\ref{prop:travelling_amplitude}}
\label{app:travelling}
We first state the full version of Theorem~\ref{prop:travelling_amplitude}.
\begin{theorem}[Traveling waves, full version of Theorem~\ref{prop:travelling_amplitude}]
\label{thm:travelling_full}
Assume a simple non-resonant critical pair $\pm q_c$ for a directional kernel. Write its scalar dispersion gain as
\begin{equation}
    J(q_c)=(1+\mu)\euler^{\im\omega_c},
\end{equation}
with $\mu=0$ at onset and $\omega_c\neq0$, and no strong resonance: $\euler^{\im k\omega_c}\neq1$ for $k=1,\dots,4$. If $B_\ell$ is the critical Fourier coefficient and $B_\ell=\euler^{\im\ell\omega_c}G_\ell$, then
\begin{equation}
G^+-G=\mu G-\beta_{\rm tr}|G|^2G
+O(\mu|G|^3,|G|^5),
\label{eq:travelling_landau}
\end{equation}
where the explicit coefficient $\beta_{\rm tr}=-\euler^{-\im\omega_c}\beta^{\rm tr}_{q_c}$ is derived below, with $\beta^{\rm tr}_{q_c}$ in Eq.~\ref{eq:travelling_beta_general}. The second harmonic is bound in a frame rotating at $2\omega_c$. If $\mu>0$ and $\Re(\beta_{\rm tr})>0$, the amplitude saturates at
\begin{equation}
R_\star^2=\frac{\mu}{\Re(\beta_{\rm tr})}+O(\mu^2),\qquad
\Omega=\omega_c-\Im(\beta_{\rm tr})R_\star^2+O(R_\star^4).
\label{eq:nonlinear_wave_speed_full}
\end{equation}
Thus the real part of $\beta_{\rm tr}$ controls saturation, while its imaginary part gives the nonlinear correction to the propagation speed.
\end{theorem}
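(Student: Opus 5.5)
The argument mirrors the standing-wave proof of Proposition~\ref{prop:standing_amplitude}, with two changes: the gains are now complex, and we work in a frame co-rotating with the linear phase advance. We start from the cubic expansion of Lemma~\ref{lem:scalar_cubic_expansion} and the minimal closure in Eq.~\eqref{eq:minimal_one_mode_ansatz}, written as
\[
x_j=B\euler^{\im q_cj}+\overline B\euler^{-\im q_cj}+m+Z\euler^{2\im q_cj}+\overline Z\euler^{-2\im q_cj}.
\]
Since the kernel is directional, $\lambda_q$ is complex, and $\lambda_{-q}=\overline{\lambda_q}$ because the parameters are real. Substituting the closure into $F$ and sorting by Fourier sector gives the analogue of Eq.~\eqref{eq:standing_unprojected_cubic}, with one difference in the bookkeeping. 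Products such as $(Lx)^{\circ 2}$ now pair $L_{q_c}$ with $\overline{L_{q_c}}$, so the mean is forced by $2\alpha_2|L_c|^2|B|^2$ and the second harmonic by $\alpha_2L_c^2B^2$. The content term $\Pi_{q_c}\mathcal C$ also acquires complex combinations of $\lambda_c,\overline{\lambda_c},\lambda_2,\lambda_0=1$ in place of $2+\lambda_2-3\lambda_c^2$. Collecting all terms gives a complex coefficient $\beta^{\rm tr}_{q_c}$ in the $q_c$ line,
\[
B^+=J(q_c)B+\beta^{\rm tr}_{q_c}|B|^2B+\dots
\]

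Next comes slaving. The mean satisfies $m^+=J(0)m+2\alpha_2|L_c|^2|B|^2$. Because $|B|^2$ varies slowly, this gives $m=2\alpha_2|L_c|^2|B|^2/(1-J(0))$, with $J(0)$ real. The second harmonic is different: $B^2$ rotates by $\euler^{2\im\omega_c}$ per layer. We therefore write $Z=\euler^{2\im\ell\omega_c}W$, so that
\[
W^+=\euler^{-2\im\omega_c}J(2q_c)W+\euler^{-2\im\omega_c}\alpha_2L_c^2G^2 .
\]
Since $G$ is slow, $W$ is slaved as $W=\alpha_2L_c^2G^2/(\euler^{2\im\omega_c}-J(2q_c))$. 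The resolvent exists because $|J(2q_c)|<1$, and this is the sense in which the second harmonic is bound in the frame rotating at $2\omega_c$. The non-resonance conditions $\euler^{\im k\omega_c}\ne1$ for $k\le4$ are what rule out other cubic terms. In the rotating frame, a term $B^a\overline B^b$ appears with phase factor $\euler^{\im\ell\omega_c(a-b-1)}$. With the strong resonances excluded, normal-form averaging (or a near-identity change of variables) removes every such term except the resonant $|G|^2G$. This is the step most likely to cause trouble: we need the conditions to guarantee that the non-resonant cubic terms such as $\overline G^3$ and $G^3$ can be removed by bounded coordinate changes, with remainder $O(\mu|G|^3,|G|^5)$. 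The approach is the standard Neimark--Sacker normal-form argument; where the conditions hold, the divisors $\euler^{\im k\omega_c}-1$ stay bounded away from zero.

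Once the slaved $m$ and $W$ are substituted, the $q_c$ coefficient becomes $\beta^{\rm tr}_{q_c}$, given by an expression of the shape of Eq.~\eqref{eq:standing_beta_general} with the resolvents $1/(1-J(0))$ and $1/(\euler^{2\im\omega_c}-J(2q_c))$. Multiplying the $q_c$ line by $\euler^{-\im(\ell+1)\omega_c}$ then gives
\[
G^+=(1+\mu)G+\euler^{-\im\omega_c}\beta^{\rm tr}_{q_c}|G|^2G,
\]
which is Eq.~\eqref{eq:travelling_landau} with $\beta_{\rm tr}=-\euler^{-\im\omega_c}\beta^{\rm tr}_{q_c}$. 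Finally, in polar form $G=R\euler^{\im\psi}$ we have to leading order $R^+=R(1+\mu-\Re\beta_{\rm tr}R^2)$ and $\psi^+-\psi=-\Im\beta_{\rm tr}R^2$. The radial fixed point is $R_\star^2=\mu/\Re\beta_{\rm tr}$, with multiplier $1-2\mu$, so it is stable. The total phase advance per layer is $\Omega=\omega_c-\Im(\beta_{\rm tr})R_\star^2$, and the error terms give the stated $O(\mu^2)$ and $O(R_\star^4)$ corrections.
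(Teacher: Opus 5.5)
Your proposal is correct and follows essentially the same route as the paper. Both use the quadratic closure with a mean and a second harmonic, slave $m$ through $1/(1-J(0))$ and $Z$ through the rotating-frame resolvent $1/(\euler^{2\im\omega_c}-J(2q_c))$, project onto the $q_c$ sector, remove the linear rotation via $B_\ell=\euler^{\im\ell\omega_c}G_\ell$, and finish with the polar decomposition. Your one extra ingredient, the normal-form averaging step, is vacuous here: by wave-vector bookkeeping and the spatial non-resonance $4q_c\not\equiv0$, the $q_c$ line only ever contains $B$, $mB$, $Z\overline B$ and $|B|^2B$, so monomials like $G^3$ or $\overline G^3$ never appear in the first place, and the conditions $\euler^{\im k\omega_c}\neq1$ are not what excludes them.
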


For a directional kernel, $\lambda_{-q}=\overline{\lambda_q}$ and $L_{-q}=\overline{L_q}$. At onset let
\begin{equation}
J_c:=J(q_c)=\euler^{\im\omega_c},
\qquad
\lambda_c:=\lambda_{q_c},\quad
\lambda_2:=\lambda_{2q_c},\quad
L_c:=L_{q_c},\quad L_2:=L_{2q_c}.
\end{equation}
The generated sectors are the same as for a standing wave, but the second harmonic now rotates twice as fast as the critical mode.

\begin{proof}
The reduction is identical to the standing-wave calculation except for the linear phase accumulated by the generated harmonics.

\smallskip\noindent\emph{Generated sectors and slaving.}
Use the quadratic truncation
\begin{equation}
x^{[2]}_j=B\euler^{\im q_cj}+\overline B\euler^{-\im q_cj}
    +m+Z\euler^{2\im q_cj}+\overline Z\euler^{-2\im q_cj}.
\end{equation}
As above, the omitted cubic stable correction cannot feed back into $q_c$ at
cubic order. Now $L_{-q_c}=\overline L_c$ and
$L_{-2q_c}=\overline L_2$. The full cubic update of $x^{[2]}$, before
projecting onto any Fourier sector, is
\begin{align}
F(x^{[2]})
&=\Big[J(0)m+2\alpha_2|L_c|^2|B|^2\Big]e_0\nonumber\\
&\quad+\Big\{\Big[J_cB
 +2\alpha_2\big(L_cL_0mB+\overline L_cL_2Z\overline B\big)
 +3\alpha_3|L_c|^2L_c|B|^2B\nonumber\\
&\hspace{7.6em}
 +\alpha_1\eta\chi\big(2(1-|\lambda_c|^2)+\lambda_2-\lambda_c^2\big)|B|^2B\Big]e_{q_c}
 +\mathrm{c.c.}\Big\}\nonumber\\
&\quad+\Big\{\big[J(2q_c)Z+\alpha_2L_c^2B^2\big]e_{2q_c}
 +\mathrm{c.c.}\Big\}\nonumber\\
&\quad+\Big\{\Big[2\alpha_2L_cL_2BZ
 +\big(\alpha_3L_c^3+\alpha_1\eta\chi(\lambda_2-\lambda_c^2)\big)B^3\Big]e_{3q_c}
 +\mathrm{c.c.}\Big\}
 +O(|B|^4).
\label{eq:travelling_unprojected_cubic}
\end{align}
The generated sectors are therefore exactly the same as in the standing case.
What changes is their layer-to-layer phase: the mean is stationary, while the
$2q_c$ coefficient forced by $B^2$ rotates with $J_c^2=e^{2\im\omega_c}$.

Reading the first and third lines of
Eq.~\eqref{eq:travelling_unprojected_cubic} gives
\begin{equation}
m^+=J(0)m+2\alpha_2|L_c|^2|B|^2+O(|B|^4),
\qquad
Z^+=J(2q_c)Z+\alpha_2L_c^2B^2+O(|B|^4).
\end{equation}
Hence
\begin{equation}
m=\frac{2\alpha_2|L_c|^2}{1-J(0)}|B|^2+O(|B|^4),
\qquad
Z=\frac{\alpha_2L_c^2}{\euler^{2\im\omega_c}-J(2q_c)}B^2+O(|B|^4).
\label{eq:travelling_slaved_modes}
\end{equation}
Relative to Eq.~\eqref{eq:standing_slaved_modes}, only the second-harmonic resolvent changes: it is evaluated in the frame rotating at twice the critical phase.

\smallskip\noindent\emph{Projection onto the critical mode.}
We now read the $q_c$ line of Eq.~\eqref{eq:travelling_unprojected_cubic}.
The same three mechanisms contribute, and in projection notation they are
\begin{align}
\Pi_{q_c}\mathcal C(x_c)
&=\left[2(1-|\lambda_c|^2)+\lambda_2-\lambda_c^2\right]|B|^2B,\\
\Pi_{q_c}(Lx_c)^{\circ3}
&=3|L_c|^2L_c|B|^2B,\\
\Pi_{q_c}\!\bigl[(Lx)^{\circ2}-(Lx_c)^{\circ2}\bigr]
&=2\bigl(L_cL_0mB+\overline L_cL_2Z\overline B\bigr)+O(|B|^4).
\end{align}
The complex conjugate on $L_c$ in the second-harmonic feedback is essential:
it is the $(-q_c)+(2q_c)=q_c$ interaction. Using
Eq.~\eqref{eq:travelling_slaved_modes} therefore gives, at onset,
\begin{align}
\beta_{q_c}^{\rm tr}
&=\underbrace{\alpha_1\eta\chi
   \left[2(1-|\lambda_c|^2)+\lambda_2-\lambda_c^2\right]}_{\text{content attention}}
  +\underbrace{3\alpha_3|L_c|^2L_c}_{\text{cubic FFN}}\nonumber\\
&\quad+
\underbrace{2\alpha_2^2L_c|L_c|^2
\left[\frac{2L_0}{1-J(0)}
      +\frac{L_2}{\euler^{2\im\omega_c}-J(2q_c)}\right]}_{\text{slaved mean and second harmonic}}.
\label{eq:travelling_beta_general}
\end{align}
Thus
\begin{equation}
B^+=J(q_c)B+\beta_{q_c}^{\rm tr}|B|^2B+O(\mu|B|^3,|B|^5).
\end{equation}
Writing $J(q_c)=(1+\mu)\euler^{\im\omega_c}$ and $B_\ell=\euler^{\im\ell\omega_c}G_\ell$ removes the linear rotation:
\begin{equation}
G^+-G=\mu G+\beta_{\rm tr}|G|^2G+O(\mu|G|^3,|G|^5),
\qquad
\beta_{\rm tr}:=\euler^{-\im\omega_c}\beta_{q_c}^{\rm tr}.
\end{equation}
Writing $G=R\euler^{\im\theta}$ then gives
\begin{equation}
R^+-R=\mu R-\Re(\beta_{\rm tr})R^3+O(\mu R^3,R^5),
\qquad
\theta^+-\theta=-\Im(\beta_{\rm tr})R^2+O(R^4),
\end{equation}
from which Eq.~\ref{eq:nonlinear_wave_speed_full} follows. For a joint token--feature mode, projection onto the corresponding left eigenvector gives the same co-rotating normal form; a complex right eigenvector produces the feature-plane rotation described by Eq.~\eqref{eq:vector_wave_form}.
\end{proof}

\section{Fixed points away from the origin}
\label{app:offorigin}

The previous analysis assumes $a_\star=0$. The same reduction applies around a nonzero homogeneous fixed point $a_\star\neq0$, as may arise with biases or LayerNorm. The only qualitative difference is that content attention then contributes already at quadratic order.

Write

$$
x=a_\star\mathbf1+u,
\qquad
y_\star=(1+\eta)a_\star,
\qquad
A_0:=A(a_\star\mathbf1),
\qquad
L:=I+\eta A_0,
$$

with

$$
\alpha_k:=\delta_{k1}+\frac{r}{k!}\phi^{(k)}(y_\star).
$$

For each attention row define

$$
\kappa_{2,i}(u):=(A_0u^{\circ2})_i-(A_0u)_i^2,
$$

and

$$
\kappa_{3,i}(u):=(A_0u^{\circ3})_i
-3(A_0u^{\circ2})_i(A_0u)_i
+2(A_0u)_i^3.
$$

\paragraph{Proof sketch.}
For a fixed row $i$,

$$
\chi x_ix_j
=
\underbrace{\chi a_\star(a_\star+u_i)}_{\text{independent of }j}
+\chi(a_\star+u_i)u_j,
$$

so the first term cancels from the row softmax. The perturbed attention row is therefore an exponential tilt of $A_0$. Expanding its mean in cumulants gives

$$
[A(x)x]_i
=
a_\star+(A_0u)_i
+\chi a_\star\kappa_{2,i}(u)
+\chi u_i\kappa_{2,i}(u)
+\frac12(\chi a_\star)^2\kappa_{3,i}(u)
+O(\|u\|^4).
$$

Expanding the FFN around $y_\star$ then yields

$$
u^+
=
\alpha_1Lu+\mathcal N_2(u)+\mathcal N_3(u)+O(\|u\|^4),
$$

with

$$
\mathcal N_2(u)
=
\alpha_2(Lu)^{\circ2}
+\alpha_1\eta\chi a_\star\kappa_2(u),
$$

and

$$
\mathcal N_3(u)
=
\alpha_3(Lu)^{\circ3}
+\alpha_1\eta\chi\,u\circ\kappa_2(u)
+\frac12\alpha_1\eta(\chi a_\star)^2\kappa_3(u)
+2\alpha_2\eta\chi a_\star(Lu)\circ\kappa_2(u).
$$

Hence the center-manifold reduction is unchanged: the linear dispersion relation still determines the critical modes, while nonlinear interactions generate stable harmonics that slave to them. Compared with $a_\star=0$, the new term

$$
\alpha_1\eta\chi a_\star\kappa_2(u)
$$

adds an attention-mediated quadratic forcing of the slaved modes. For example, for a standing critical pair $\pm q_c$, the mean and second-harmonic drives become

$$
p_0=\alpha_2L_c^2+\alpha_1\eta\chi a_\star(1-\lambda_c^2),
\qquad
p_2=\alpha_2L_c^2+\alpha_1\eta\chi a_\star(\lambda_2-\lambda_c^2).
$$

The same slaving and projection steps therefore give

$$
B^+-B=\mu B+\beta_{q_c}^{\rm st}|B|^2B+\cdots
$$

with modified coefficients. Setting $a_\star=0$ recovers the previous analysis.

\section{Proof of Theorem~\ref{prop:mode_competition}}
\label{app:competition}
We first state the full version of Theorem~\ref{prop:mode_competition}.
\begin{theorem}[Two-mode competition, full version of Theorem~\ref{prop:mode_competition}]
\label{thm:competition_full}
In the reflection-symmetric scalar reduction, let two distinct standing modes $q_1,q_2$ be near critical, with all other sectors generated up to cubic order strictly stable and with no resonances $n_1q_1+n_2q_2\equiv0\pmod N$ for $0<|n_1|+|n_2|\le4$. Their amplitudes obey
\begin{align}
B_1^+-B_1&=(\mu_1+\beta_{11}|B_1|^2+\beta_{12}|B_2|^2)B_1+\cdots,\nonumber\\
B_2^+-B_2&=(\mu_2+\beta_{22}|B_2|^2+\beta_{21}|B_1|^2)B_2+\cdots,
\label{eq:two_mode_amplitudes}
\end{align}
where $\mu_i=J(q_i)-1$, and the self-coupling $\beta_{ii}$ and cross-coupling $\beta_{ij}$, negative when saturating and suppressing respectively, are computed below. At simultaneous onset the cross-couplings are symmetric to cubic order, $\beta_{12}=\beta_{21}$, so the moduli $R_i=|B_i|$ obey
\begin{equation}
R_i^+-R_i=-\frac{\partial\mathcal V}{\partial R_i},
\qquad
\mathcal V(R_1,R_2)=-\sum_{i=1,2}\frac{\mu_i}{2}R_i^2-\sum_{i=1,2}\frac{\beta_{ii}}{4}R_i^4-\frac{\beta_{12}}{2}R_1^2R_2^2 .
\label{eq:two_mode_potential_full}
\end{equation}
In the symmetric case $\mu_i=\mu>0$, $\beta_{ii}=-\beta_{\rm self}$ and $\beta_{ij}=-\beta_{\rm cross}$ with $\beta_{\rm self}>0$: if $\beta_{\rm cross}<\beta_{\rm self}$, the mixed state $R_1^2=R_2^2=\mu/(\beta_{\rm self}+\beta_{\rm cross})$ is stable and the two modes coexist; if $\beta_{\rm cross}>\beta_{\rm self}$, the two single-mode states $R_i^2=\mu/\beta_{\rm self}$ are stable and the initial condition selects the winner.
\end{theorem}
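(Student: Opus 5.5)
The plan follows the template of Proposition~\ref{prop:standing_amplitude}, now with two critical pairs. I would start from Lemma~\ref{lem:scalar_cubic_expansion} and take the two-mode quadratic truncation
\[
x_j=\sum_{i=1,2}\big(B_i\euler^{\im q_ij}+\overline B_i\euler^{-\im q_ij}\big)+m+\sum_{k}\big(Z_k\euler^{\im k j}+\mathrm{c.c.}\big),
\]
where $k$ ranges over the quadratically generated wavenumbers $2q_1$, $2q_2$, $q_1+q_2$ and $q_1-q_2$.

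The no-resonance hypothesis for $0<|n_1|+|n_2|\le4$ does three things:
\begin{itemize}
\item it makes all these sectors distinct from $0$ and from $\pm q_i$;
\item it guarantees that the only cubic products landing back in sector $q_1$ are $|B_1|^2B_1$ and $|B_2|^2B_1$, which rules out terms like $\overline B_1^2B_2$ by excluding $3q_1\equiv q_2$, etc.;
\item together with the stability assumption, it ensures the resolvents $1-J(k)$ are nonzero, so each generated amplitude is slaved.
\end{itemize}

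The slaved amplitudes are read off the quadratic FFN term $\alpha_2(Lx)^{\circ2}$, exactly as in Eq.~\eqref{eq:standing_slaved_modes}:
\[
m=\frac{2\alpha_2\big(L_1^2|B_1|^2+L_2^2|B_2|^2\big)}{1-J(0)},\qquad
Z_{2q_i}=\frac{\alpha_2L_i^2B_i^2}{1-J(2q_i)},\qquad
Z_{q_1\pm q_2}=\frac{2\alpha_2L_1L_2B_1B_2^{(\pm)}}{1-J(q_1\pm q_2)}.
\]
Here $L_i:=L_{q_i}$, $B_2^{(+)}=B_2$ and $B_2^{(-)}=\overline B_2$.

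Next I project the cubic update onto $e_{q_1}$. Three contributions appear:
\begin{itemize}
\item \emph{Content attention.} Expand $\mathcal C(x)$ for a two-mode $x$ and collect the $|B_1|^2B_1$ and $|B_2|^2B_1$ coefficients. The self part reproduces $2+\lambda_{2q_1}-3\lambda_{q_1}^2$. The cross part is a combination of the form $\lambda_{q_1+q_2}+\lambda_{q_1-q_2}-2\lambda_{q_1}\lambda_{q_2}$-type terms plus a diagonal contribution.
\item \emph{Cubic FFN.} This gives $3\alpha_3L_1^3$ for the self term and $6\alpha_3L_1^3L_2^2$-type factors for the cross term, since there are twice as many pairings.
\item \emph{Quadratic FFN on slaved modes.} This contributes $2\alpha_2L_1\big(L_0mB_1+L_{2q_1}Z_{2q_1}\overline B_1+L_{q_1+q_2}Z_{q_1+q_2}\overline B_2+L_{q_1-q_2}Z_{q_1-q_2}B_2\big)$.
\end{itemize}
Summing these and evaluating at onset gives $\beta_{11}$ and $\beta_{12}$, and symmetrically $\beta_{22}$ and $\beta_{21}$.

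The step I expect to be the main obstacle is the symmetry claim $\beta_{12}=\beta_{21}$. It is not automatic, because the formulas carry factors like $L_1^3L_2^2$ versus $L_2^3L_1^2$. My plan is to use simultaneous onset, $J(q_i)=\alpha_1L_i=1$, which forces $L_1=L_2=1/\alpha_1$ and $\lambda_{q_1}=\lambda_{q_2}$. Reflection symmetry then gives $\lambda_{q_1-q_2}=\lambda_{q_2-q_1}$, so every mixed term is symmetric under $1\leftrightarrow2$. I would check each of the three contributions term by term under this substitution, in particular the content-attention cross term, whose asymmetric-looking pieces must collapse once $\lambda_{q_1}=\lambda_{q_2}$.

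Since all coefficients are real (reflection symmetry), the phases decouple, and the moduli obey
\[
R_i^+-R_i=\big(\mu_i+\beta_{ii}R_i^2+\beta_{ij}R_j^2\big)R_i .
\]
With $\beta_{12}=\beta_{21}$, this is exactly $-\partial\mathcal V/\partial R_i$ for the stated $\mathcal V$.

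Finally, I classify the fixed points of this planar map in the symmetric case $\mu_i=\mu$, $\beta_{ii}=-\beta_{\rm self}$, $\beta_{ij}=-\beta_{\rm cross}$. Because the map is $R\mapsto R-\nabla\mathcal V$ with $O(\mu)$ step, stability of a fixed point is governed by the Hessian of $\mathcal V$ there.
\begin{itemize}
\item \emph{Mixed state.} At $R_1^2=R_2^2=\mu/(\beta_{\rm self}+\beta_{\rm cross})$ the Hessian has eigenvalues proportional to $2\mu$ along $(1,1)$ and $2\mu(\beta_{\rm self}-\beta_{\rm cross})/(\beta_{\rm self}+\beta_{\rm cross})$ along $(1,-1)$. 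It is therefore a minimum exactly when $\beta_{\rm cross}<\beta_{\rm self}$, and a saddle otherwise.
\item \emph{Single-mode state.} At $R_1^2=\mu/\beta_{\rm self}$, $R_2=0$, the transverse multiplier is $1+\mu(1-\beta_{\rm cross}/\beta_{\rm self})$. This is below one exactly when $\beta_{\rm cross}>\beta_{\rm self}$.
\end{itemize}
The multipliers are $1-O(\mu)$, so for small $\mu$ the discrete map inherits gradient-flow stability.
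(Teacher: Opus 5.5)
Your proposal is correct and follows the paper's proof essentially step by step. It uses the same slaved mean, second and mixed harmonics with resolvents $1-J(k)$, and the same projection onto $q_i$, with the self-coupling inherited from the standing-wave calculation. It gets $\beta_{12}=\beta_{21}$ the same way, from $L_1=L_2$ and the even symbol at simultaneous onset. The final Hessian and transverse-multiplier classification is also the same.

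There are two small slips, neither of which affects the argument:
\begin{itemize}
\item The direct cubic-FFN cross term is $6\alpha_3L_1L_2^2|B_2|^2B_1$, not $L_1^3L_2^2$.
\item The mixed-harmonic feedback into $q_1$ carries the prefactor $L_2$ rather than $L_1$, e.g.\ $2\alpha_2L_2L_{q_1+q_2}Z_{q_1+q_2}\overline B_2$.
\end{itemize}
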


Assume a reflection-symmetric scalar kernel and two distinct standing modes $q_1,q_2$ simultaneously near onset, all other sectors being strictly stable. Write
\begin{equation}
x_j=B_1\euler^{\im q_1j}+\overline B_1\euler^{-\im q_1j}
    +B_2\euler^{\im q_2j}+\overline B_2\euler^{-\im q_2j}
    +O(|B|^2).
\end{equation}
The non-resonance condition of Theorem~\ref{thm:competition_full} ensures that the quadratic products generate only stable sectors
\begin{equation}
0,\qquad \pm2q_1,\qquad \pm2q_2,\qquad \pm(q_1+q_2),\qquad \pm(q_1-q_2),
\end{equation}
and that no other cubic wave-vector identity folds into $\pm q_1$ or $\pm q_2$.

\begin{proof}
\emph{Generated sectors and slaving.}
Define the critical part
\begin{equation}
x_c:=B_1e_{q_1}+\overline B_1e_{-q_1}
    +B_2e_{q_2}+\overline B_2e_{-q_2}.
\end{equation}
Let $L_i:=L_{q_i}$ and write the $O(|B|^2)$ stable correction explicitly as
\begin{align}
x_s
&=m+Z_1e_{2q_1}+\overline Z_1e_{-2q_1}
     +Z_2e_{2q_2}+\overline Z_2e_{-2q_2}\nonumber\\
&\quad+W_+e_{q_1+q_2}+\overline W_+e_{-(q_1+q_2)}
     +W_-e_{q_1-q_2}+\overline W_-e_{-(q_1-q_2)}.
\label{eq:competition_stable_ansatz}
\end{align}
Before projecting anything, the quadratic product of the two critical pairs is
\begin{align}
(Lx_c)^{\circ2}
&=2\big(L_1^2|B_1|^2+L_2^2|B_2|^2\big)e_0\nonumber\\
&\quad+\Big[L_1^2B_1^2e_{2q_1}+L_2^2B_2^2e_{2q_2}
 +2L_1L_2B_1B_2e_{q_1+q_2}
 +2L_1L_2B_1\overline B_2e_{q_1-q_2}
 +\mathrm{c.c.}\Big].
\label{eq:competition_quadratic_expansion}
\end{align}
Thus the list of stable sectors in the theorem is read directly from the
unprojected quadratic term. Through cubic order the full map is
\begin{align}
F(x_c+x_s)
&=\alpha_1L(x_c+x_s)
 +\alpha_2(Lx_c)^{\circ2}
 +2\alpha_2(Lx_c)\circ(Lx_s)\nonumber\\
&\quad+\alpha_3(Lx_c)^{\circ3}
 +\alpha_1\eta\chi\,\mathcal C(x_c)
 +O(|B|^4).
\label{eq:competition_unprojected_cubic}
\end{align}
The two direct cubic terms are supported on
\begin{equation}
\pm q_1,\ \pm q_2,\ \pm3q_1,\ \pm3q_2,
\ \pm(2q_1\pm q_2),\ \pm(q_1\pm2q_2).
\end{equation}
The non-resonance assumption ensures that none of the noncritical entries in
this list folds back onto $\pm q_1$ or $\pm q_2$. Therefore the quadratically
generated modes in Eq.~\eqref{eq:competition_stable_ansatz} are the only
stable modes that can feed back into a critical equation at cubic order.

At simultaneous standing onset $J(q_i)=1$. Reading the coefficients of the
quadratically generated sectors in
Eq.~\eqref{eq:competition_unprojected_cubic}  gives
\begin{align}
m&=\frac{2\alpha_2\left(L_1^2|B_1|^2+L_2^2|B_2|^2\right)}{1-J(0)},\\
Z_i&=\frac{\alpha_2L_i^2}{1-J(2q_i)}B_i^2,\\
W_+&=\frac{2\alpha_2L_1L_2}{1-J(q_1+q_2)}B_1B_2,
\qquad
W_- =\frac{2\alpha_2L_1L_2}{1-J(q_1-q_2)}B_1\overline B_2.
\label{eq:competition_slaved_modes}
\end{align}
The self-interaction of mode $q_i$ is exactly the standing-wave calculation of Proposition~\ref{prop:standing_amplitude}; its cubic coefficient is $\beta_{ii}=-\beta_{\rm st}(q_i)$,  so that $\beta_{\rm self}=\beta_{\rm st}$.

\smallskip\noindent\emph{Projection onto one critical mode.}
Fix $i\neq j$. To make the projection transparent, first display the terms of
each unprojected polynomial in Eq.~\eqref{eq:competition_unprojected_cubic}
that lie in sector $q_i$:
\begin{align}
2(Lx_c)\circ(Lx_s)
&\supset2\Big[
 L_iL_0mB_i+L_iL_{2q_i}Z_i\overline B_i
 +L_jL_{q_i+q_j}W_+\overline B_j
 +L_jL_{q_i-q_j}W_-B_j\Big]e_{q_i},
\label{eq:competition_quadratic_feedback_raw}\\
(Lx_c)^{\circ3}
&\supset\Big[3L_i^3|B_i|^2B_i+6L_iL_j^2|B_j|^2B_i\Big]e_{q_i},
\label{eq:competition_cubic_ffn_raw}\\
\mathcal C(x_c)
&\supset\Big[
 (2+\lambda_{2q_i}-3\lambda_{q_i}^2)|B_i|^2B_i\nonumber\\
&\hspace{4.8em}
 +2\big(1-\lambda_{q_j}^2+\lambda_{q_i+q_j}+\lambda_{q_i-q_j}
 -2\lambda_{q_i}\lambda_{q_j}\big)|B_j|^2B_i
 \Big]e_{q_i}.
\label{eq:competition_attention_raw}
\end{align}
All omitted terms lie in the noncritical cubic sectors listed above. Thus
$\Pi_{q_i}$ simply extracts the displayed coefficients. In particular, the
direct cross pieces are
\begin{align}
\Pi_{q_i}(Lx_c)^{\circ3}
&\supset6L_iL_j^2|B_j|^2B_i,\\
\Pi_{q_i}\mathcal C(x_c)
&\supset2\left[1-\lambda_{q_j}^2+\lambda_{q_i+q_j}+\lambda_{q_i-q_j}
-2\lambda_{q_i}\lambda_{q_j}\right]|B_j|^2B_i.
\end{align}
The mean and mixed harmonics in Eq.~\eqref{eq:competition_slaved_modes} feed back through the quadratic FFN. Their contribution is
\begin{equation}
\beta^{\rm slave}_{ij}
=4\alpha_2^2L_iL_j^2
\left[
\frac{L_0}{1-J(0)}
+\frac{L_{q_i+q_j}}{1-J(q_i+q_j)}
+\frac{L_{q_i-q_j}}{1-J(q_i-q_j)}
\right].
\end{equation}
Therefore the cross coefficient is the sum of direct attention, direct cubic FFN, and slaved feedback:
\begin{align}
\beta_{ij}
&=\underbrace{2\alpha_1\eta\chi
\left[1-\lambda_{q_j}^2+\lambda_{q_i+q_j}+\lambda_{q_i-q_j}
-2\lambda_{q_i}\lambda_{q_j}\right]}_{\text{content attention}}\nonumber\\
&\quad+\underbrace{6\alpha_3L_j^2L_i}_{\text{cubic FFN}}
+\underbrace{4\alpha_2^2L_iL_j^2
\left[
\frac{L_0}{1-J(0)}
+\frac{L_{q_i+q_j}}{1-J(q_i+q_j)}
+\frac{L_{q_i-q_j}}{1-J(q_i-q_j)}
\right]}_{\text{slaved mean and mixed harmonics}}.
\label{eq:cross_coupling}
\end{align}
This gives Eq.~\ref{eq:two_mode_amplitudes}.

\smallskip\noindent\emph{Coexistence versus exclusion.}
At simultaneous onset $J(q_1)=J(q_2)=1$ implies $L_1=L_2$; because the symbol is even, Eq.~\eqref{eq:cross_coupling} is symmetric under $i\leftrightarrow j$, so $\beta_{12}=\beta_{21}$ to cubic order. The radial dynamics is therefore the gradient map of Eq.~\eqref{eq:two_mode_potential}. In the symmetric case $\mu_i=\mu>0$, $\beta_{ii}=-\beta_{\rm self}$ and $\beta_{ij}=-\beta_{\rm cross}$ with $\beta_{\rm self}>0$, the mixed state has $R_1^2=R_2^2=\mu/(\beta_{\rm self}+\beta_{\rm cross})$. Linearizing the radial increment there gives
\begin{equation}
-2\mu,
\qquad
-2\mu\,\frac{\beta_{\rm self}-\beta_{\rm cross}}{\beta_{\rm self}+\beta_{\rm cross}},
\end{equation}
so coexistence is stable for $\beta_{\rm cross}<\beta_{\rm self}$. A single-mode state, say $R_1^2=\mu/\beta_{\rm self}$, has transverse multiplier
\begin{equation}
1+\mu\left(1-\frac{\beta_{\rm cross}}{\beta_{\rm self}}\right)+O(\mu^2),
\end{equation}
so it is stable against invasion by mode $2$ exactly when $\beta_{\rm cross}>\beta_{\rm self}$. By symmetry the same holds for the other single-mode state.
\end{proof}

\section{Effective potentials}
\label{app:potential}

The potential representation used in the main text follows directly from
the cubic amplitude equations.

\paragraph{One mode.}
For a single critical mode, the amplitude equation in the co-rotating
frame has the form
\begin{equation}
    G^+-G
    =
    \mu G-\Gamma |G|^2G+\cdots .
\end{equation}
Writing $G=R\euler^{\im\theta}$ gives, to cubic order,
\begin{equation}
    R^+-R
    =
    \mu R-\Re(\Gamma)R^3+\cdots ,
\end{equation}
while $\Im(\Gamma)$ produces the nonlinear phase drift. Defining
\begin{equation}
    \beta:=\Re(\Gamma),
\end{equation}
the radial dynamics becomes
\begin{equation}
    R^+-R
    =
    -\frac{d\mathcal V}{dR},
\end{equation}
with
\begin{equation}
    \mathcal V(R)
    =
    -\frac{\mu}{2}R^2
    +\frac{\beta}{4}R^4.
\end{equation}
For standing waves and clusters the cubic coefficient is real. For
traveling waves, the potential describes only the radial dynamics, while
the imaginary part of $\Gamma$ controls the phase velocity.

\paragraph{Two competing modes.}
For two non-resonant standing modes, the amplitude equations are
\begin{align}
    R_1^+-R_1
    &=
    \left(
    \mu_1+\beta_{11}R_1^2+\beta_{12}R_2^2
    \right)R_1+\cdots ,
    \\
    R_2^+-R_2
    &=
    \left(
    \mu_2+\beta_{22}R_2^2+\beta_{21}R_1^2
    \right)R_2+\cdots .
\end{align}
At onset, in the reflection-symmetric setting considered here, the two
critical modes have the same linear multiplier and the cross-coupling is
symmetric,
\begin{equation}
    \beta_{12}=\beta_{21}.
\end{equation}
The radial dynamics can therefore be written as
\begin{equation}
    R_i^+-R_i
    =
    -\frac{\partial\mathcal V}{\partial R_i},
\end{equation}
with
\begin{equation}
    \mathcal V(R_1,R_2)
    =
    -\frac{\mu_1}{2}R_1^2
    -\frac{\mu_2}{2}R_2^2
    -\frac{\beta_{11}}{4}R_1^4
    -\frac{\beta_{22}}{4}R_2^4
    -\frac{\beta_{12}}{2}R_1^2R_2^2.
\end{equation}

Now consider the symmetric saturating case
\begin{equation}
    \mu_1=\mu_2=\mu>0,
    \qquad
    \beta_{11}=\beta_{22}=-\beta_{\rm self},
    \qquad
    \beta_{12}=-\beta_{\rm cross},
\end{equation}
with $\beta_{\rm self}>0$. Then
\begin{equation}
    \mathcal V(R_1,R_2)
    =
    -\frac{\mu}{2}(R_1^2+R_2^2)
    +\frac{\beta_{\rm self}}{4}(R_1^4+R_2^4)
    +\frac{\beta_{\rm cross}}{2}R_1^2R_2^2.
\end{equation}
Its stationary points satisfy
\begin{align}
    R_1
    \left(
    -\mu+\beta_{\rm self}R_1^2
    +\beta_{\rm cross}R_2^2
    \right)
    &=0,
    \\
    R_2
    \left(
    -\mu+\beta_{\rm self}R_2^2
    +\beta_{\rm cross}R_1^2
    \right)
    &=0.
\end{align}

Besides the origin, there are two single-mode states,
\begin{equation}
    (R_1^2,R_2^2)
    =
    \left(
    \frac{\mu}{\beta_{\rm self}},0
    \right),
    \qquad
    \left(
    0,\frac{\mu}{\beta_{\rm self}}
    \right),
\end{equation}
and a mixed state
\begin{equation}
    R_1^2=R_2^2
    =
    \frac{\mu}
    {\beta_{\rm self}+\beta_{\rm cross}},
\end{equation}
provided $\beta_{\rm self}+\beta_{\rm cross}>0$.

At either single-mode state, the Hessian of $\mathcal V$ has eigenvalues
\begin{equation}
    2\mu,
    \qquad
    \mu
    \left(
    \frac{\beta_{\rm cross}}
    {\beta_{\rm self}}
    -1
    \right).
\end{equation}
Therefore the single-mode states are minima when
\begin{equation}
    \beta_{\rm cross}>\beta_{\rm self}.
\end{equation}
In this regime, cross-suppression is stronger than self-saturation, so
coexistence is unstable and either mode can suppress the other: the
winner-takes-all regime.

At the mixed state, the Hessian eigenvalues are
\begin{equation}
    2\mu,
    \qquad
    2\mu
    \frac{
    \beta_{\rm self}-\beta_{\rm cross}
    }{
    \beta_{\rm self}+\beta_{\rm cross}
    }.
\end{equation}
Hence the mixed state is a minimum when
\begin{equation}
    \beta_{\rm cross}<\beta_{\rm self},
\end{equation}
corresponding to stable coexistence of the two modes.

The transition occurs at
\begin{equation}
    \beta_{\rm cross}
    =
    \beta_{\rm self},
\end{equation}
where the cubic potential becomes degenerate along
\begin{equation}
    R_1^2+R_2^2
    =
    \frac{\mu}{\beta_{\rm self}}.
\end{equation}

Finally, near threshold these minima coincide with the stable fixed points
of the cubic amplitude map. Indeed, linearizing
\begin{equation}
    R^+-R
    =
    -\nabla\mathcal V(R)
\end{equation}
around a stationary point $R_\star$ gives
\begin{equation}
    \delta R^+
    =
    \left[
    I-D^2\mathcal V(R_\star)
    \right]\delta R.
\end{equation}
Since the Hessian eigenvalues are $O(\mu)$, for sufficiently small
$\mu>0$, a local minimum of $\mathcal V$ has all multipliers inside the
unit circle. Thus the potential picture directly reproduces the
coexistence and winner-takes-all regimes of the two-mode amplitude
equations.
\section{Experiments}
\subsection{Effective-potential figure (Figure~\ref{fig:potential})}
\label{app:fig_potential}
Figure~\ref{fig:potential} plots the effective potentials introduced in Sec.~\ref{sec:amplitude}. For one critical mode, panels (a) and (b) show
\[
\mathcal V(B)=-\frac{\mu}{2}B^2+\frac{\beta}{4}B^4,
\qquad B^+-B=-\mathcal V'(B),
\]
for $\mu=-1$ and $\mu=1$, respectively, with $\beta=-1$ and $\beta=1$ overlaid. The marked extrema illustrate stable and unstable amplitudes below and above the linear threshold. For two critical modes, panels (c) and (d) show schematic surfaces of
\[
\mathcal V(R_1,R_2)=-\sum_{i=1}^2\frac{\mu_i}{2}R_i^2
 +\sum_{i=1}^2\frac{\beta_{\rm self}}{4}R_i^4
 +\frac{\beta_{\rm cross}}{2}R_1^2R_2^2,
\]
with $\mu_1=\mu_2=\beta_{\rm self}=1$. When $\beta_{\rm cross}/\beta_{\rm self}=0.4$, the wells lie between the coordinate axes and both modes coexist; when the ratio is $2.5$, the wells lie on the axes and the dynamics is winner-takes-all. These schematic surfaces use signed amplitudes $(R_1,R_2)$ to display the symmetry of the normal form.

\subsection{Scalar-model phase diagrams (Figure~\ref{fig:phase_structure})}
\label{app:fig3}
Figure~\ref{fig:phase_structure} is generated by iterating the scalar map in
Eq.~\eqref{eq:scalar_map} on a ring of $N=32$ tokens. Except for the cluster series in panel (a),  the feed-forward branch
is $r\tanh(H)$ with $r=-0.1$, so that
\begin{equation}
    \alpha_1=C_\star=0.9,
    \qquad
    \alpha_2=0,
    \qquad
    \alpha_3=-\frac{r}{3}=\frac{1}{30}>0.
\end{equation}
Thus the cubic FFN term is destabilizing, and saturation is supplied by the
state dependence of attention through $\eta\chi$, with $\chi=1.5$. We use
$\eta<0$ and plot its magnitude $|\eta|$. The cluster series has no PE, $\eta=-0.6$ and the cubic branch $\phi(y)=\mu y+\alpha_2y^2-y^3/3$ with $\alpha_2=0.3$ (and $r=1$), so that $\alpha_1=1+\mu$, $\alpha_2=0.3$ and $\alpha_3=-1/3$: the heterogeneous sector is critical as in Theorem~\ref{prop:cluster_amplitude}, the even part makes the slaved mean $m=O(\mu)$, and it is started from a balanced two-cluster seed.

The positional logits have the form
\begin{equation}
    b_r=sK_\Delta(r),
\end{equation}
where $s$ is the peak positional logit, or PE strength.
Panel (a) compares the numerical amplitudes and the square-root and linear growth in $\mu$ prediction obtained in Sec.\ref{sec:amplitude} for various representations: wave amplitude $R$, cluster size $u$, the coexisting modes $R_{1,2}$, the angular speed of the traveling wave $\Omega - w_c $ and finally cluster mean $m$.
Panels (b) and (c)
use symmetric two-lobe kernels
$K_\Delta(r)=\euler^{-(r-\Delta)^2/2}+\euler^{-(r+\Delta)^2/2}$, with $\Delta=4$ in panel (b), and $\eta=-0.6$ in panel (c). In panel (a), each series sweeps $\eta$ at a fixed kernel so as to set $\mu$: the standing wave uses $2K_4$ (critical mode $k_c$ the minimizer of $\lambda$); the traveling wave uses the one-sided skewed lobe $\euler^{-(r-4)^2/(2\zeta_\pm^2)}$, with $\zeta_-=0.7$ for $r\le4$ and $\zeta_+=2$ for $r>4$, at $s=1$, whose Fourier symbol is complex and therefore supports traveling modes; and the coexisting modes $k=4,10$ use $4[(1-w_\star)K_2+w_\star K_{4.5}]$ with lobe widths $0.7$ and $w_\star$ tuned so that $\lambda_4=\lambda_{10}$.
Panel (d) uses the mixture
$(1-w)K_{\Delta=4}+wK_{\Delta=3}$, where the second lobe has width $0.7$ and
$|\eta|=0.6$.

At each grid point, we iterate the map for $5000$ layers. The initial state is
a small random perturbation of amplitude $10^{-3}$. In panel (d), we use two single-mode seeds
\begin{equation}
    x_j^{(0)}=0.05\cos(q_kj)+\text{noise},\qquad k\in\{4,5\}.
\end{equation}
After iteration, we Fourier analyse the state. Here $R$ is the modulus of the
critical Fourier coefficient, $\Omega$ is its phase advance over the final
layer, and $k_{\rm obs}$ is the dominant observed wavenumber. In panel (d),
the competition outcome is determined from the final amplitudes of modes $4$
and $5$ for the two initial seeds; bistability means that each seed retains
its own mode. Gray regions indicate parameters for which the homogeneous state
is linearly stable.

The overlays compare these measurements with the predictions of the amplitude
theory. The linear threshold is
\begin{equation}
    \max_q|\Lambda(q)|=1,
    \qquad
    \Lambda(q)=\alpha_1\bigl(1+\eta\lambda(q)\bigr),
\end{equation}
where $\lambda(q)$ is the Fourier symbol of the positional attention kernel.
For a selected standing mode, the predicted saturated amplitude is
\begin{equation}
    R_\star^2
    =\frac{|\Lambda(q_c)|-1}{\beta^{\rm st}_{q_c}}.
\end{equation}
For the traveling-wave series of panel (a), the predicted phase advance and wavenumber is
\begin{equation}
    \Omega
    =\omega_c-\Im(\beta_{\rm tr})R_\star^2,
    \qquad
    k_c=\frac{N}{2\pi}\arg\max_q|\Lambda(q)|.
\end{equation}
For panel (d), the cubic two-mode theory predicts the single-mode states
\begin{equation}
    R_i^2=-\frac{\mu_i}{\beta_{ii}},
    \qquad
    \mu_j<\frac{\beta_{ji}\mu_i}{\beta_{ii}}
\end{equation}
as the condition for the $i$-mode state to suppress mode $j$.

Near onset, where $|\Lambda(q_c)|<1.05$, the simulated amplitude agrees with
$R_\star$ to within $3\%$ in median error; over the full unstable region, the
median error increases to $14\%$, as expected from a cubic truncation. The observed wavenumber matches the linear prediction at $96\%$ of unstable grid points. The exceptions occur near transition lines and in the resonant sector $k_c = 8$, where $4q_c \equiv 0 \pmod{2\pi}$. Finally, the
predicted competition phase agrees with the simulated outcome at $97\%$ of the
grid points.

\section{Controlled experiments: exploiting structured representations}
\label{app:structured}
In this section we give the full construction and evaluation protocol of the controlled experiments described in Section \ref{sec:controlled_experiments}. The objective is to test consequences of the derived matrix dispersion relation as well as our realized taxonomy. The question we seek to answer is the following: if a task requires a privileged representation, does initializing to amplify the relevant joint token-feature structure provide an optimization advantage? To answer this question, we design three task families, and vary a theoretically identifiable property of the critical mode while holding the architecture, parameter count and maximal linear gain fixed. 

\subsection{Architecture and gain-matched dispersion engineering}

All controlled experiments use the following setup. Periodic sequences of $N=32$ tokens, feature width $d=32$, $H=4$ attention heads and $L=8$ looped interactions of the single weight-tied Transformer block detailed in Section \ref{sec:model}. We use relative positional encodings, the standard content-dependent query-key attention as presented, a residual MLP with $\tanh$ activation, and no normalization. We initialize the feedforward weights so that the point-wise residual Jacobian of the homogeneous state satisfies:
 \begin{equation}
     C_*=\xi I
 \end{equation}
with $\xi = 0.9$. For each engineered "morphology" or structured-representation we calibrate $OV$ matrices and relative positional encodings so that:

\begin{equation}
     \max_q \rho(J(q))=\rho_{target}=1.10
\end{equation}
and 
\begin{equation}
    \rho(J(0))=0.95\rho_{target}
\end{equation}
This ensures both that all selected morphologies have the same gain, and also that the homogeneous mode is not preferred at initialization. 

By engineering this dispersion relation, we can controllably select among three morphologies:
\begin{itemize}
    \item  \textbf{Standing-wave}: we place the critical mode at $q_c \in \{ 3,4,5,7\}$
    \item \textbf{Traveling-wave}: in this case all morphologies share the same $q_c=4$ but the leading multiplier is assigned an additional token phase $\theta \in \{ -\frac{\pi}{2}, -\frac{\pi}{4},\frac{\pi}{4},+\frac{\pi}{2} \}$
    \item \textbf{Feature-rotating}: here we also initialize with $q_c=4$ but engineer feature operator for a critical mode inducing a feature-rotation $\psi \in \{ -\frac{\pi}{2}, -\frac{\pi}{4},\frac{\pi}{4},+\frac{\pi}{2} \}$
\end{itemize}
In addition to these engineered initializations we include the following baselines: a gain-matched random morphology with a
seed-dependent random critical mode, a standard Transformer initialization with learned relative positional bias, and
a standard initialization with sinusoidal absolute positional encoding.

\subsection{Task Families}

Here we describe the 3 task families designed to target a specified morphology. 

\paragraph{T1: periodic infilling (wavelength)}: this task consists in infilling a scalar periodic signal whose dominant mode is $q_c \in \{ 3,4,5,7\}$. The signal is perturbed by random phases and amplitude, together with weak nuisance $q=11$ and $q=13$ components. The signal is quantized into 8 levels, giving 8 class labels. In addition to the signal perturbation, a block of 12 tokens is removed, and an additional 25 \% of the token are randomly masked. The masked block is longer than one period for all carrier frequencies, which prevents the task from being solved by mere copying. 

\paragraph{T2: shift (propagation direction)}: input contains a dominant carrier frequency $q_0=4$ and weak nuisance modes $q=3,5$. These signals are again quantized into 8 classes. The output is the input sequence shifted by $s \in \{-2,-1,+1,+2 \}$, 25\% of the tokens are masked. A solution to the task would be an accumulated phase shift of 
\begin{equation}
    \theta_{task} = -\frac{2 \pi q_0s}{N}
\end{equation}

\paragraph{T3: rotating-wave infilling (feature rotation)}:  at token $j$, the input lies in a 2D feature-plane with angle: $\frac{2 \pi q_0 j}{N}+\phi_0$, with $q_0=4$, and is again quantized into 8 angular states giving 8 labels. We add phase perturbation, mask a block of 12 tokens and randomly mask 25 \%  of the tokens. The target simply shifts the feature vectors by $k$ 8th turn with $k \in \{-2, -1, +1, +2 \}$ which would correspond to a rotation in feature space of:

\begin{equation}
    \psi_{task}=\frac{2\pi k}{8}
\end{equation}

The model must then recreate the carrier across token space while ensuring the correct rotation in feature space. 

\subsection{Training protocol and Metrics}

Optimization uses AdamW, with learning rate $2  \times10^{-3}$, weight decay $10^{-4}$, gradient clipping at norm $1$ and batch size of $64$. Each run lasts $300$ gradient steps. For the training-efficiency analysis we employ training data of size:

 \begin{equation}
     n_{train} \in \{ 16,32,64,128,256,512\}
 \end{equation}

\subsection{Results}
\begin{figure}[t]
\centering
\includegraphics[width=\textwidth]{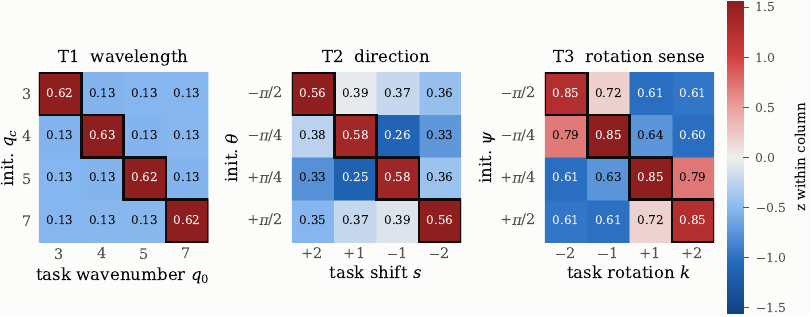}
\caption{Extended version of Fig.~\ref{fig:inductive_prior_results}. Heatmap of AUC of learning curves  for each (task, initialization) pair; color gives the $z$-scores. Higher scores in the diagonal support the dynamical prior induced by mode criticality.}\label{fig:dissociation}
\end{figure}

Figure \ref{fig:dissociation} reports the complete engineered-arm/task grids. The predicted diagonal is visible on all three axes. Wavelength
matching produces an especially sharp dissociation: the matched standing wave arms achieve mean early AUC between
0.619 and 0.626, whereas the frozen strongest mismatched arms are between 0.132 and 0.134. Propagation direction
produces a smaller but consistent phase specificity: matched arms obtain early AUC 0.559–0.582, compared with
0.362–0.391 for the frozen mismatches. Feature-rotation sense is also resolved: matched arms obtain early AUC
approximately 0.853–0.854, with matched-mismatched differences between 0.061 and 0.132 depending on
rotation magnitude.
Across all 12 tasks, every paired matched–mismatched comparison has the same-sign ordering for all 12 final seeds.

\subsection{Learning speed and data efficiency}

Figure \ref{fig:efficiency} gives complementary views of the effect. For T1, the matched morphology improves held-out cross-entropy relative to the gain-matched random and mismatched structured controls across the training-set-size sweep, with the largest differences in the low-data regime. The corresponding learning curves show that most of the separation occurs early, motivating $AUC_{early}$ as the primary statistic. The standard relative-PE baseline is an intentionally less controlled practitioner baseline, it is not gain matched and does not satisfy the fixed, $C_*$ construction.
\begin{figure}[t]
\centering
\includegraphics[width=\textwidth]{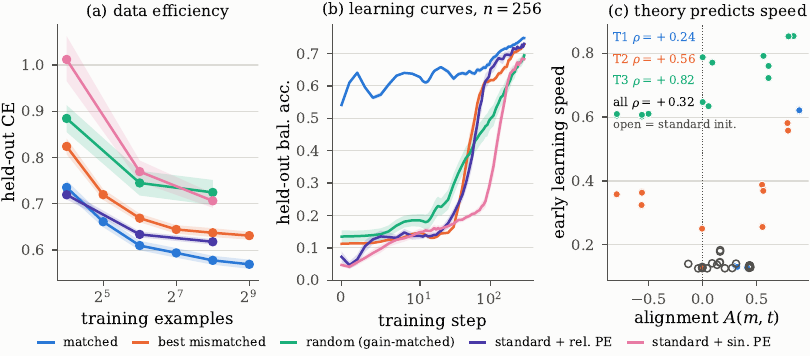}
\caption{Learning speed and data efficiency on T1: held-out cross-entropy versus training-set size (left) and learning curves (right) for the matched, mismatched and gain-matched random initializations.}
\label{fig:efficiency}
\end{figure}

\section{Vision experiments}
\label{app:vision}
We next test whether the same-dispersion relation based design principle applies in a less controlled image-classification setting. The goal is to alter the initialization of the 2-dimensional dispersion landscape of a ConViT-style attention block and test whether favoring heterogeneous spatial modes changes optimization on CIFAR-10. 

\subsection{Weight-tied ConViT architecture}
To stay in line with our theory, we use a weight-tied ConViT, in which one Transformer block is applied repeatedly across depth. A $2 \times 2$ patch embedding maps a $32 \times 32$ image to a $16 \times 16$ token grid ($N=256$). The embedding dimension is $d=216$, we employ $H=9$ heads and have depth $L=12$. Unless otherwise stated, we use a gated-positional self-attention (GPSA) as described in \citep{dascoli2021convit}, and a periodic relative encoding to stay firmly in line with our theory. The architecture has no class token and no absolute positional embedding. Classification uses mean pooling followed by LayerNorm and a linear layer. 

\subsection{Initialization}
We compare seven initializations. 

\begin{itemize}
    \item \textbf{default}: uses standard small random weights and disables structured positional locality. 
    \item \textbf{+I, -I}: these variants use a Mimetic-style \citep{trockman2023mimetic} initialization for $QK$ and initialize the full $OV$ operator, from a random matrix plus a positive or negative identity-biased component. It has the form $M_{OV} =0.4Z \pm 0.4I$
    \item \textbf{+pos}: these variants additionally initialize the GPSA positional logits with a localized ConViT positional encoding. Variants such as $-I+pos$ combine negative mimetic with a localized positional kernel. 
\end{itemize}

These initialization schemes allow us to separate the three ingredients suggested by the theory, namely the interplay of $OV$ geometry and attention positional structure in selecting relevant heterogeneous structures.

\subsection{Two-dimensional dispersion relation}

For a homogeneous embedded token state $x^*$, our architecture allows exact computation of the matrix dispersion relation. For each positional frequency $q=(q_1,q_2)$, we have:

\begin{equation}
    J(\mathbf{q})=C_* \lbrack I+\epsilon_a\sum_{h=1}^H\lambda_h(\mathbf{q})M_h\rbrack
\end{equation}

where $\epsilon_a$ is the learned attention residual scale. In this particular case, and given our initialization this is analytically tractable. Figure~\ref{fig:dispersion_vision} visualizes this $2$d dispersion for various initialization-positional structure combination. As predicted by the theory, without positional encoding the dispersion is nearly flat or dominated by the homogeneous mode. Local positional encoding makes the dispersion relation frequency dependent. We see the interplay with $OV$ geometry, negative-identity biased $OV$ amplifies heterogeneous segments while positive-identity biased $OV$ amplifies the homogeneous mode. We believe this helps rationalize the empirical observations that led to this initialization in the original work \cite{trockman2023mimetic} 

\begin{figure}[t]
\centering
\includegraphics[width=\textwidth]{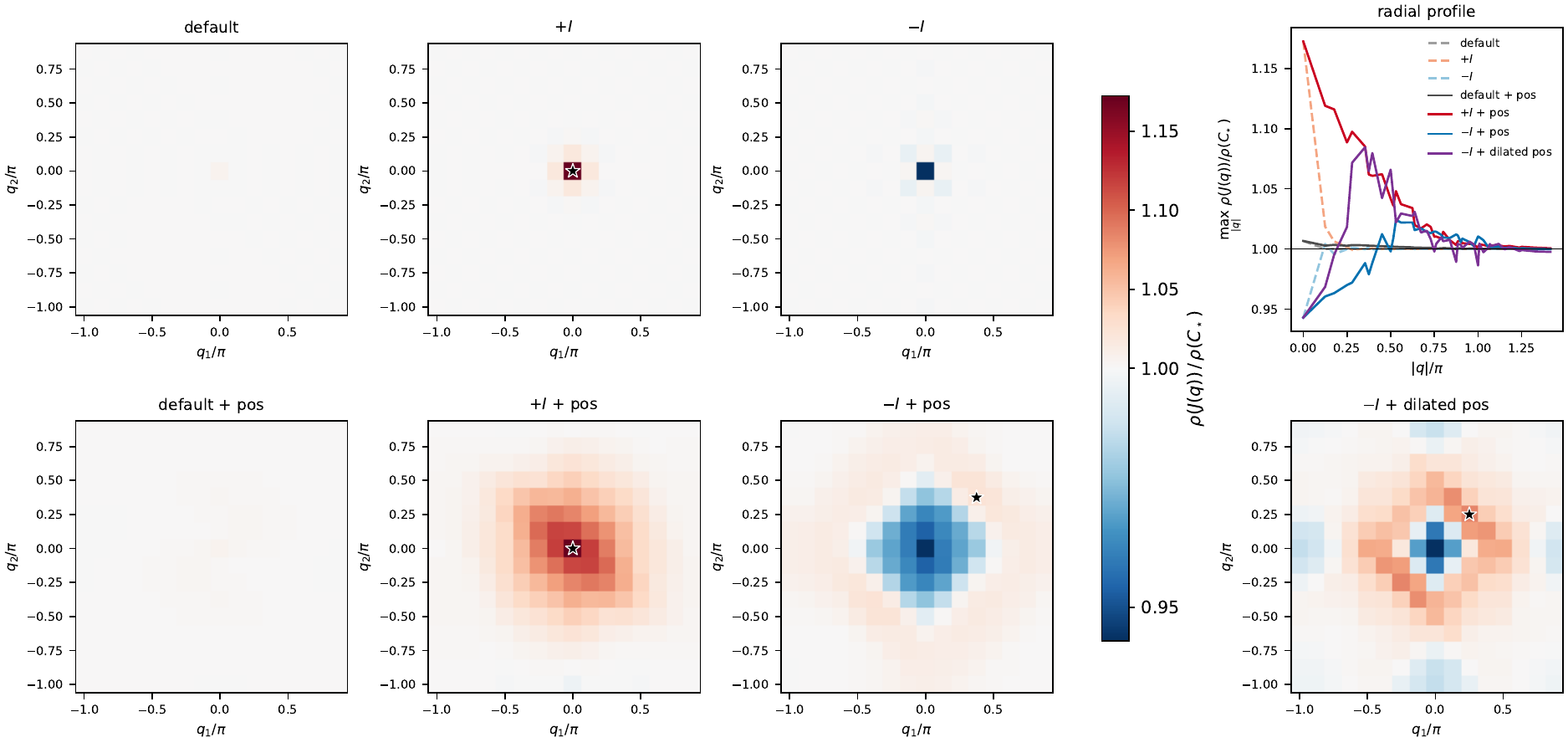}
\caption{Visualizations of the 2D dispersion relation for various combinations of positional structure and $OV$ geometry}
\label{fig:convit_dispersion}
\end{figure}

\subsubsection{Finite-depth validation of the dispersion prediction}
\label{app:vision_growth_validation}

Although the dispersion relation is a one-step local quantity, the tied network applies the same block $L=12$ times. We therefore test whether it predicts finite-depth perturbation growth. For each Fourier sector $\mathbf q$, we perturb the homogeneous state by $10^{-6}$ along the leading eigenvector of $J_0(\mathbf q)$ and measure the perturbation norm after 12 tied steps. We compare it with the exact linear prediction
\begin{equation}
w_L(\mathbf q)=J^{(L-1)}(\mathbf q)\cdots J^{(0)}(\mathbf q)v_0(\mathbf q),
\end{equation}
and the initialization-time proxy $\rho(J^{(0)}(\mathbf q))^L$.

As shown in Fig.~\ref{fig:growth}, the measured growth agrees with the exact linear evolution to below $5\times10^{-9}$ relative error for $+I+$pos, $-I+$pos, and engineered initialization. Moreover, $\rho(J_0(\mathbf q))^L$ closely predicts the relative growth across Fourier sectors, with Pearson correlations $0.9984$, $0.9993$, and $0.9999$, respectively. Thus, the initialization-time dispersion map accurately predicts which spatial modes are preferentially amplified through the full tied depth in the local regime.

\begin{figure}
\centering
\includegraphics[width=\textwidth]{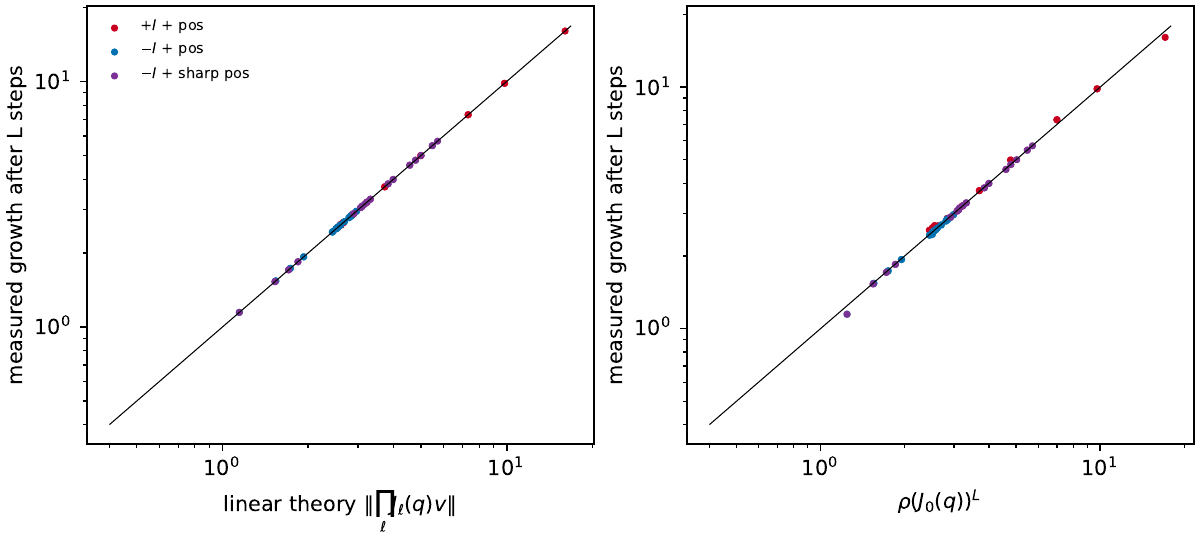}
\caption{Comparison of linear theory growth prediction vs actual deviation from homogeneous state from various positional encoding and $OV$ geometry combinations}
\label{fig:growth}
\end{figure}

\subsection{CIFAR-10 training protocol}

We use CIFAR-10 with a fixed class-balanced $45$k/$5$k train/validation split and the standard $10$k test set. Training uses random resized crops, horizontal flips, RandAugment, color jitter, random erasing, and standard normalization; full-data results are averaged over three seeds.

Models are trained for 100 full-data-equivalent epochs with AdamW (batch size $512$, learning rate $3\times10^{-3}$, weight decay $10^{-2}$), five epochs of warmup followed by cosine decay, and gradient clipping at norm $1$. Biases, positional parameters, GPSA gates, and residual scales are exempt from weight decay. For data-efficiency experiments, we keep the number of optimization steps fixed to the full-data schedule across dataset fractions.

We report final test accuracy, test-accuracy AUC, and the first epoch $t_{90}$ reaching $90\%$ test accuracy; seeds that do not reach the threshold are excluded from $t_{90}$ and the number of successful seeds is reported.

\subsubsection{Optimization results}

Figure~\ref{fig:app_vision_curves} summarizes the full-data results. $OV$ sign alone has little effect without positional structure, whereas combining localized attention with negative $OV$ geometry substantially accelerates optimization. Sharpening the positional kernel further improves learning speed, with a smaller improvement in final accuracy.

\begin{figure}[t]
\centering
\includegraphics[width=\textwidth]{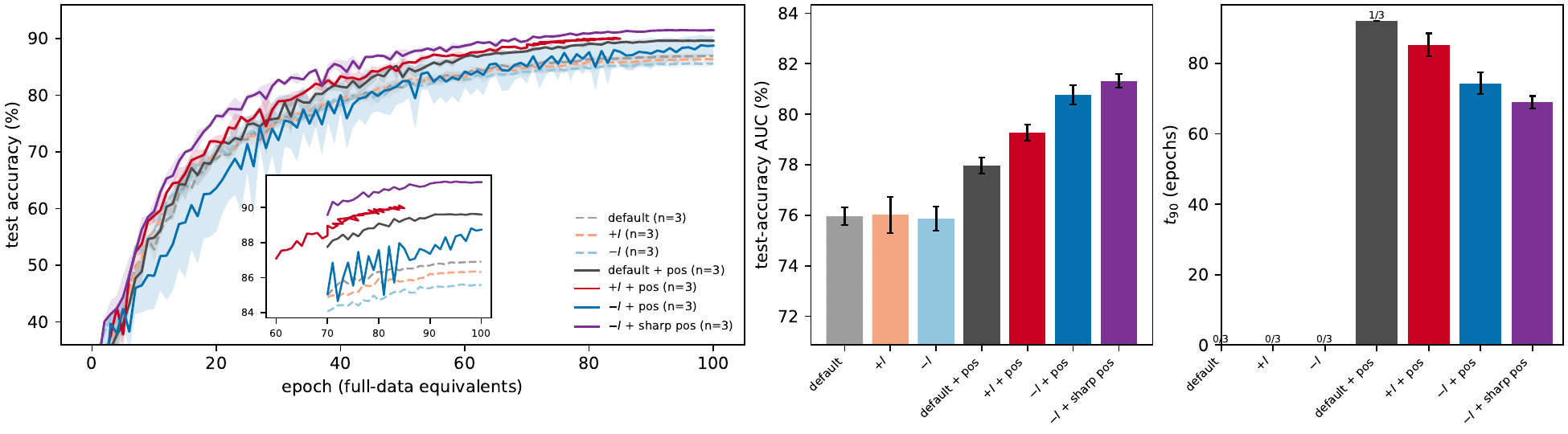}
\caption{\textbf{CIFAR-10 optimization under dispersion-aware initialization.}
Test accuracy (left), test-accuracy AUC (middle), and epoch reaching $90\%$
test accuracy (right), over three seeds. Localized positional attention combined
with negative $OV$ geometry accelerates training, with the engineered
$-I+$ sharp-pos initialization performing best overall.}
\label{fig:app_vision_curves}
\end{figure}

\subsubsection{Data efficiency}
\label{app:vision_data_efficiency}

We repeat the comparison using $5\%$--$100\%$ of the training set while keeping the optimization-step budget fixed. As shown in Fig.~\ref{fig:app_vision_data_efficiency}, the engineered initialization has the highest mean final accuracy and AUC at every evaluated fraction, with the largest gains over the unstructured default in the low- and intermediate-data regimes.

\begin{figure}[t]
\centering
\includegraphics[width=0.82\textwidth]{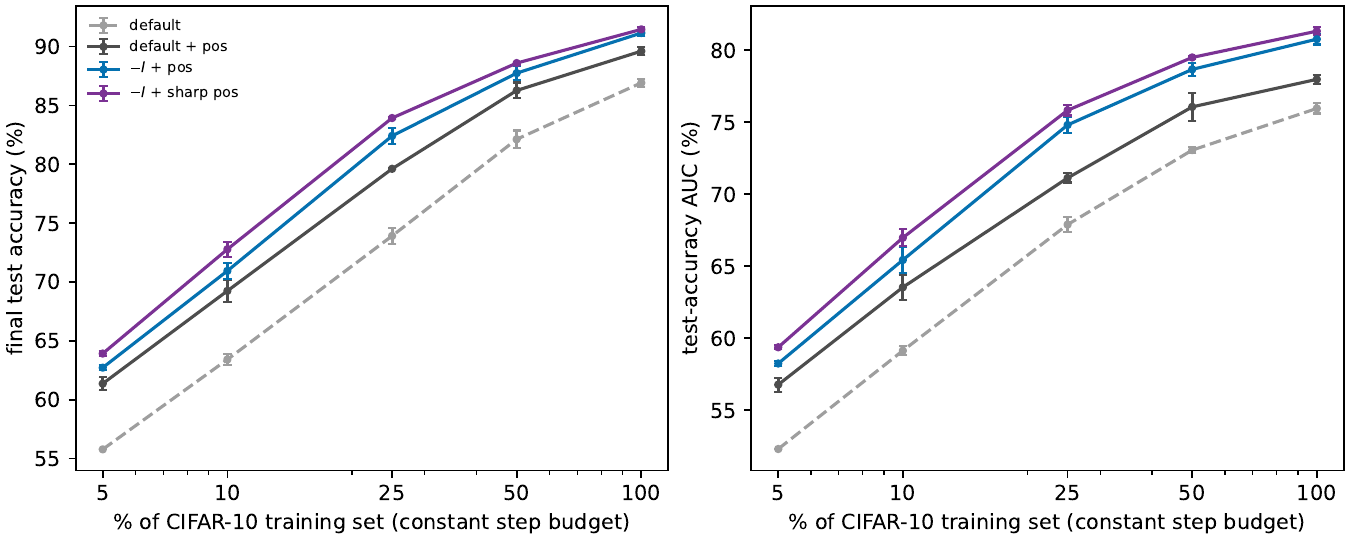}
\caption{\textbf{Data efficiency on CIFAR-10.}
Final test accuracy (left) and test-accuracy AUC (right) for different training-set
fractions. Error bars show one standard deviation across three seeds, except the
$5\%$ default baseline ($n=2$). The engineered initialization remains strongest
across the sweep.}
\label{fig:app_vision_data_efficiency}
\end{figure}

\subsubsection{Training diagnostics and dispersion evolution}

Figure~\ref{fig:app_vision_diagnostics} shows that the positional variants maintain heterogeneous token representations while both the attention residual scale and GPSA positional gate adapt during training. Thus, the engineered initialization biases the initial dynamics without freezing a prescribed attention operator.

\begin{figure}[t]
\centering
\includegraphics[width=\textwidth]{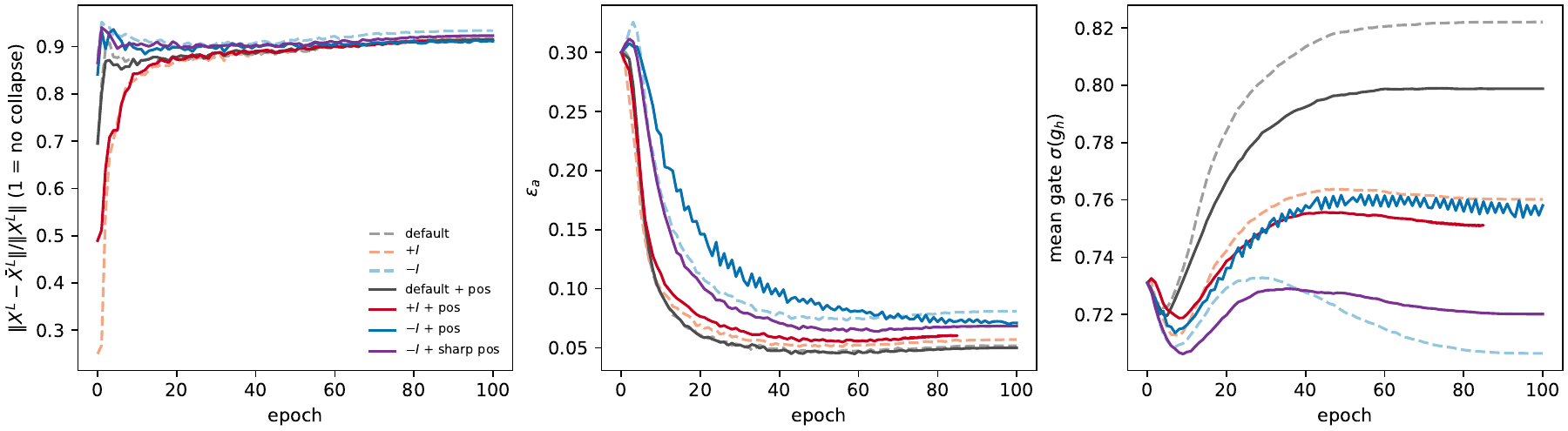}
\caption{\textbf{Training diagnostics for the tied ConViT.}
Final-depth token heterogeneity (left), learned attention residual scale
$\epsilon_a$ (middle), and mean GPSA positional gate (right). Representations
remain non-collapsed while the attention parameters adapt throughout training.}
\label{fig:app_vision_diagnostics}
\end{figure}

We also recompute the two-dimensional dispersion relation from checkpoints
(Fig.~\ref{fig:app_vision_evolution}). Holding negative $OV$ geometry fixed, the
sharper positional kernel produces a stronger initial heterogeneous-mode bias, but
both spectra subsequently reorganize. Dispersion engineering therefore acts as an
initial preference rather than a persistent spectral constraint.

\begin{figure}[t]
\centering
\begin{minipage}[t]{0.49\textwidth}
\centering
\includegraphics[width=\textwidth]{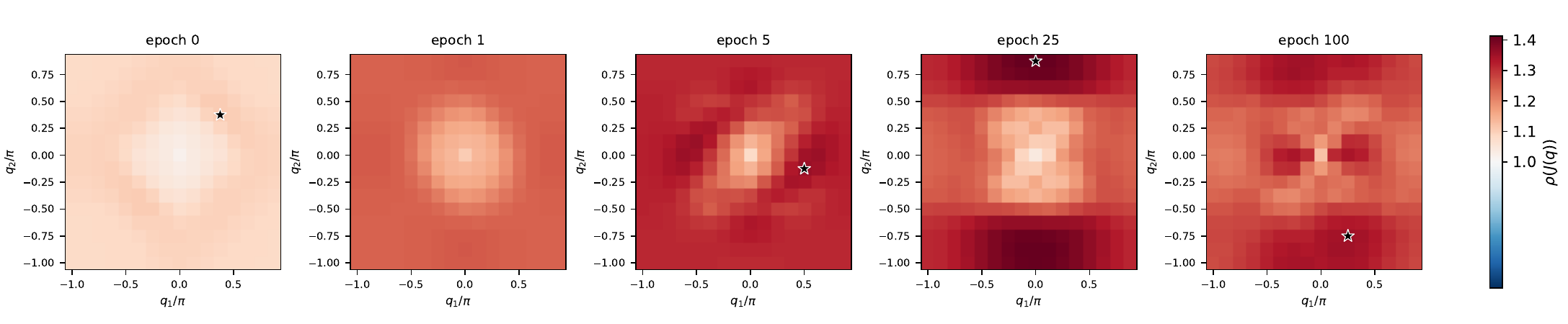}
\vspace{-0.5em}
\centerline{\small $-I+$pos}
\end{minipage}\hfill
\begin{minipage}[t]{0.49\textwidth}
\centering
\includegraphics[width=\textwidth]{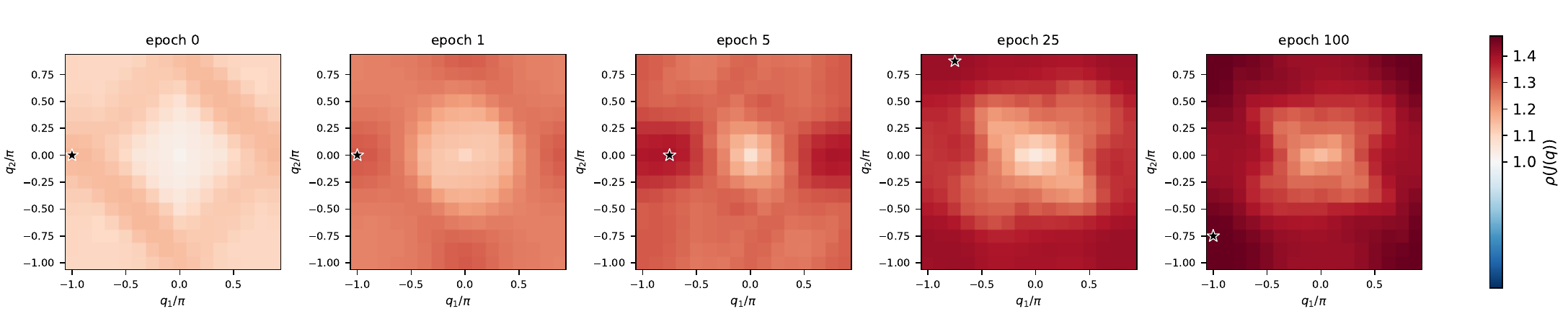}
\vspace{-0.5em}
\centerline{\small engineered $-I+$sharp pos}
\end{minipage}
\caption{\textbf{Evolution of the spatial dispersion relation.}
$\rho(J(\mathbf q))$ at epochs $0,1,5,25,$ and $100$ for one seed; stars denote
the maximally amplified Fourier sector. Both initial spectra evolve substantially
during training.}
\label{fig:app_vision_evolution}
\end{figure}

\subsubsection{Robustness and scope}
\label{app:vision_ablations}

Table~\ref{tab:app_vision_ablations} summarizes two architectural ablations. The
benefit of combining positional locality with negative $OV$ geometry persists with
pre-LN, although the additional gain from positional sharpening does not. In
contrast, injecting the positional bias inside the content softmax (PSA) is markedly
less stable at the same initialization scale, with several runs diverging. Thus, the
effect is robust to normalization but depends on how positional structure enters
attention.

\begin{table}[t]
    \centering
    \small
    \setlength{\tabcolsep}{7pt}
    \renewcommand{\arraystretch}{1.12}
    \begin{tabular}{@{}lccc@{}}
        \toprule
        Initialization
        & Final acc. (\%)
        & AUC (\%)
        & $t_{90}$ \\
        \midrule

        \multicolumn{4}{@{}l}{\textit{GPSA, no normalization}} \\[1pt]
        default + pos
        & $89.6 \pm 0.4$
        & $78.0 \pm 0.3$
        & $92.0\ (1/3)$ \\

        $-I$ + pos
        & $91.1 \pm 0.3$
        & $80.8 \pm 0.4$
        & $74.3 \pm 3.1$ \\

        $-I$ + sharp pos
        & $\mathbf{91.5 \pm 0.2}$
        & $\mathbf{81.3 \pm 0.3}$
        & $\mathbf{69.0 \pm 1.7}$ \\

        \addlinespace[2pt]
        \multicolumn{4}{@{}l}{\textit{GPSA, pre-LN}} \\[1pt]
        default + pos
        & $88.5 \pm 0.4$
        & $77.0 \pm 0.4$
        & -- \\

        $-I$ + pos
        & $\mathbf{89.8 \pm 0.3}$
        & $\mathbf{79.5 \pm 0.4}$
        & $88.0\ (1/3)$ \\

        $-I$ + sharp pos
        & $89.5 \pm 0.2$
        & $78.5 \pm 0.2$
        & -- \\

        \addlinespace[2pt]
        \multicolumn{4}{@{}l}{\textit{PSA, no normalization}} \\[1pt]
        $-I$ + pos
        & $87.3 \pm 0.1\ (2/3)$
        & $75.2 \pm 0.2\ (2/3)$
        & -- \\

        $-I$ + sharp pos
        & $86.8\ (1/3)$
        & $73.4\ (1/3)$
        & -- \\

        \bottomrule
    \end{tabular}

    \caption{\textbf{Normalization and attention-form ablations.}
    Parentheses indicate the number of contributing seeds when fewer than
    three runs are available; for $t_{90}$, they indicate the number of seeds
    reaching $90\%$ accuracy within 100 epochs.}
    \label{tab:app_vision_ablations}
\end{table}

\section{A pattern-formation primer: the Swift--Hohenberg model}
\label{app:swift}
This appendix illustrates, on the canonical pattern-forming Swift--Hohenberg model~\citep{cross2009pattern}, the two objects used in the main text: the dispersion relation and the amplitude equation. As in Eq.~\ref{eq:lie_trotter}, the model couples a spatial propagation operator to local pointwise dynamics:
\begin{equation}
\frac{\partial X}{\partial t}
=
\underbrace{-\left(q_0^2+\Delta\right)^2X}_{\mathcal P(X):\,\text{propagation}}
+
\underbrace{\mu X+\phi(X)}_{\mathcal L(X):\,\text{local dynamics}},
\label{eq:swift_hohenberg}
\end{equation}
where $\phi(0)=0$ and
$\phi(X)=\alpha_0X+\alpha_1X^2+\alpha_2X^3+\mathcal O(X^4)$.
The homogeneous state $X=0$ is reminiscent of the collapsed Transformer state. To probe its stability, consider a perturbation
$U\propto \euler^{\sigma(k)t+\im k\cdot x}$, where $k$ encodes spatial scale and $\sigma(k)$ its growth rate. Linearizing Eq.~\ref{eq:swift_hohenberg} gives the dispersion relation
\begin{equation}
\sigma(k)
=
\mu+\alpha_0-\left(q_0^2-|k|^2\right)^2.
\label{eq:swift_dispersion}
\end{equation}
Its sign determines whether a perturbation grows or decays, while its shape determines which mode is preferentially amplified: here the critical mode is $q_c=q_0$. Thus, the dispersion relation simultaneously characterizes the loss of stability of the homogeneous state and the structure that emerges from it, as illustrated in Fig.~\ref{fig:swift}.

\begin{figure}[t]
\centering
\includegraphics[width=0.9\textwidth]{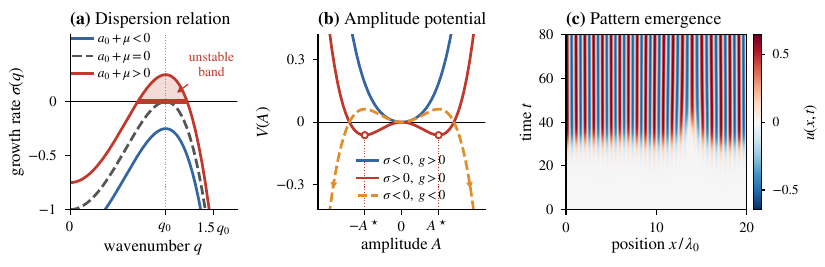}
\caption{The Swift-Hohenberg illustration: the dispersion relation (a) defines the emergent perturbations, the amplitude potential in (b) dictates their stability, leading to the apparition of patterns in (c).}
\label{fig:swift}
\end{figure}

Linear analysis, however, does not determine the fate of the emerging mode. Near onset, the solution is dominated by
$U\approx B(t)\euler^{\im q_cx}$, and nonlinear effects govern whether its amplitude diverges or saturates. Expanding Eq.~\ref{eq:swift_hohenberg} to the third order and projecting onto the critical mode yields reduced amplitude dynamics of the form
\begin{equation}
\frac{dB}{dt}
=
-\frac{\partial \mathcal V}{\partial B},
\qquad
\mathcal V(B)
=
-\frac{\sigma}{2} B^2
+
\frac{g(\alpha_1,\alpha_2)}{4}B^4.
\label{eq:amplitude_potential}
\end{equation}
The dispersion relation controls the stability of the homogeneous state, while the nonlinear coefficient $g(\alpha_1,\alpha_2)$ controls the stabilization of the emerging structure; for $g>0$, the growing mode saturates at finite amplitude. We see in Fig.~\ref{fig:swift}(b) the impact of $g$, which can be viewed as engineering the stability landscape of the emergent pattern. 

This example contains the two ingredients used in the main text: the dispersion relation determines when the homogeneous state loses stability and which structures become accessible, while nonlinearities determine their subsequent fate. Crucially, in this example the propagation operator $\mathcal P(X)$ diagonalizes along Fourier modes, reducing the stability analysis to independent spatial scales. In Transformers, positional encoding and multi-head attention play the analogous role (Theorem~\ref{thm:TMDR}), giving rise to a matrix-valued dispersion relation.

\end{document}